\documentclass[a4paper,fleqn]{cas-sc}
\usepackage[authoryear,longnamesfirst]{natbib}
\usepackage{geometry}
\usepackage{graphicx}
\usepackage{hyperref}
\usepackage{subfigure}
\usepackage{xcolor}
\usepackage{multirow}      % for \multirow
\usepackage{adjustbox}     % for \adjustbox{width=...}
\usepackage{threeparttable} % for \threeparttable and \tablenotes
\usepackage{pgfplots}
\usepackage{subcaption}
\pgfplotsset{compat=1.18}
\usepackage{amsmath}
\usepackage{algorithm} 
\usepackage{algorithmic}
\usepackage{longtable}
\usepackage{booktabs}
\usepackage{array}
\usepackage{colortbl}
\usepackage{amsthm}
\newtheorem{theorem}{Theorem}

\newtheorem*{remark}{Remark}
\theoremstyle{definition}

\newtheorem{proposition}{Proposition}
\usepackage{pgfplots}
\pgfplotsset{compat=1.18}
\graphicspath{{./}}
\def\tsc#1{\csdef{#1}{\textsc{\lowercase{#1}}\xspace}}
\tsc{WGM}
\tsc{QE}
\tsc{EP}
\tsc{PMS}
\tsc{BEC}
\tsc{DE}
\begin{document}
\let\WriteBookmarks\relax
\def\floatpagepagefraction{1}
\def\textpagefraction{.001}
\shorttitle{WaVeFuse}
\shortauthors{A. Bohra et~al.}
%\begin{frontmatter}

\title [mode = title]{WaVeFuse: Regime-Adaptive Equity Index Forecasting via Channel-Wise Wavelet Denoising and Vertical Attention Fusion} 

% =========================================================
% AUTHORS
% =========================================================

\author[1]{Aashish Bohra}[orcid=0009-0004-5871-4310]
\ead{bohra.1@iitj.ac.in}
\author[2]{Vivek Vijay}
\ead{vivek@iitj.ac.in}

% \author[1]{Third Author Full Name}
% \ead{third.author@email.com}

% \author[3]{Fourth Author Full Name}
% \cormark[1]
% \ead{bohra.1@iitj.ac.in}

% =========================================================
% AFFILIATIONS
% =========================================================

\affiliation[1]{
    organization={Department of Computer Science and Engineering, Indian Institute of Technology Jodhpur},
    % addressline={University Address},
    city={Jodhpur},
    % postcode={PIN Code},
    state={Rajasthan},
    country={India}
}

\affiliation[2]{
    organization={Department of Mathematics, Indian Institute of Technology Jodhpur},
    % addressline={University Address},
    city={Jodhpur},
    % postcode={PIN Code},
    state={Rajasthan},
    country={India}
}

% \affiliation[3]{
%     organization={Department/School Name, Third University Name},
%     addressline={University Address},
%     city={City},
%     postcode={PIN Code},
%     state={State},
%     country={India}
% }

% Corresponding author
\cortext[cor1]{Corresponding author}

\begin{abstract}
Hybrid Deep Learning for equity index forecasting is limited by three problems: propagation of OHLCV noise into derived technical indicators (TIs), channel-indiscriminate multi-scale decomposition that conflates heterogeneous frequency signatures, and static multi-branch fusion that cannot adapt to market regime shifts. WaVeFuse addresses these limitations through a unified dual-branch architecture. Symlet-4 wavelet denoising (level 2, MAD soft threshold) suppresses microstructure noise in OHLCV. Seven low-lag TIs computed from denoised prices are encoded by a causal channel-wise continuous wavelet transform (Morlet, 32 scales) into a per-timestep scale-space matrix. A CNN-BiLSTM branch captures temporal dynamics, while a dual-layer Transformer (heads $=$ 4, $d_k \in \{16, 32\}$) models inter-scale spectral dependencies, and their representations are integrated by a 2-token softmax gate Vertical Attention Fusion (VAF) that dynamically reweights branches as market regimes shift. Evaluated under walk-forward validation (WFV) on KOSPI, DAX, NYSE Composite, and Russell 2000 (2010-2023), WaVeFuse achieves $R^2 = 0.81$-$0.96$ and directional accuracy $70.5$-$78.3\%$. It outperforms seven state-of-the-art models by $8.9$-$20.2\%$ MAE across twelve dataset-period configurations. Diebold-Mariano statistics ($4.62$-$10.38$, $p<0.001$) confirm superiority over a well-tuned XGBoost benchmark across four indices. Ablation verifies component-wise contributions. Under realistic backtesting with 10 basis point transaction costs, WaVeFuse's directional strategy achieves a mean Sharpe ratio of $3.69$ across four markets and limits maximum drawdown to $7.5\%$ during the COVID-19 crash. With $152$k parameters ($0.68$~MB) and sub-$1.3$~ms GPU inference, WaVeFuse delivers a computationally efficient, regime-robust framework suitable for research and decision-support deployment.
\end{abstract}

\begin{keywords}
Wavelet Denoising; Symlet-4 DWT; Channel-Wise CWT; Vertical Attention Fusion; Hybrid Deep Learning; Walk-Forward Validation
\end{keywords}

\maketitle

% \section*{List of Abbreviations}
\label{app:abbreviations}
\small
\setlength{\tabcolsep}{4pt}
\renewcommand{\arraystretch}{1.15}
\begin{longtable}{p{1.6cm} p{5.2cm} p{1.6cm} p{5.2cm}}
\caption{List of abbreviations used throughout the paper.}
\label{tab:abbreviations} \\
\toprule
\textbf{Abbrev.} & \textbf{Full Form} & \textbf{Abbrev.} & \textbf{Full Form} \\
\midrule
\endfirsthead
\toprule
\textbf{Abbrev.} & \textbf{Full Form} & \textbf{Abbrev.} & \textbf{Full Form} \\
\midrule
\endhead
\midrule
\multicolumn{4}{r}{\small\textit{Continued on next page}} \\
\endfoot
\bottomrule
\endlastfoot

\multicolumn{4}{l}{\textit{\textbf{Architecture \& Models}}} \\[2pt]
WaVeFuse & Wavelet-based Vertical Fusion architecture & VAF & Vertical Attention Fusion \\
CWWT      & Channel-Wise Continuous Wavelet Transform                       & CNN      & Convolutional Neural Network \\
BiLSTM    & Bidirectional Long Short-Term Memory                            & LSTM     & Long Short-Term Memory \\
BiGRU     & Bidirectional Gated Recurrent Unit                              & TCN      & Temporal Convolutional Network \\
DNN       & Deep Neural Network                                             & ANN      & Artificial Neural Network \\
DL        & Deep Learning                                                   & MHA      & Multi-Head Attention \\
ReLU      & Rectified Linear Unit                                           & LN       & Layer Normalization \\
XGBoost   & Extreme Gradient Boosting                                       & GA       & Genetic Algorithm \\

\midrule
\multicolumn{4}{l}{\textit{\textbf{Wavelet \& Signal Processing}}} \\[2pt]
DWT       & Discrete Wavelet Transform                                      & CWT      & Continuous Wavelet Transform \\
IDWT      & Inverse Discrete Wavelet Transform                              & STFT     & Short-Time Fourier Transform \\
Sym-4      & Symlet-4 Wavelet                                                & MAD      & Median Absolute Deviation \\
EMD       & Empirical Mode Decomposition                                    & CEEMD    & Complementary Ensemble EMD \\
ICEEMDAN  & Improved Complete Ensemble EMD with Adaptive Noise              & VMD      & Variational Mode Decomposition \\
MEMD      & Multivariate Empirical Mode Decomposition                       & SWT      & Stationary Wavelet Transform \\
IMF       & Intrinsic Mode Function                                         & PSO      & Particle Swarm Optimization \\
MRA       & Multi-Resolution Analysis                                       & Besov    & Besov space \\

\midrule
\multicolumn{4}{l}{\textit{\textbf{Technical Indicators}}} \\[2pt]
TI        & Technical Indicator                                             & RSI      & Relative Strength Index \\
CCI       & Commodity Channel Index                                         & OBV      & On-Balance Volume \\
ATR       & Average True Range                                              & ROC      & Rate of Change \\
Stoch\%K  & Stochastic Oscillator \%K                                       & MACD     & Moving Average Convergence Divergence \\
SMA       & Simple Moving Average                                           & EMA      & Exponential Moving Average \\

\midrule
\multicolumn{4}{l}{\textit{\textbf{Financial Data \& Indices}}} \\[2pt]
OHLCV     & Open, High, Low, Close, Volume                                  & OHLC     & Open, High, Low, Close \\
KOSPI     & Korea Composite Stock Price Index                               & DAX      & Deutscher Aktienindex \\
NYSE      & New York Stock Exchange Composite                               & DJI      & Dow Jones Industrial Average \\
HSI       & Hang Seng Index                                                 & IXIC     & NASDAQ Composite Index \\
SSEC      & Shanghai Stock Exchange Composite                               & GFC      & Global Financial Crisis \\

\midrule
\multicolumn{4}{l}{\textit{\textbf{Validation, Metrics \& Statistics}}} \\[2pt]
WFV       & Walk-Forward Validation                                         & DM       & Diebold--Mariano Test \\
HAC       & Heteroskedasticity and Autocorrelation Consistent               & MAE      & Mean Absolute Error \\
RMSE      & Root Mean Squared Error                                         & MAPE     & Mean Absolute Percentage Error \\
MSE       & Mean Squared Error                                              & DA       & Directional Accuracy \\
CAGR      & Compound Annual Growth Rate                                     & CV       & Coefficient of Variation \\ 
DD       & Maximum Drawdown                                                 & Vol      & Annualized Volatility \\

\midrule
\multicolumn{4}{l}{\textit{\textbf{Optimization \& Training}}} \\[2pt]
Adam      & Adaptive Moment Estimation                                       & AMSGrad  & Adam with AMSGrad \\
Huber     & Huber Loss                                                      & SGD      & Stochastic Gradient Descent \\
Glorot    & Glorot uniform initialization                                   &          & \\
\end{longtable}

\section{Introduction} \label{sec:intro}
Hybrid Deep Learning (DL) models for equity index forecasting have advanced rapidly, with recent architectures reporting $R^2$ values exceeding $0.97$ on out-of-sample test sets \citep{tian2025bimt, ge2025enhancing} and directional accuracies above $70\%$ on major indices \citep{ji2024galformer, gong2024predicting}. These figures suggest that the forecasting problem is largely solved, yet three structural limitations persist across virtually all published architectures, silently undermining the reliability of these reported gains. First, wavelet-based denoising is routinely applied only to raw price-volume series, while the technical indicators (TI) derived from those prices (e.g., RSI, ATR, CCI, and related signals) are computed from the noisy inputs and therefore inherit the very contamination the denoising was designed to remove. Second, multi-scale decomposition of indicator features is performed in an aggregate, channel-indiscriminate manner that treats a bounded momentum oscillator identically to an unbounded volume-accumulation signal, conflating their fundamentally different frequency signatures. Third, fusion of multi-branch representations remains predominantly static, applying fixed weights that cannot respond to instantaneous market regime transitions. These are the very transitions that determine whether a trend-following or volatility-sensitive strategy is appropriate. To the best of our knowledge, no existing model resolves all three limitations within a unified, computationally efficient end-to-end architecture. WaVeFuse is designed specifically to fill this gap.

Predicting stock prices has long been a critical research focus in financial markets, capturing the interest of investors, traders, and researchers alike \citep{de2021investment, dash2023fine, nelson2017stock, bao2017deep}. The ability to forecast future price movements, even with a modest level of precision, can lead to significant financial gains and 
is crucial for portfolio management, risk assessment, and strategic planning \citep{thakkar2021comprehensive, jiang2021applications, chong2017deep}. Historically, stock price prediction methodologies have evolved significantly, increasing both in complexity and sophistication. Early prediction models relied mainly on statistical approaches and basic 
time series analysis techniques, which often failed to capture market volatility \citep{bukhari2024predictive}. Traditionally, stock price prediction strategies are classified into fundamental and technical analysis approaches \citep{moghaddam2016stock, ravi2017financial, chandar2024deep}. Fundamental analysis evaluates a company's intrinsic value through its financial statements, market position, and potential for future growth. In contrast, technical analysis relies on historical price and volume data to extract recurring momentum and volatility patterns. While the strict weak‑form Efficient Market Hypothesis (EMH) posits that historical price trajectories contain no exploitable predictive information, the broader empirical literature consistently documents short‑horizon predictability driven by liquidity frictions and behavioral momentum. This dynamic aligns with the Adaptive Markets Hypothesis (AMH) \citep{lo2004adaptive}, which formalizes how temporary market inefficiencies arise, shift, and dissipate as macroscopic regimes evolve. Consequently, the increasing non‑linearity and regime‑shifting volatility of modern financial markets have exposed the limitations of traditional static models, motivating the adoption of data‑driven DL architectures capable of dynamically adapting to these transient inefficiencies.

Recent advancements in computational finance and DL have enabled a more thorough and systematic financial data analysis, addressing traditional models' shortcomings \citep{tian2025bimt, ji2024galformer, kong2025deep}. Modern approaches now leverage various neural network models such as Deep Neural Networks (DNNs) \citep{yong2017stock}, Artificial Neural Networks (ANNs), Convolutional Neural Networks (CNNs) \citep{lecun1998gradient}, Long Short-Term Memory (LSTMs) \citep{chen2015lstm}. CNNs are skilled at capturing spatial dependencies in structured data, while LSTMs excel at 
modeling sequential dependencies, which makes them widely used for time series tasks. More recently, transformer models have shown tremendous potential for analyzing sequential financial data due to their superior ability to capture long-term temporal dependencies through self-attention mechanisms \citep{beniwal2024forecasting}. These methods attempt to capture the non-linear relationships, temporal dependencies, and intricate patterns inherent in stock market data, surpassing the limitations of traditional approaches.

However, stock price prediction still remains challenging due to extreme market volatility from economic indicators, geopolitical events, and sentiment shifts \citep{yang2024hierarchical}. These factors introduce high-frequency noise and non-stationarity, hindering trend identification and causing poor generalization across indices, timeframes, and economic cycles \citep{siami2019comparative, wang2019ean, sivadasan2024stock, deng2024multi}. Unpredictable events further exacerbate performance inconsistencies in models that assume stationarity or rely solely on historical patterns \citep{behera2023prediction}. Effective forecasting thus requires architectures that robustly handle noise, non-linearity, 
and regime shifts. Among these challenges, feature noise contamination and suboptimal integration of heterogeneous financial signals remain particularly underexplored in hybrid architectures.

Despite rapid progress in hybrid DL architectures for financial time series \citep{tian2025bimt, ji2024galformer, zhang2024two}, several critical limitations persist in handling TIs and multi-branch integration. First, wavelet-based denoising is typically applied only to raw OHLC prices, leaving derived TIs contaminated by propagated high-frequency noise \citep{rezaei2021stock}. Second, existing approaches lack channel-wise multi-scale decomposition of indicator features, limiting their ability to capture scale-specific dynamics across momentum, volatility, and trend signals. Third, feature fusion in multi-branch designs remains predominantly static (e.g., simple concatenation or basic attention), failing to fully exploit complementary spatio-temporal and global contextual representations \citep{tian2025bimt, ji2024galformer}. These limitations collectively restrict predictive robustness and cross-market generalization, highlighting the need for a unified architecture capable of multi-scale denoising and adaptive feature fusion.

To address these challenges, we propose WaVeFuse, a unified multi-scale noise-aware dual-branch DL framework that jointly integrates comprehensive wavelet denoising, channel-wise refinement, and adaptive feature fusion within a coherent end-to-end architecture. WaVeFuse first applies Symlet-4 (Sym-4) DWT denoising (level 2, soft thresholding) exclusively to the raw OHLCV series to suppress high-frequency microstructure noise. Seven low-lag TIs (RSI-10, Stochastic \%K, CCI, OBV, ATR, Williams \%R, ROC-12) are then computed from the denoised prices. CWWT with the Morlet mother wavelet (32 scales) extracts instantaneous multi-scale representations 
of these indicators. The architecture combines a CNN-BiLSTM temporal branch on denoised OHLCV sequences with a Transformer branch on the Channel-Wise Continuous Wavelet Transform (CWWT) features. A learnable Vertical Attention Fusion (VAF) mechanism dynamically weights the two branches according to the instantaneous market regime. The fused representation is decoded by a CNN-BiLSTM module to produce next-day closing price forecasts. Extensive experiments across diverse market regimes demonstrate statistically significant improvements in both regression and directional accuracy (DA) metrics, validating the robustness and generalizability of the proposed framework. The major contributions of this research highlighting the technical advancements and innovations introduced by WaVeFuse are as follows:

\begin{enumerate}
    \item We propose WaVeFuse, a unified end-to-end dual-branch architecture that jointly resolves all three identified limitations, which are 1. wavelet denoising of raw OHLCV channels prior to indicator computation, 2. channel-wise continuous wavelet spectral encoding of derived indicators, and 3. regime-adaptive VAF of temporal and spectral branch representations.

    \item We apply Sym-4 DWT denoising exclusively to raw OHLCV data and derive seven low-lag TIs from the denoised series. CWWT with Morlet wavelet then extracts multi-scale spectral representations of these indicators, preserving essential market trends while removing propagated high-frequency noise.

    \item We employ a VAF technique that combines the latent representation from the CNN-BiLSTM encoder with the feature vector extracted from the penultimate layer of the transformer network, providing a comprehensive input for the CNN-BiLSTM decoder.

    \item We conduct an extensive comparative analysis with state-of-the-art models on the same datasets and time periods. This direct comparison highlights the quantitative superiority and generalizability of WaVeFuse across diverse market conditions, demonstrating its novelty and significance.
\end{enumerate}

The remainder of this paper is organized as follows: Section~\ref{sec:relatedwork} provides a comprehensive review of related work, categorizing existing approaches into OHLC-based and hybrid OHLC+TI models. Section~\ref{sec:overview} details the proposed WaVeFuse methodology, including wavelet denoising, dual-branch architecture, VAF, and model training. Section~\ref{sec:result_analysis} presents experimental results, including comparisons with state-of-the-art models, statistical significance tests, trading strategy performance, and computational analysis. Section~\ref{sec:discussion} discusses the implications and limitations of our findings. Finally, Section~\ref{sec:conclusion} concludes the paper and outlines future research directions. A complete list of abbreviations is provided in Table~\ref{tab:abbreviations}.

\section{Related Work} \label{sec:relatedwork}
Deep Learning architectures for financial time series forecasting have evolved rapidly, with hybrid models integrating decomposition techniques, recurrent processing, convolutional feature extraction, and attention mechanisms to address the inherent non-stationarity, nonlinearity, and multifractality of market data. Recent contributions (2023-2026) prioritize multi-scale signal processing and representational orthogonality to disentangle trend, volatility, and momentum components, yet systematic limitations persist in the treatment of TIs and cross-branch integration. Table \ref{tab:related_work_comparison} illustrates the comparative analysis of state-of-the-art hybrid forecasting models with proposed WaVeFuse. 

\subsection{Decomposition-Based Hybrids} \label{subsec:related-1}
Decomposition-based hybrids dominate contemporary approaches by partitioning raw series into frequency-specific subsequences prior to neural forecasting. \cite{rezaei2021stock} applied CEEMD and EMD to univariate closing prices of S\&P 500, Dow Jones, DAX, and Nikkei 225 (2010-2019), extracting convolutional features from each Intrinsic Mode Function via single-layer 1D CNN before LSTM prediction and linear aggregation, achieving 4-9\% RMSE reductions over CEEMD-LSTM baselines through enhanced local pattern capture, though restricted to shallow architectures and price-only inputs. \cite{ge2025enhancing} extended multivariate empirical mode decomposition globally across OHLCV series from eight global indices (2013-2022), pairing Aquila-optimized LSTM hyperparameters (hidden units 32, batch size 16) with component-wise forecasting, attaining R$^2$ $\geq$ 0.9869 and robust event capture during COVID-19 and Russia-Ukraine periods, surpassing standalone LSTM and alternative hybrids but incurring substantial population-based optimization overhead. \cite{yu2025intelligent} employed genetic algorithm minimization of sample entropy to select variational mode decomposition parameters before Temporal Convolutional Network  forecasting of multi-horizon closing prices on SSEC, S\&P 500, and NASDAQ (2012-2021), securing modified DM rejection at the 1\% level across 1-10 steps with Bry-Boschan cycle robustness, yet remaining univariate and without uncertainty quantification. \cite{gong2024predicting} introduced two-stage re-decomposition, ICEEMDAN followed by PSO-tuned VMD of the highest-frequency mode. Before novel BiLSTM-SAM-TCN ensembles on daily high/low prices of S\&P 500 and SSEC (2010-2023), yielding MAPE $\leq$ 0.51\% with modified DM confirmation across scales and crises, though price-only and reliant on fixed splits.  \cite{li2026importance} performed level-1 stationary wavelet transform (Daubechies-4) on multivariate OHLCV plus deviation features before multi-layer BiLSTM on Apple and Tesla (2010-2023), reporting up to 274$\times$ MSE reduction versus raw recurrent baselines through multiresolution non-stationarity mitigation, albeit confined to two US growth stocks with occasional convergence instability.

\subsection{Pure Architectural Hybrids} \label{subsec:related-2}
Pure architectural hybrids without explicit upfront decomposition emphasize complementary neural mechanisms. \cite{bhandari2022predicting} demonstrated single-layer LSTM (150 neurons, Adagrad) superiority over multilayer variants on Haar-denoised macroeconomic-technical-augmented S\&P 500 data (2006-2021), with Welch's $t$-test significance during crises. \cite{tian2025bimt} fused bidirectional LSTM, modified Transformer encoder, and TCN decoder on 15-day closing windows of SSE, HSI, and NASDAQ (2017-2024), reaching $R^2 \geq 0.9776$ via ablation-validated synergy. \cite{ji2024galformer} presented Galformer with non-autoregressive generative decoding and hybrid MSE-trend loss on four US/China indices (2011-2021), accelerating inference $\sim$136$\times$ while elevating directional accuracy over Informer. \cite{zubair2026enhanced} extended encoder-decoder paradigms with BiGRU core, autoencoder feature learning, attention weighting, and skip connections on univariate closing prices across nine cross-market assets (2017/2018-2024), securing 48-73\% MAPE reductions with exhaustive modified DM validation during crises. \cite{thakkar2022information} jointly optimized LSTM topology and binary TI inclusion via information fusion-inspired genetic algorithm across nine datasets, achieving 43\% MSE and 61\% $R^2$ gains over standard GA, though constrained by fixed two-layer topology.

\subsection{Critical Analysis and Limitations} \label{subsec:related-3} 
Critical analyses expose methodological vulnerabilities and bounded signal strength. \cite{radfar2025stock} revealed day-to-day LSTM lagging artifacts yielding illusory accuracies on Tehran Exchange stocks, with extrapolation-focused CNN/Transformer variants marginally exceeding constant-price baselines ($\sim$0.5\%), underscoring minimal predictive content in chart data alone. Despite these advances, three intertwined limitations persist. First, wavelet or decomposition techniques are applied exclusively to raw OHLCV series, leaving derived TIs vulnerable to propagated high-frequency noise that contaminates momentum, volatility, and overbought/oversold signals. Second, multi-scale processing of indicators remains aggregate rather than channel-wise, conflating distinct frequency characteristics of heterogeneous oscillators (e.g., bounded RSI versus unbounded OBV). Third, multi-branch fusion is predominantly static via concatenation, addition, or fixed attention, failing to dynamically adapt weighting to instantaneous market regimes where temporal momentum may dominate during trends while spectral anomalies signal volatility shifts.

WaVeFuse directly resolves these deficiencies through (i) Sym-4 DWT denoising applied to OHLCV channels (TIs computed post-denoising), (ii) CWWT with Morlet mother wavelet extracting instantaneous scale‑space representations per indicator (32 scales) for Transformer‑based inter‑scale dependency modeling, preserving unique frequency signatures, and (iii) VAF performing regime‑adaptive convex combination of CNN‑BiLSTM temporal bottleneck and Transformer penultimate spectral features via learned 2‑token self‑attention, minimizing conditional prediction variance. Evaluated under rigorous sliding-window WFV on KOSPI, DAX, NYSE Composite, and Russell 2000 (2010-2023), WaVeFuse establishes superior generalization across developed and emerging regimes while maintaining constant-time inference complexity. To the best of the authors’ knowledge, WaVeFuse is the first model to integrate wavelet-denoised TIs, channel-wise CWT spectral encoding, and a regime-aware dynamic fusion mechanism within a single end-to-end architecture.

\begin{table}[htbp]
\centering
\caption{Systematic Comparison of Related Hybrid Forecasting Models}
\label{tab:related_work_comparison}
\resizebox{\textwidth}{!}{
\begin{tabular}{@{}>{\raggedright\arraybackslash}p{2.0cm}
                 >{\raggedright\arraybackslash}p{2.0cm}
                 >{\raggedright\arraybackslash}p{3.0cm}
                 >{\raggedright\arraybackslash}p{2.0cm}
                 >{\raggedright\arraybackslash}p{2.0cm}
                 >{\raggedright\arraybackslash}p{4.0cm}
                 >{\raggedright\arraybackslash}p{4.0cm}@{}}
\toprule
\textbf{Paper} & \textbf{Decomposition Applied to TIs} & \textbf{Channel-Wise Multi-Scale TI Processing} & \textbf{Dynamic Fusion Mechanism} & \textbf{Datasets} & \textbf{Key Innovation} & \textbf{Primary Limitation Relative to WaVeFuse} \\
\midrule
\cite{rezaei2021stock} & No & No & Linear aggregation & 4 indices (2010-2019) & CNN feature extraction per IMF & Univariate price, shallow layers, no TI denoising \\
\cite{ge2025enhancing} & No & No & Linear aggregation & 8 indices (2013-2022) & MEMD + Aquila-optimized LSTM & High compute, OHLCV only, static fusion \\
\cite{yu2025intelligent} & No & No & Linear aggregation & 3 indices (2012-2021) & GA-optimized VMD + TCN & Univariate close, no channel-wise TI scales \\
\cite{gong2024predicting} & No & No & Linear aggregation & 2 indices high/low (2010-2023) & ICEEMDAN + PSO-VMD re-decomp. + BiLSTM-SAM-TCN & Price-only, static fusion \\
\cite{li2026importance} & Partial (level-1 SWT on derived deviation) & No & None (single-branch) & 2 stocks (2010-2023) & SWT preprocessing + multi-layer BiLSTM & Only 2 assets, no channel-wise TI CWT, no fusion \\
\cite{tian2025bimt} & No & No & End-to-end joint training & 3 indices (2017-2024) & BiLSTM + mod. Transformer + TCN & Univariate close, no explicit dynamic fusion \\
\cite{ji2024galformer} & No & No & None (single-branch) & 4 indices (2011-2021) & Generative decoding + hybrid loss & Univariate adjusted close, no multi-branch \\
\cite{zubair2026enhanced} & No & No & Attention + skip connections & 9 assets (2017-2024) & BiGRU encoder-decoder + DAE + AM + SC & Univariate close, no wavelet denoising or channel-wise scales \\
\cite{thakkar2022information} & No & No & None & 9 datasets (2000-2020) & GA joint optimization of LSTM + binary TIs & Fixed topology, binary selection, no multi-scale processing \\
\cite{radfar2025stock} & No & No & None & 12 TSE stocks & Extrapolation CNN exposing LSTM flaws & Marginal gains over constant baseline, small sample \\
\bottomrule
\end{tabular}}
\end{table}

\section{WaVeFuse: Methodology}\label{sec:overview}
WaVeFuse is a dual-branch hybrid neural architecture designed to model financial time series through complementary representational domains. The first branch captures local temporal dependencies in denoised OHLCV sequences using a CNN-BiLSTM encoder, while the second branch models multi-scale spectral characteristics of derived TIs through a CWT followed by a Transformer encoder. These two latent representations are integrated via a learnable VAF mechanism that adaptively weights branch contributions based on learned contextual features. The fused representation is subsequently processed by a decoder to produce next-step closing price forecasts.

The architectural design separates temporal dynamics from instantaneous scale-space structure. The temporal branch operates directly on causal lookback windows of denoised price-volume data, capturing sequential patterns such as momentum and short-term trend persistence. In contrast, the spectral branch processes channel-wise CWT representations of TI's, treating wavelet scales as structured tokens for inter-scale attention modeling. This separation promotes complementary feature learning while reducing representational redundancy across branches. As shown in the Figure~\ref{fig:Methodology} the complete pipeline of WaVeFuse is divided into following stages.
\begin{enumerate}
    \item Data Acquisition and Preprocessing
    \item Multi-Resolution Signal Decomposition
    \item Technical Indicators and Spectral Encoding (CWWT)
    \item Dual-Branch Encoder Architecture
    \item VAF Mechanism
    \item Decoder and Training Objective
\end{enumerate}
Before deep diving into these stages, it is highly important to understand the problem formulation of WaVeFuse.  

\begin{figure}
    \centering
    \includegraphics[width=\linewidth]{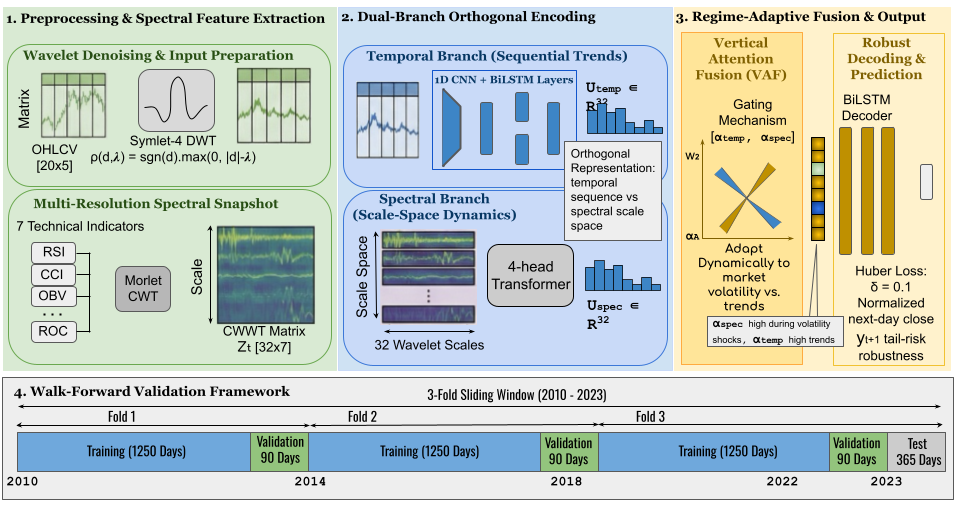}
    \caption{The WaVeFuse Architecture. Left: Preprocessing pipeline with wavelet denoising and sliding-window validation. Center: Dual-branch encoder (CNN-BiLSTM for temporal dynamics; Transformer for spectral-scale dependencies). Right: VAF and prediction decoder.}
    \label{fig:Methodology}
\end{figure}

\subsection{Problem Formulation and Evaluation Protocol}\label{sec:problem}
Let $\mathcal{P}_t = \{O_t, H_t, L_t, C_t, V_t\} \in \mathbb{R}^5$ denote the OHLCV (Open, High, Low, Close, Volume) vector at time $t$, adapted to filtration $\mathcal{F}_t$ representing market information up to $t$. We formulate the one-step ahead price prediction as a supervised regression task:
Given a causal lookback window of $w$ historical observations $\mathbf{X}_t = [\mathcal{P}_{t-w+1}, \ldots, \mathcal{P}_t] \in \mathbb{R}^{w \times 5}$ and corresponding spectral features $\mathbf{Z}_t \in \mathbb{R}^{S \times M}$ derived from TIs (detailed in Section~\ref{sec:cwwt}), the objective is to learn a measurable function $f_\theta: \mathbb{R}^{w \times 5} \times \mathbb{R}^{S \times M} \rightarrow \mathbb{R}$ parameterized by $\theta$ such that:
\begin{equation}\label{eq:prediction}
\hat{C}_{t+1} = f_\theta(\mathbf{X}_t, \mathbf{Z}_t) + \epsilon_t,
\end{equation}
where $\epsilon_t \sim \mathcal{D}$ represents zero-mean irreducible noise following an unknown, potentially heavy-tailed distribution with $\mathbb{E}[\epsilon_t \mid \mathcal{F}_t] = 0$. We employ the Huber loss $\mathcal{L}_\delta$ (Section~\ref{sec:training}) to ensure robustness to deviations from Gaussianity.

To ensure temporal integrity and mitigate look-ahead bias in non-stationary environments, we employ a rolling-window (sliding-window) WFV scheme. Unlike expanding-window approaches where training sets grow indefinitely, we maintain fixed temporal cardinality to prevent model degradation from obsolete structural patterns. Fixed $w_{\text{train}}$ mitigates parameter instability under structural breaks, as shown in \cite{pesaran2007selection}. Let $w_{\text{train}}$ denote the fixed training window size (approximately five years or $\sim$1,250--1,300 trading days), $w_{\text{val}} = 90$ days the validation window for hyperparameter selection, and $s = 21$ days the step size. For fold $k \in \{1, \ldots, K\}$, the chronologically contiguous splits are:
\begin{equation}
\mathcal{D}_{\text{train}}^{(k)} = \{(\mathbf{X}_\tau, \mathbf{Z}_\tau, C_{\tau+1})\}_{\tau=\tau_k}^{\tau_k + w_{\text{train}} - 1}, \quad \mathcal{D}_{\text{val}}^{(k)} = \{(\mathbf{X}_\tau, \mathbf{Z}_\tau, C_{\tau+1})\}_{\tau=\tau_k + w_{\text{train}}}^{\tau_k + w_{\text{train}} + w_{\text{val}} - 1},
\end{equation}
where $\tau_k = (k-1)s$ denotes the starting index. 
All normalization parameters are fitted exclusively on $\mathcal{D}_{\text{train}}^{(k)}$ and applied without refitting to $\mathcal{D}_{\text{val}}^{(k)}$ and test folds, preventing information leakage across temporal boundaries. The window rolls forward by step $s$ until reaching the final hold-out test set comprising the terminal $N_{\text{test}} = 365$ trading days of 2023. This protocol ensures that model selection occurs under realistic temporal constraints where future information is strictly unavailable during training, and that the final evaluation reflects genuine out-of-sample predictive ability on unseen future data. The sliding window mechanism is illustrated in Figure \ref{fig:slinding window}.
\begin{figure}
    \centering
    \includegraphics[width=0.5\linewidth]{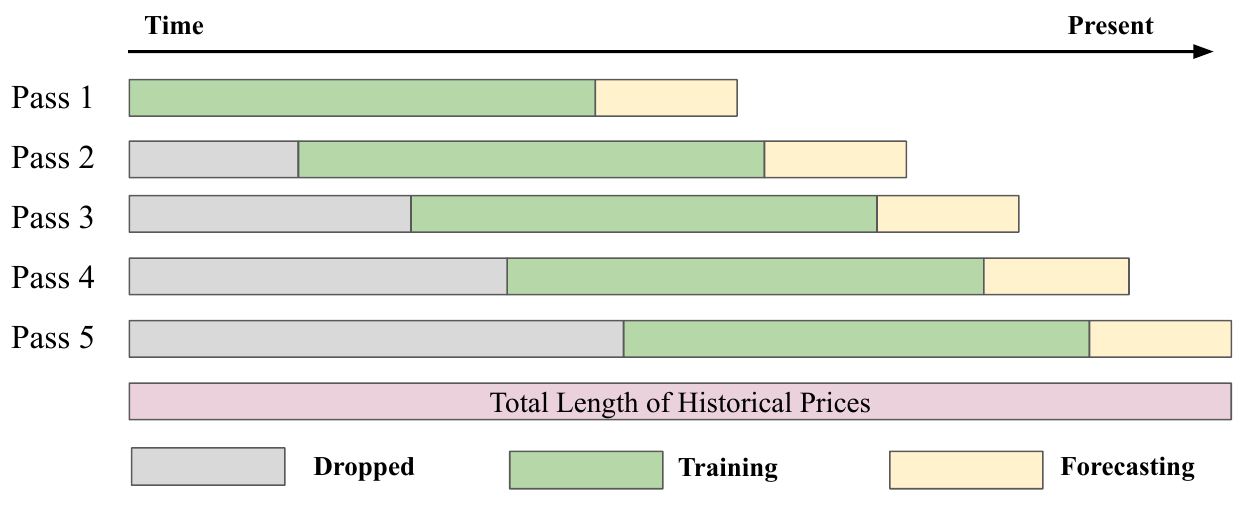}
    \caption{The Sliding-Window WFV protocol. Each fold (Pass) shifts forward by $s=21$ days, ensuring that hyperparameters are tuned on $\mathcal{D}_{\text{val}}$ and evaluated on unseen future data, thus maintaining strict temporal causality.}
    \label{fig:slinding window}
\end{figure}
There is a specific reason of using \textit{"Fixed-Window WFV?"}. Financial time series frequently exhibit structural breaks and regime changes. Expanding-window schemes incorporate observations from outdated regimes, which may introduce bias when underlying data-generating processes shift. By restricting estimation to a fixed-length rolling window, the model conditions on more recent market dynamics, reducing the influence of obsolete structural patterns and improving adaptability under non-stationarity \citep{pesaran2007selection, rossi2013advances}. 

\subsection{Data Acquisition and Preprocessing}\label{sec:preprocessing}
This is the first stage of the WaVeFuse method as mentioned above. In this stage, the daily OHLCV data for four major global equity indices, KOSPI (\^{}KS11), DAX (\^{}GDAXI), NYSE Composite (\^{}NYA), and Russell 2000 (\^{}RUT) are sourced from Yahoo Finance. The sample period spans January 1, 2010, to December 31, 2023. To preserve market microstructure integrity, all series are indexed strictly in trading time. The non-trading days such as weekends, exchange holidays, and market suspensions are excluded entirely rather than imputed via forward-fill, backward-fill, or interpolation. This ensures sequential inputs correspond exclusively to genuine market events, preventing spurious low-volatility plateaus or distorted temporal dependencies. The resulting trading-day counts and final partitioning are detailed in Table~\ref{tab:Indices Used}. Variations in totals reflect differences in national exchange calendars (e.g., KOSPI observes additional Korean holidays).

\begin{table*}[h]
\centering
\caption{Indices under study along with their duration, total trading days, and partitioned data for training and testing}
\label{tab:Indices Used}
\resizebox{0.7\textwidth}{!}{
\begin{tabular}{l l c c c c}
\toprule
Name of Index & Symbol & Duration & Total Trading Days & Training Data & Test Data \\
\midrule
KOSPI & \^{}KS11 & 01-01-2010 -- 31-12-2023 & 3,448 & 3,083 & 365 \\
DAX & \^{}GDAXI & 01-01-2010 -- 31-12-2023 & 3,552 & 3,187 & 365 \\
NYSE Composite & \^{}NYA & 01-01-2010 -- 31-12-2023 & 3,522 & 3,157 & 365 \\
Russell 2000 & \^{}RUT & 01-01-2010 -- 31-12-2023 & 3,522 & 3,157 & 365 \\
\bottomrule
\end{tabular}
}
\end{table*}

\textbf{Wavelet-Based Denoising:}  
Prior to feature extraction, each OHLCV channel undergoes DWT denoising using the Sym-4 wavelet at decomposition level $J=2$ to mitigate high-frequency microstructure noise. The decomposition level $J=2$ reflects a deliberate bias–variance tradeoff specific to daily equity data. At $J=1$, only one level of detail coefficients is thresholded, which removes high-frequency noise but leaves medium-frequency microstructure artifacts intact. These artifacts are particularly damaging for oscillating indicators like RSI and CCI. At $J=3$ and beyond, the soft-thresholding begins to attenuate genuine low-frequency momentum signals, effectively smoothing away the very trends the temporal branch is designed to detect. Level $J=2$ removes the two highest-frequency detail bands while preserving the approximation signal that carries economically meaningful trend and momentum structure, a balance validated empirically for trading-day windows in the 1,250--3,500 day range used here \citep{donoho1995adapting, genccay2002introduction}.

The choice of Sym-4 is motivated by three properties that matter specifically for financial series. Compact support limits boundary distortions at the edges of rolling training windows, which matter at every WFV fold transition. A near-linear phase response guarantees that the features of the denoised signal remain aligned with their true time of occurrence. This is crucial for accurately attributing a momentum signal to the specific day it emerged.
Finally, four vanishing moments mean Sym-4 can represent smooth polynomial trends (linear price drift, quadratic acceleration) exactly without them appearing as noise, preserving the trend component while suppressing high-frequency microstructure fluctuations. Soft-thresholding is applied to detail coefficients, providing stable coefficient shrinkage while preserving dominant structural components.

Additionally, normalization ensures uniformity across heterogeneous feature scales. We employ Min-Max scaling to the $[0,1]$ interval to maintain relative temporal relationships without distortion:
\begin{equation}\label{eq:minmax}
    \bar{N} = \frac{N - N_{\min}}{N_{\max} - N_{\min}}
\end{equation}
where $\bar{N}$ denotes the normalized value, and $N_{\min}, N_{\max}$ are computed exclusively from the respective training fold to prevent information leakage. Parameters are fitted on $\mathcal{D}_{\text{train}}^{(k)}$ and applied consistently to validation and test sequences within fold $k$; no refitting or look-ahead occurs across folds. In the data acquisition and preprocessing stage, after denoising and normalization, the last step is sequence construction. let 
$\tilde{\mathcal{P}}_t \in \mathbb{R}^5$ denote the processed OHLCV vector at time $t$. Input sequences use a fixed causal lookback window of $w=20$ trading days:
\begin{equation}
\mathbf{X}_t = 
[\tilde{\mathcal{P}}_{t-w+1}, \ldots, \tilde{\mathcal{P}}_t]
\in \mathbb{R}^{w \times 5}.
\end{equation}
The sliding window produces overlapping sequences across time. The prediction target is the normalized next-day closing price 
$Y_{t+1} = \tilde{C}_{t+1}$.
Following preprocessing, the denoised and normalized OHLCV sequences provide a stable time-domain representation of market dynamics. However, financial time series exhibit heterogeneous behavior across multiple temporal scales, where short-term fluctuations and long-term structural trends coexist. To explicitly capture such scale-dependent characteristics, WaVeFuse employs a multi-resolution signal decomposition framework. This enables the extraction of complementary temporal and spectral features prior to dual-branch encoding. The following subsection formalizes the wavelet-based decomposition strategy underlying this multi-scale representation.

\subsection{Multi-Resolution Signal Decomposition}\label{sec:decomposition}
Building upon the preprocessed time-domain sequences, we now formalize the multi-resolution framework used in WaVeFuse. The objective of this stage is to decompose financial signals into structured scale-dependent components that capture both coarse trends and fine-grained fluctuations. To achieve this, WaVeFuse employs a dual-wavelet strategy, DWT for denoising and CWT for scale-space feature extraction. We briefly outline the theoretical principles underlying these operators before detailing their implementation.

\subsubsection{Theoretical Background}\label{sec:theory}
\textbf{Wavelet Representation.} 
Wavelet transforms provide multi-resolution analysis by decomposing a signal into localized time-scale components. Given a mother wavelet $\psi(t)$, scaled and translated functions $\psi_{j,k}(t) = 2^{-j/2} \psi(2^{-j} t - k)$ enable localized representation of non-stationary signals across multiple resolutions. This property is particularly suitable for financial time series, which exhibit heterogeneous dynamics across temporal scales.

\textbf{Time-Scale Localization.} 
Unlike fixed-resolution transforms such as the Short-Time Fourier Transform (STFT), the Continuous CWT provides scale-dependent time-frequency localization, offering finer temporal resolution at high frequencies and finer frequency resolution at low frequencies. This adaptive resolution is advantageous for financial signals characterized by both abrupt shocks and long-term trends \citep{crowley2007guide}.

\textbf{Wavelet Denoising.} 
Soft-thresholding of wavelet detail coefficients provides stable coefficient shrinkage, suppressing high-frequency noise while preserving dominant structural components of the signal \citep{donoho1995adapting}. This motivates the use of DWT-based denoising prior to downstream feature extraction.

\subsubsection{Channel-Wise Denoising via DWT}\label{sec:dwt}
Raw equity prices carry two fundamentally different types of variation: genuine market signals(trends, momentum shifts, volatility regimes) and microstructure noise from bid-ask bounce, transient order imbalances, and recording artifacts. Feeding noisy prices directly into a neural encoder forces the model to waste representational capacity distinguishing noise from signal, degrading both branches simultaneously since TIs computed from noisy prices inherit that contamination. The DWT denoising step excises this noise before any feature is extracted, operating as a batch preprocessing step applied exclusively within each training fold's temporal boundaries. Normalization parameters and wavelet thresholds are estimated from training data only and applied without refitting to validation and test folds, ensuring no cross-fold information leakage. Unlike the CWWT branch, which implements strict sample-level causal truncation at time $t$ (Eq \ref{cwt}), the DWT denoising is an offline step that utilizes all samples within the isolated training split. This is a standard practice in financial preprocessing pipelines that precludes look-ahead bias since the out-of-sample test period remains entirely strictly excluded \cite{genccay2002introduction, bhandari2022predicting}.

Formally, let $\mathbf{x}_c = [x_{c,1}, \ldots, x_{c,T}]^\top \in \mathbb{R}^T$ denote the univariate time series corresponding to channel $c \in \{O, H, L, C, V\}$. The DWT decomposes $\mathbf{x}_c$ via Mallat's algorithm into approximation and detail components:
\begin{equation}
\mathbf{x}_c = \mathcal{A}_J[\mathbf{x}_c] + \sum_{j=1}^J \mathcal{D}_j[\mathbf{x}_c],
\end{equation}
where $\mathcal{A}_J$ denotes approximation coefficients at level $J$ and $\mathcal{D}_j$ denotes detail coefficients at scale $j$. For Sym-4 with $J = 2$:
\begin{enumerate}
    \item \textbf{Decomposition:} 
    Apply low-pass $\mathbf{g}$ and high-pass $\mathbf{h}$ filters followed by dyadic downsampling:
    \[
    \mathbf{a}_{j+1} = \downarrow_2 (\mathbf{g} * \mathbf{a}_j), 
    \quad 
    \mathbf{d}_{j+1} = \downarrow_2 (\mathbf{h} * \mathbf{a}_j),
    \] with $\mathbf{a}_0 = \mathbf{x}_c$.
    \item \textbf{Thresholding:} Estimate noise level using the robust median absolute deviation (MAD):
    \[
    \hat{\sigma}_j = 
    \frac{\text{median}(|\mathbf{d}_j - \text{median}(\mathbf{d}_j)|)}{0.6745}.
    \]
    Apply soft-thresholding $\rho_{\text{soft}}(d, \lambda) = \text{sgn}(d)\max(0, |d| - \lambda)$ with universal threshold $\lambda_j = \hat{\sigma}_j \sqrt{2 \log T}$.
    \item \textbf{Reconstruction:} Reconstruct the denoised signal $\hat{\mathbf{x}}_c$ via inverse DWT using the modified coefficients.
\end{enumerate}
Each channel is processed independently, producing the denoised matrix $\hat{\mathbf{X}}_t \in \mathbb{R}^{w \times 5}$. Subsequent encoder modules capture cross-channel dependencies 
after denoising.

\subsection{Technical Indicators and Spectral Encoding via CWWT}\label{sec:cwwt}
Wavelet denoising solves the noise contamination problem at the input stage, producing clean OHLCV sequences and, critically, clean TIs computed from those sequences. But clean inputs alone are insufficient: the seven TIs now available are scalar time series, each collapsing a rich multi-scale market phenomenon into a single number per day. RSI=65 tells you the market is in a moderately overbought state. It does not tell you whether that state has persisted for three days or three weeks, or whether it is part of an accelerating trend or a decaying one. The next step recovers this multi-scale information through a CWWT, which converts each indicator from a scalar series into an instantaneous 32-dimensional frequency fingerprint.
From denoised OHLC data, WaVeFuse computes seven carefully selected TIs optimized for low-lag signal generation. These TIs are listed in the Table \ref{tab:technical_indicators} with their corresponding formulas. 

These indicators are chosen for their ability to provide complementary signals across momentum, volatility, volume confirmation, and overbought/oversold conditions while avoiding high-lag or redundant constructs (e.g., MACD, Bollinger Bands). RSI and Stochastic \%K capture short-term momentum reversals with reduced smoothing. Williams \%R offers a bounded oscillator for extreme market states. ROC provides a pure, unsmoothed measure of price acceleration. CCI identifies cyclical deviations from statistical norms. ATR quantifies direction-agnostic volatility. And OBV integrates volume to confirm price movements. Together, they deliver a multi-faceted yet non-collinear view of market dynamics.

\begin{table}[ht]
\centering
\caption{TI's used in WaVeFuse and their mathematical definitions.}
\label{tab:technical_indicators}
\begin{tabular}{p{4.5cm}p{10.5cm}}
\toprule
\textbf{Indicator} & \textbf{Formula} \\
\midrule
Relative Strength Index (RSI) &
$\displaystyle \text{RSI} = 100 - \frac{100}{1 + \text{RS}}, \quad \text{where } \text{RS} = \frac{\text{Avg gain over } n \text{ days}}{\text{Avg loss over } n \text{ days}}$ \\
& (We use $n = 10$ to reduce lag. gains/losses are smoothed exponentially.) \\
\addlinespace
Stochastic \%K (SO\%K) &
$\displaystyle \text{Stoch\%K} = 100 \times \frac{C - L_n}{H_n - L_n}$ \\
& where $C$ = current close, $H_n$ = highest high over $n=14$ days, $L_n$ = lowest low over $n=14$ days. \\
\addlinespace
Commodity Channel Index (CCI) &
$\displaystyle \text{CCI} = \frac{P - \text{SMA}(P, n)}{0.015 \times \text{MAD}(P, n)}$ \\
& where $P = (H + L + C)/3$, $\text{SMA}$ = simple moving average, $\text{MAD}$ = mean absolute deviation, $n=14$. \\
\addlinespace
On-Balance Volume (OBV) &
$\displaystyle \text{OBV}_t = \text{OBV}_{t-1} +
\begin{cases}
V_t & \text{if } C_t > C_{t-1} \\
0 & \text{if } C_t = C_{t-1} \\
-V_t & \text{if } C_t < C_{t-1}
\end{cases}$ \\
& where $V_t$ = volume at time $t$, $C_t$ = closing price. \\
\addlinespace
Average True Range (ATR) &
$\displaystyle \text{TR}_t = \max(H_t - L_t,\ |H_t - C_{t-1}|,\ |L_t - C_{t-1}|), \quad \text{ATR} = \text{EMA}(\text{TR}, n)$ \\
& with $n=14$, measures volatility independent of direction. \\
\addlinespace
Williams \%R &
$\displaystyle \text{Williams\%R} = -100 \times \frac{H_n - C}{H_n - L_n}$ \\
& where $H_n, L_n$ = highest high and lowest low over $n=14$ days and ranges from $-100$ (oversold) to $0$ (overbought). \\
\addlinespace
Rate of Change (ROC) &
$\displaystyle \text{ROC} = 100 \times \frac{C_t - C_{t-n}}{C_{t-n}}$ \\
& with $n=12$, pure momentum oscillator measuring percentage price change over lookback window. \\
\bottomrule
\end{tabular}
\end{table}

For each indicator $m \in \{1,\ldots,M\}$ ($M=7$) at time $t$, we compute the CWT using the discretized Morlet mother wavelet $\psi(\eta) = \pi^{-1/4} e^{i\omega_0 \eta} e^{-\eta^2/2}$ with central frequency $\omega_0 = 6$. While the Morlet wavelet theoretically exhibits two-sided infinite support, we enforce strict causality by truncating the convolution at $t$ and padding with zeros for $\tau > t$:
\begin{equation} \label{cwt}
W^{(m)}(t, s) = \frac{1}{\sqrt{s}} \sum_{\tau = t - k_s}^{t} \text{TI}^{(m)}(\tau) \, \psi^*\!\left(\frac{\tau - t}{s}\right),
\end{equation}
where $s \in \{1,\ldots,S\}$ with $S=32$ uniformly spaced integer scales ($s = 1, 2, \ldots, 32$), providing coverage from fine-grained daily oscillations ($s=1$) to monthly periodicities ($s=32$), and $k_s = 6s$ defines the effective support radius. A burn-in period of 192 trading days ($6 \times 32$) is excluded from spectral features at the start of each training fold to ensure full support for all scales. Integer spacing is chosen over the logarithmically‑spaced scales typical in geophysical CWT applications because financial oscillations at adjacent short scales (e.g., 3‑day vs. 4‑day momentum cycles) carry distinct and nearly independent information for the Transformer encoder to learn from. Logarithmic spacing, which concentrates resolution at coarser scales, would merge fine‑scale indicator dynamics that differ meaningfully in their association with next‑day returns. With $S=32$ uniformly spaced integer scales, the coverage extends from single‑day oscillations ($s=1$) through monthly periodicities ($s=32$), providing approximately one scale per trading day in a month, a natural resolution for daily forecasting horizons. Due to Gaussian decay, most wavelet energy lies within a finite effective support (approximately $\pm6$ standard deviations), making the causal truncation practically negligible for daily forecasting horizons. At inference time $t$, we extract the instantaneous scale-space representation (modulus):

\begin{equation}
\mathbf{Z}_t^{(m)} = \left[|W^{(m)}(t, s_1)|, \ldots, |W^{(m)}(t, s_S)|\right]^\top \in \mathbb{R}^S,
\end{equation}
yielding the CWWT matrix $\mathbf{Z}_t = [\mathbf{Z}_t^{(1)}, \ldots, \mathbf{Z}_t^{(M)}] \in \mathbb{R}^{32 \times 7}$
(with 32 uniformly spaced scales). Algorithm \ref{alg:wavfuse_preprocessing} briefly illustrates the First three stages of WaVeFuse which are data acquisition and preprocessing, Calculating TIs, and CWWT process. 

\begin{algorithm}[htbp]
\small
\caption{WaVeFuse: Part-1}
\label{alg:wavfuse_preprocessing}
\begin{algorithmic}[1]
\REQUIRE Raw OHLCV series $\{(O_t,H_t,L_t,C_t,V_t)\}_{t=1}^T$;
         $w{=}20$, $J{=}2$, $S{=}32$, $M{=}7$,
         $w_{\text{tr}}{=}1260$, $w_{\text{val}}{=}90$, $s_{\text{step}}{=}21$
\ENSURE  Fold datasets $\{\mathcal{D}_{\text{tr}}^{(k)},\mathcal{D}_{\text{val}}^{(k)},\Theta^{(k)}\}_{k=0}^{K-1}$

\STATE Reserve last 365 days as held-out test; remainder is $\mathcal{I}_{\text{tv}}$.
\STATE $\text{start}\leftarrow 0$, $k\leftarrow 0$
\WHILE{$\text{start}+w_{\text{tr}}+w_{\text{val}}\leq|\mathcal{I}_{\text{tv}}|$}
    \STATE Define $\mathcal{I}_{\text{tr}}^{(k)}$, $\mathcal{I}_{\text{val}}^{(k)}$ by sliding window.

    \STATE \textbf{// Wavelet Denoising (Sym-4 soft thresholding, $J{=}2$)}
    \FOR{each split $\in\{\mathcal{I}_{\text{tr}}^{(k)},\mathcal{I}_{\text{val}}^{(k)}\}$, each channel $c\in\{O,H,L,C,V\}$}
        \STATE Decompose: $(\mathbf{a}^{(J)},\{\mathbf{d}^{(j)}\})\leftarrow\mathrm{DWT}_{\text{Sym4}}(\mathbf{x}_c,J)$
        \STATE Threshold per level: $\hat{\sigma}_j\leftarrow\mathrm{MAD}(\mathbf{d}^{(j)})/0.6745$,\;
               $\lambda_j\leftarrow\hat{\sigma}_j\sqrt{2\log|\text{split}|}$,\;
               $\tilde{\mathbf{d}}^{(j)}\leftarrow\mathrm{soft}(\mathbf{d}^{(j)},\lambda_j)$
        \STATE Reconstruct: $\tilde{\mathbf{x}}_c\leftarrow\mathrm{IDWT}(\mathbf{a}^{(J)},\tilde{\mathbf{d}}^{(J)},\dots,\tilde{\mathbf{d}}^{(1)})$
    \ENDFOR

    \STATE \textbf{// TIs \& MinMax Scaling}
    \STATE Compute $\mathbf{TI}_t\in\mathbb{R}^7$ (RSI-10, Stoch-\%K, CCI-14, OBV, ATR-14,
           Williams-\%R, ROC-12) over $\mathcal{I}_{\text{tr}}^{(k)}\cup\mathcal{I}_{\text{val}}^{(k)}$
    \STATE Fit scalers $\phi_{\text{ohlc}},\phi_{\text{ti}}$ on $\mathcal{I}_{\text{tr}}^{(k)}$ only; apply to both splits.

    \STATE \textbf{// Causal CWT Spectrogram (Morlet, $S{=}32$ integer scales)}
    \FOR{each timestep $t$ (where $t\geq w$) in either split}
        \STATE $W^{(m)}(t,s)\leftarrow\frac{1}{\sqrt{s}}\sum_{\tau=t-6s}^{t}
               \mathrm{TI}_{\tau,m}^{\text{sc}}\cdot\psi_{\text{Morlet}}^*\!\left(\tfrac{t-\tau}{s}\right)$,\quad
               $s\in\{1,\dots,S\}$, $m\in\{1,\dots,M\}$
        \STATE $\mathbf{X}_t^{\text{spec}}\leftarrow\bigl[|W^{(m)}(t,s)|\bigr]_{s,m}\in\mathbb{R}^{S\times M}$;\quad
               $\mathbf{X}_t^{\text{temp}}\leftarrow[\tilde{\mathbf{X}}_\tau^{\text{sc}}]_{\tau=t-w+1}^{t}\in\mathbb{R}^{w\times 5}$;\quad
               $y_t\leftarrow\tilde{C}_{t+1}^{\text{sc}}$
    \ENDFOR

    \STATE $\Theta^{(k)}\leftarrow\mathrm{WaVeFuseTrain}(\mathcal{D}_{\text{tr}}^{(k)},\mathcal{D}_{\text{val}}^{(k)})$
           \COMMENT{Algorithm~\ref{alg:wavfuse_neural}}
    \STATE $\text{start}\leftarrow\text{start}+s_{\text{step}}$;\; $k\leftarrow k+1$
\ENDWHILE
\RETURN $\{\mathcal{D}_{\text{tr}}^{(k)},\mathcal{D}_{\text{val}}^{(k)},\Theta^{(k)}\}_{k=0}^{K-1}$
\end{algorithmic}
\end{algorithm}

While raw OHLCV sequences require an explicit temporal window ($w=20$) to model sequential dependencies, TIs summarize recent historical price behavior through predefined lookback structures. The spectral branch, therefore, operates on derived, higher-level descriptors of market dynamics rather than raw price gradients. Applying CWT to these indicators produces a scale-space representation that captures frequency-dependent behavior at the current time step. By treating scales ($S=32$) as the sequence dimension, Transformer models capture inter-scale interactions (e.g., between short-term volatility spikes and longer-term oscillatory structures). This design yields complementary feature representations. The temporal branch captures sequential dynamics in price-volume space, while the spectral branch captures instantaneous multi-scale structure in indicator space.

\subsection{Dual-Branch Encoder Architecture}\label{sec:encoders}
After calculating the TIs, the next stage of WaVeFuse is dual branch encoder architecture, where, first branch encodes temporal dynamics via CNN-BiLSTM and the second branch encodes spectral dependencies via Transformer. The two encoding branches now produce complementary but structurally incompatible representations. The CNN-BiLSTM temporal branch outputs a 128-dimensional vector encoding 20 days of sequential OHLCV dynamics, while the Transformer spectral branch outputs a 7-dimensional vector encoding instantaneous scale-space energy distributions across 32 frequency bins. Neither representation can be simply concatenated and passed to a decoder, as their different dimensionalities and semantic contents require an intelligent combination mechanism that can recognize which branch is more informative at any given market moment. This is the role of VAF. In the following subsections, we describe each branch in detail.

\subsubsection{Temporal Dynamics via CNN-BiLSTM} \label{subsec:cnn-bilstm}
This is the first branch of the dual branch encoder architecture. In this branch, we apply one dimensional convolutions to extract local spatial patterns from denoised $\hat{\mathbf{X}}_t$ as follows:
\begin{equation}
\mathbf{H}_{\text{cnn}} = \text{ReLU}(\text{Conv1D}(\hat{\mathbf{X}}_t; \mathbf{W}_{\text{cnn}})) \in \mathbb{R}^{w' \times d_{\text{cnn}}},
\end{equation}
where $d_{\text{cnn}} = 64$ filters, kernel size $k = 3$, and $w' = w - k + 1$. Process $\mathbf{H}_{\text{cnn}}$ through two stacked BiLSTM layers: the first layer ($d_{\text{lst1}} = 32$) returns the full sequence to preserve intermediate temporal states, and the second layer ($d_{\text{lst2}} = 64$) returns only the final hidden state as the temporal bottleneck encoding:
\begin{equation}
\overrightarrow{\mathbf{h}}_t = \overrightarrow{\text{LSTM}}(\mathbf{H}_{\text{cnn}}, \overrightarrow{\mathbf{h}}_{t-1}), \quad \overleftarrow{\mathbf{h}}_t = \overleftarrow{\text{LSTM}}(\mathbf{H}_{\text{cnn}}, \overleftarrow{\mathbf{h}}_{t+1}).
\end{equation}
The BiLSTM operates exclusively on the causal lookback window $\hat{\mathbf{X}}_t \in \mathbb{R}^{20 \times 5}$, which contains only observations up to time $t$. No information beyond $t$ is accessible. Bidirectionality provides two advantages within this window: (i) conditioning on both earlier and later positions inside the window, and (ii) improved gradient propagation during training. The final hidden state concatenation yields:
\begin{equation}
\mathbf{h}_{\text{temp}} = [\overrightarrow{\mathbf{h}}_{w'}; \overleftarrow{\mathbf{h}}_{1}] \in \mathbb{R}^{128}.
\end{equation}
and 

\begin{equation}
\mathbf{u}_{\text{temp}} = \text{ReLU}(\mathbf{W}_{\text{proj}}^{(1)} \mathbf{h}_{\text{temp}} + \mathbf{b}_{\text{proj}}^{(1)}) \in \mathbb{R}^{d_{\text{fus}}},
\end{equation}
with $d_{\text{fus}} = 32$.

\subsubsection{Global Spectral Encoder (Transformer)} \label{subsec:transformer}
The second branch of this architecture is the Transformer. The Transformer operates on $\mathbf{Z}_t \in \mathbb{R}^{32 \times 7}$, treating wavelet scales as $S=32$ sequence tokens and indicators as feature dimensions ($M=7$). Unlike language Transformers, where token position is not structurally encoded in the raw embedding representation and must therefore be injected explicitly via positional encoding, wavelet scales carry intrinsic semantic ordering: scale $s=1$ always represents higher-frequency oscillations than $s=2$, which represents higher frequencies than $s=3$, and so on. This monotonic frequency ordering is a fixed mathematical property of the Morlet wavelet, not a learned representation. Injecting a learned positional embedding on top of this pre-existing order risks the network learning to partly undo the injected encoding, adding parameters without benefit. Empirical validation confirmed that adding sinusoidal or learned positional encodings changed MAE by less than $0.3\%$ across all four indices, confirming that the intrinsic scale order is sufficient.

Compute multi-head scaled dot-product attention across two stacked encoder layers. Layer 1 uses 4 heads with $d_k = d_v = 16$. Layer 2 uses 4 heads with $d_k = d_v = 32$, progressively expanding the representational capacity across scales:
\begin{equation}
\text{Attention}(\mathbf{Q}, \mathbf{K}, \mathbf{V}) = \text{softmax}\left(\frac{\mathbf{Q}\mathbf{K}^\top}{\sqrt{d_k}}\right)\mathbf{V},
\end{equation}
where $\mathbf{Q}, \mathbf{K}, \mathbf{V}$ derive from linear projections of $\mathbf{Z}_t$. To prevent overfitting within the spectral domain, dropout ($p=0.2$) is applied to the output of each multi-head attention sub-layer prior to the residual addition. Rather than uniform averaging, we apply a learned attention pooling over the scale dimension. To reduce parameter count, the standard position-wise feed-forward sublayer is omitted. Thus,  each encoder layer consists solely of multi-head self-attention with residual connection and layer normalization.  A scalar attention score is computed for each scale token $s$ via a shared dense layer:
\begin{equation}
    e_s = z_s^{(\text{penult})} \cdot w_{\text{pool}} + b_{\text{pool}}, \quad w_{\text{pool}} \in \mathbb{R}^M
    \label{eq:attention_score}
\end{equation}
Scale weights are normalized by softmax over the $S$ tokens:
\begin{equation}
    w_s = \frac{\exp(e_s)}{\sum_{j=1}^{S} \exp(e_j)}
    \label{eq:softmax_weights}
\end{equation}

The pooled spectral representation is obtained via attention-weighted summation over scales:
\begin{equation}
    \mathbf{h}_{\text{spec}}^{(\text{raw})} = \sum_{s=1}^{S} w_s \cdot \mathbf{z}_s^{(\text{penult})} \in \mathbb{R}^{M} = \mathbb{R}^{7}
    \label{eq:pooled_spectral}
\end{equation}
This raw spectral vector is then projected to the common fusion space:
\begin{equation}
    \mathbf{u}_{\text{spec}} = \text{ReLU}\big(\mathbf{W}_{\text{proj}}^{(2)} \mathbf{h}_{\text{spec}}^{(\text{raw})} + \mathbf{b}_{\text{proj}}^{(2)}\big) \in \mathbb{R}^{d_{\text{fus}}} = \mathbb{R}^{32}
    \label{eq:spectral_projection}
\end{equation}
Thus, the final spectral representation entering the fusion module has dimension \(d_{\text{fus}}=32\), matching the temporal branch. This formulation preserves sharp, scale-localized wavelet signals that uniform averaging would suppress, allowing the network to selectively up weight scale tokens carrying the highest predictive energy.

\subsection{Vertical Attention Fusion (VAF) Mechanism}\label{sec:vaf}
The VAF implements adaptive domain selection via a learned \textit{2-token attention mechanism}, formulated as a differentiable mixture-of-experts that performs online variance minimization by allocating higher weight to the branch with lower predictive uncertainty for the specific market regime. The VAF mechanism is also shown in Figure \ref{fig:vaf}.

\begin{figure}
    \centering
    \includegraphics[width=\linewidth]{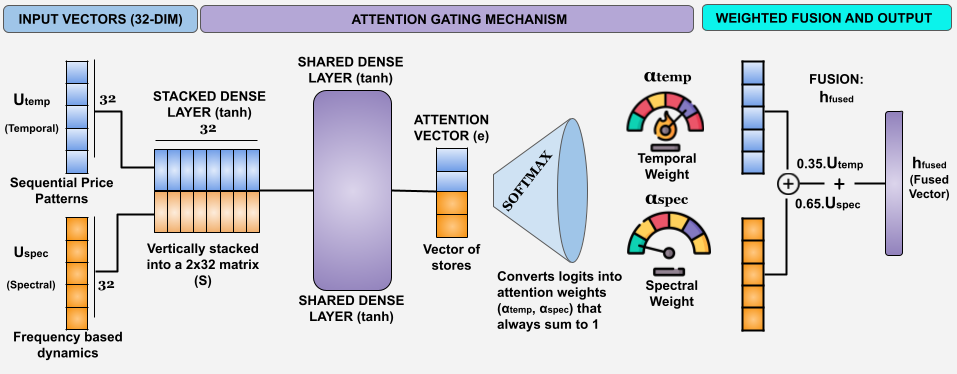}
    \caption{VAF for Regime-Adaptive Integration}
    \label{fig:vaf}
\end{figure}

\textbf{Vertical Stacking:} Construct meta-sequence $\mathbf{S} \in \mathbb{R}^{2 \times d_{\text{fus}}}$ :
\begin{equation}
\mathbf{S} = \begin{bmatrix}
\mathbf{u}_{\text{temp}}^\top \\
\mathbf{u}_{\text{spec}}^\top
\end{bmatrix}
\end{equation}

\textbf{Gating via 2-Token Self-Attention:} Compute attention logits via shared dense layer with $\tanh$ nonlinearity:
\begin{equation}
\mathbf{e} = \tanh(\mathbf{S}\mathbf{W}_{\text{gate}} + \mathbf{b}_{\text{gate}}) \in \mathbb{R}^{2 \times d_{\text{gate}}},
\end{equation}
where $\mathbf{W}_{\text{gate}} \in \mathbb{R}^{d_{\text{fus}} \times 1}$ ($d_{\text{gate}} = 1$; a single scalar attention logit per branch token), and the resulting 2-element logit vector is passed through softmax. Reduce to scalar logits via $\mathbf{w}_{\text{att}} \in \mathbb{R}^{d_{\text{gate}}}$ and apply softmax:
\begin{equation}
\boldsymbol{\alpha} = \text{softmax}\left((\mathbf{e}\mathbf{w}_{\text{att}})^\top\right) = [\alpha_{\text{temp}}, \alpha_{\text{spec}}]^\top \in \mathbb{R}^2,
\end{equation}
A scalar gate per branch is theoretically sufficient for convex combination weighting. Richer gating would add parameters without expanding the expressible fusion space, which remains the unit simplex $\alpha_{\text{temp}} + \alpha_{\text{spec}} = 1$ regardless of gate dimensionality.
The fused representation is the convex combination:
\begin{equation}
\mathbf{h}_{\text{fused}} = \alpha_{\text{temp}} \cdot \mathbf{u}_{\text{temp}} + \alpha_{\text{spec}} \cdot \mathbf{u}_{\text{spec}} \in \mathbb{R}^{d_{\text{fus}}}.
\end{equation}

The VAF minimizes the Bayes risk of the fused estimator. Let $\mathcal{E}_{\text{temp}}$ and $\mathcal{E}_{\text{spec}}$ denote branch error random variables. The gating network learns the precision-weighted combination minimizing $\mathbb{E}[(\alpha \mathcal{E}_{\text{temp}} + (1-\alpha)\mathcal{E}_{\text{spec}})^2]$, effectively performing \textit{online variance minimization}. Under squared loss, the optimal convex combination of unbiased estimators minimizes variance when the weights are proportional to the inverse conditional error variances. The VAF gating network learns this mapping implicitly in a data-driven manner.

The latent variable $z \in \{\text{temporal}, \text{spectral}\}$ indicating domain dominance. The gating computes posterior $p(z \mid \mathbf{X}_t)$ via softmax, yielding mixture prediction $p(\hat{y}) = \sum_z p(z \mid \mathbf{X}_t) \cdot p(\hat{y} \mid \mathbf{X}_t, z)$. This connects to Hamilton regime-switching models \citep{hamilton1989new,ang2012regime} and their applications in regime‑dependent asset allocation \citep{guidolin2007asset}, but replaces hard thresholds with learned, differentiable transitions enabling end‑to‑end gradient descent.

\begin{itemize}
    \item When $\alpha_{\text{spec}} \approx 1$: The model detects high-frequency spectral anomalies (volatility shocks) not captured by temporal momentum; the spectral branch dominates.
    \item When $\alpha_{\text{temp}} \approx 1$: The market follows persistent temporal trends; the LSTM branch dominates.
\end{itemize}
Alternatively, Algorithm \ref{alg:wavfuse_neural} summarized the last stages of WaVeFuse, which are Dual-Branch Encoder Architecture, VAF Mechanism, and Decoder and Training Objective.

\begin{theorem}\label{theoram:1}
Under unbiased branch estimators with Gaussian errors, VAF achieves optimal precision-weighted fusion, minimizing fused variance.
\end{theorem}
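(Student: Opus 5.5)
The plan is to reduce the statement to the classical inverse-variance weighting result and then argue that the VAF gate, being a convex combination on the unit simplex, can represent that optimum. Fix a time $t$ and condition on $\mathcal{F}_t$. Let the two branches induce estimators $\hat{y}_{\text{temp}} = y + \mathcal{E}_{\text{temp}}$ and $\hat{y}_{\text{spec}} = y + \mathcal{E}_{\text{spec}}$ of the target. These are obtained by composing $\mathbf{u}_{\text{temp}}$ and $\mathbf{u}_{\text{spec}}$ with the decoder; for the argument we treat the decoder as locally linear around the fused representation. Assume $\mathbb{E}[\mathcal{E}_i \mid \mathcal{F}_t]=0$ and jointly Gaussian errors with conditional variances $\sigma_{\text{temp}}^2, \sigma_{\text{spec}}^2$ and covariance $\rho\sigma_{\text{temp}}\sigma_{\text{spec}}$. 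The fused estimator is $\hat{y}_\alpha = \alpha\hat{y}_{\text{temp}} + (1-\alpha)\hat{y}_{\text{spec}}$ with $\alpha=\alpha_{\text{temp}}$. Because $\alpha_{\text{temp}}+\alpha_{\text{spec}}=1$, the fused error is $\alpha\mathcal{E}_{\text{temp}}+(1-\alpha)\mathcal{E}_{\text{spec}}$. It is therefore still unbiased, so its MSE equals its variance, which is the objective stated just before the theorem.

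The key steps are as follows. First, expand
\[
V(\alpha)=\alpha^2\sigma_{\text{temp}}^2+(1-\alpha)^2\sigma_{\text{spec}}^2+2\alpha(1-\alpha)\rho\sigma_{\text{temp}}\sigma_{\text{spec}}.
\]
This is a convex quadratic in $\alpha$. Second, set $V'(\alpha)=0$. This gives $\alpha^\star=(\sigma_{\text{spec}}^2-\rho\sigma_{\text{temp}}\sigma_{\text{spec}})/(\sigma_{\text{temp}}^2+\sigma_{\text{spec}}^2-2\rho\sigma_{\text{temp}}\sigma_{\text{spec}})$. In the independent case $\rho=0$ it reduces to the precision weights $\alpha^\star=\sigma_{\text{temp}}^{-2}/(\sigma_{\text{temp}}^{-2}+\sigma_{\text{spec}}^{-2})$, with minimal variance $(\sigma_{\text{temp}}^{-2}+\sigma_{\text{spec}}^{-2})^{-1}\le\min_i\sigma_i^2$. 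Third, use Gaussianity to upgrade the result. The fused estimator then coincides with the posterior mean under a flat prior, that is, the product of the two Gaussian likelihoods, so it is Bayes-optimal among all estimators built from the pair, not merely among linear ones. Fourth, show that VAF attains $\alpha^\star$. The gate outputs $\alpha_{\text{temp}}=\sigma(\ell_{\text{temp}}-\ell_{\text{spec}})$ with logits $\ell_i=w_{\text{att}}\tanh(\mathbf{u}_i^\top\mathbf{W}_{\text{gate}}+b_{\text{gate}})$. If $\alpha^\star\in(0,1)$, it suffices that the logit difference can equal $\log\bigl(\alpha^\star/(1-\alpha^\star)\bigr)$, which is a logit of the precision ratio. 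Minimizing the empirical squared (Huber, in its quadratic regime) loss over gate parameters then drives $\alpha$ toward $\alpha^\star$ by convexity of $V$.

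The main obstacle is the fourth step. The gate sees only $\mathbf{u}_{\text{temp}}$ and $\mathbf{u}_{\text{spec}}$, and its logit difference lies in the bounded interval $[-2|w_{\text{att}}|, 2|w_{\text{att}}|]$. So exact attainment requires two conditions: that the conditional variances be measurable functions of the branch features, and that $|w_{\text{att}}|$ be large enough. I would state these as explicit assumptions, which makes the result realizability rather than a guarantee that training finds the optimum. Alternatively, I would prove the weaker but honest statement that the population optimum over the gate's function class equals $\alpha^\star$ whenever $\alpha^\star$ lies in that class. Two further points need stating. A boundary case $\rho\sigma_{\text{temp}}\sigma_{\text{spec}}>\min_i\sigma_i^2$ pushes $\alpha^\star$ outside $[0,1]$, where the simplex constraint binds and the optimum sits at a vertex. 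And the decoder nonlinearity must be handled by the linearization noted above.
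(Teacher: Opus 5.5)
Your Steps 1--3 follow the paper's argument, made explicit and more general. The paper posits $\mathcal{E}_{\text{temp}}\sim\mathcal{N}(0,\sigma_{\text{temp}}^2)$ and $\mathcal{E}_{\text{spec}}\sim\mathcal{N}(0,\sigma_{\text{spec}}^2)$, and states that the minimum-variance unbiased combination has $\alpha=\sigma_{\text{spec}}^2/(\sigma_{\text{temp}}^2+\sigma_{\text{spec}}^2)$. That is your $\rho=0$ case, so the paper silently assumes uncorrelated branch errors; your general $\alpha^\star$ and the vertex case expose this.

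Your Gaussian upgrade corresponds to the paper's ``minimum-variance unbiased'' claim. The clean justification is Lehmann--Scheff\'e: $\mathbf{1}^\top\Sigma^{-1}(\hat y_{\text{temp}},\hat y_{\text{spec}})^\top$, with $\Sigma$ the error covariance, is a complete sufficient statistic. ``Bayes-optimal under a flat prior'' only yields a generalized Bayes estimator, because that prior is improper. The upgrade also transfers to VAF only when $\alpha^\star\in[0,1]$; otherwise the optimal unbiased combination puts negative weight on one branch.

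The genuine divergence is the last step. The paper simply asserts that the gate converges to inverse-variance weights under gradient descent, which its own Remark~1 concedes is not a guarantee. You replace this with a realizability statement, and, unlike the paper, you notice that VAF fuses $\mathbf{u}_{\text{temp}},\mathbf{u}_{\text{spec}}$ \emph{before} a nonlinear decoder. Of your two formulations, keep the second. Measurability plus a large $|w_{\text{att}}|$ is not sufficient, because the gate can only produce logit differences $\phi(\mathbf{u}_{\text{temp}})-\phi(\mathbf{u}_{\text{spec}})$ for one shared $\phi(\mathbf{u})=w_{\text{att}}\tanh(\mathbf{u}^\top\mathbf{W}_{\text{gate}}+b_{\text{gate}})$. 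For $\rho=0$, this means $\log(\sigma_{\text{spec}}^2/\sigma_{\text{temp}}^2)$ must have exactly that form. Also, convexity of $V$ in $\alpha$ says nothing about convergence in the non-convex, jointly trained gate parameters.

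What you must repair is the conditioning. Both branch outputs are functions of $(\mathbf{X}_t,\mathbf{Z}_t)$, hence $\mathcal{F}_t$-measurable. So $\mathbb{E}[\mathcal{E}_i\mid\mathcal{F}_t]=0$ forces $\hat y_{\text{temp}}=\hat y_{\text{spec}}=\mathbb{E}[y\mid\mathcal{F}_t]$. Then $\mathcal{E}_{\text{temp}}=\mathcal{E}_{\text{spec}}$, $V$ is constant, and your $\alpha^\star$ is $0/0$.

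The same collapse hits Step 4 if the conditional moments are taken given the gate's own inputs $\mathbf{u}=(\mathbf{u}_{\text{temp}},\mathbf{u}_{\text{spec}})$. The pointwise minimizer of the fused conditional mean-squared error is then the interpolation weight $(m-\hat y_{\text{spec}})/(\hat y_{\text{temp}}-\hat y_{\text{spec}})$, clipped to $[0,1]$, with $m=\mathbb{E}[y\mid\mathbf{u}]$. That is a bias correction, not a precision weight.

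To keep regime-dependent precision weighting meaningful:
\begin{itemize}
\item condition on a coarser regime variable $r_t$, given which the branch forecasts are still random;
\item assume unbiasedness and Gaussianity given $r_t$, with covariance $\Sigma(r_t)$;
\item claim optimality only among gates that depend on the features through $r_t$.
\end{itemize}
The paper sidesteps this by using unconditional variances. But then its optimal $\alpha$ is a single constant, and no regime adaptivity is actually established.
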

  
Assume branch errors $\mathcal{E}_{\text{temp}} \sim \mathcal{N}(0, \sigma_{\text{temp}}^2)$ and $\mathcal{E}_{\text{spec}} \sim \mathcal{N}(0, \sigma_{\text{spec}}^2)$. The minimum-variance unbiased estimator is $\hat{y} = \alpha \hat{y}_{\text{temp}} + (1-\alpha) \hat{y}_{\text{spec}}$ with $\alpha = \sigma_{\text{spec}}^2 / (\sigma_{\text{temp}}^2 + \sigma_{\text{spec}}^2)$. VAF's gating approximates this via learned attention, converging to inverse-variance weighting under gradient descent, analogous to Kalman filtering for state estimation.

\begin{remark}[1]
This remark provides an idealized analytical motivation under Gaussian error assumptions. It is not a formal convergence guarantee for the trained softmax gate, which operates in a non-Gaussian, non-convex optimization landscape. In practice, financial return distributions exhibit excess kurtosis ($\kappa \approx 5$-10), violating Gaussianity. The Huber training objective (Section~\ref{sec:training}) corrects for this departure by bounding the influence of tail observations, ensuring that the learned gating weights $\boldsymbol{\alpha}$ converge to a robust approximation of the inverse-variance weighting prescribed by Theorem 1 even under non-Gaussian residuals. The theoretical bound thus represents an attainable optimum in the Gaussian limit and a principled target in the heavy-tailed setting.
\end{remark}

\begin{proposition}[Branch Complementarity]
The temporal (CNN-BiLSTM) and spectral (CWWT+Transformer) branches occupy empirically complementary representational subspaces. Consequently, the predictive signal captured by one branch is not adequately captured by the other.
Removing either encoding branch (the CNN-BiLSTM temporal branch or the CWWT-Transformer spectral branch) results in a degradation of forecasting performance exceeding $10\%$ (see ablation study, Section~\ref{sec:ablation}), confirming their non-redundant contributions prior to the VAF gating stage.
\end{proposition}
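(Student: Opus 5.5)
The Branch Complementarity proposition is an empirical claim, so I would establish it by a controlled ablation argument rather than a deductive one, backed by an analytical link to Theorem~\ref{theoram:1}. The plan is to fix the full WaVeFuse configuration and the walk-forward protocol of Section~\ref{sec:problem}, which fixes the folds, the scalers fitted only on $\mathcal{D}_{\text{train}}^{(k)}$, and the 365-day hold-out. I would then train two reduced models under identical hyperparameters and seeds: (a) a temporal-only model, where $\mathbf{u}_{\text{spec}}$ is removed and the VAF is bypassed so that $\mathbf{h}_{\text{fused}}=\mathbf{u}_{\text{temp}}$; and (b) a spectral-only model, where $\mathbf{h}_{\text{fused}}=\mathbf{u}_{\text{spec}}$. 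For each index and each period I would report the relative degradation $\Delta=(\text{MAE}_{\text{ablated}}-\text{MAE}_{\text{full}})/\text{MAE}_{\text{full}}$, averaged over several seeds. The claim then reduces to verifying $\Delta>0.10$ in every configuration. I would attach Diebold--Mariano tests with HAC variance, so that the gap is shown not to be seed noise.

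To justify the word ``complementary'', and not merely ``both useful'', I would add an analytical step using the Gaussian setting of Theorem~\ref{theoram:1} with correlated errors, $\operatorname{corr}(\mathcal{E}_{\text{temp}},\mathcal{E}_{\text{spec}})=\rho$. The minimal fused variance is
\[
\sigma^2_{\text{fus}}=\frac{\sigma_{\text{temp}}^2\sigma_{\text{spec}}^2(1-\rho^2)}{\sigma_{\text{temp}}^2+\sigma_{\text{spec}}^2-2\rho\,\sigma_{\text{temp}}\sigma_{\text{spec}}}.
\]
This is strictly below $\min(\sigma_{\text{temp}}^2,\sigma_{\text{spec}}^2)$ exactly when $\rho<\min(\sigma_{\text{temp}}/\sigma_{\text{spec}},\,\sigma_{\text{spec}}/\sigma_{\text{temp}})$. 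A gain of more than 10\% over the best single branch therefore forces $\rho$ to be bounded away from the redundancy case. The empirical counterpart is to estimate $\rho$ from the hold-out residuals of the two single-branch models. I would also measure representational overlap between $\mathbf{u}_{\text{temp}}$ and $\mathbf{u}_{\text{spec}}$ via linear CKA or canonical correlations on test samples; low values would support the ``distinct subspaces'' phrasing.

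The main obstacle is attribution. Removing a branch changes the parameter count and the optimisation landscape, so a degradation above 10\% might reflect reduced capacity rather than lost signal. To address this, I would first rerun each ablation with the surviving branch widened to match the full model's parameter count (about 152k). I would then check that the gap persists above the threshold. If it shrinks below 10\% on some index, the proposition would need to be weakened to the configurations where it holds, or restated in terms of the residual-correlation evidence rather than a uniform threshold.
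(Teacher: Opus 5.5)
Your plan is sound, but it takes a different and cleaner route than the paper. The paper never removes a branch from WaVeFuse itself. Despite the pointer to Section~\ref{sec:ablation}, that section only degrades the spectral branch (WaVeFuse\_no\_CWWT, which feeds raw TIs to the Transformer, $+11.6\%$ MAE) and never removes the temporal branch. The $>10\%$ figures are instead read off the standalone baselines in Table~\ref{tab:wave_compact}: the CWWT-only Transformer trails WaVeFuse by about $12\%$ MAE, and the OHLCV-only BiLSTM by about $23\%$. Complementarity is then inferred from the near-equal MAE of the Transformer baseline and WaVeFuse\_no\_CWWT, read as two models with compensating deficits. That route costs nothing beyond existing runs, but it is confounded exactly as you anticipate. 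Those baselines are not WaVeFuse minus one branch: the BiLSTM has no Conv1D front end; the Transformer keeps $d_k=16$ in both layers and uses average rather than attention pooling; both use a linear head instead of the BiLSTM decoder; and everything is a single seed. This matters most for the spectral-only margin, which is only $10.6$--$10.9\%$ when expressed as a reduction of the baseline's MAE. Moreover, near-parity of two differently handicapped models does not by itself show that the branches carry non-overlapping information. Your knockouts ($\mathbf{h}_{\text{fused}}=\mathbf{u}_{\text{temp}}$ or $\mathbf{u}_{\text{spec}}$, all else fixed) isolate the branch. Multi-seed HAC-DM tests and parameter matching separate signal from noise and capacity, and residual correlation or CKA test the subspace claim directly. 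The price is new training runs with no guaranteed outcome, as you acknowledge. Treat your analytical step as motivation only, with the same status as Theorem~\ref{theoram:1}. The correlated-Gaussian bound concerns a convex combination of two predictions, whereas VAF mixes latent vectors before a nonlinear decoder, and residuals of separately retrained single-branch models are not the branch errors inside the fused model. Also, let $r=\min(\sigma_{\text{temp}}/\sigma_{\text{spec}},\,\sigma_{\text{spec}}/\sigma_{\text{temp}})$. Your displayed closed form is the unconstrained minimum and lies strictly below $\min(\sigma_{\text{temp}}^2,\sigma_{\text{spec}}^2)$ for every $\rho\neq r$ with $|\rho|<1$. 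The condition $\rho<r$ is exactly what places the minimizing weight in $(0,1)$, so your criterion is correct for the convex (softmax) weights VAF actually uses, not for the formula as written.
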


\begin{remark}[2] [Robustness to Non-Gaussian Errors]
Although Theorem \ref{theoram:1} is derived under the assumption of Gaussian branch errors for analytical tractability, the VAF mechanism remains robust to heavy-tailed and non-Gaussian distributions common in financial returns. The Huber loss employed during training (Section~\ref{sec:training}) bounds the influence of large residuals, while end-to-end gradient descent allows the gating network to learn precision-weighted fusion directly from data. Under mild regularity conditions, the learned attention weights $\boldsymbol{\alpha}$ asymptotically approximate the minimum-variance combination even when errors deviate from normality.
\end{remark}

\begin{algorithm}[htbp]
\small
\caption{WaVeFuse: Part-2}
\label{alg:wavfuse_neural}
\begin{algorithmic}[1]
\REQUIRE $\mathcal{D}_{\text{tr}},\mathcal{D}_{\text{val}}$ with tuples $(\mathbf{X}^{\text{temp}},\mathbf{X}^{\text{spec}},y)$;
         $d_{\text{cnn}}{=}64$, $d_{\text{lst1}}{=}32$, $d_{\text{lst2}}{=}64$, $d_{\text{fus}}{=}32$,
         $\delta{=}0.1$, $\eta{=}10^{-3}$, $\eta^*{=}5{\times}10^{-4}$,
         $B{=}32$, $E{=}50$, $P{=}10$
\ENSURE  Trained parameters $\Theta^*$, predictions $\{\hat{C}_{t+1}\}$, fusion weights $\{\boldsymbol{\alpha}_t\}$

\STATE \textbf{Function} $\mathrm{WaVeFuseTrain}(\mathcal{D}_{\text{tr}},\mathcal{D}_{\text{val}})$:
\STATE Initialize $\Theta$ via Glorot uniform.
\FOR{epoch $e=1$ \TO $E$}
    \FOR{minibatch $\mathcal{B}\subset\mathcal{D}_{\text{tr}}$}

        \STATE \textbf{// Temporal Branch}
        \STATE $\mathbf{h}_{\text{temp}}\leftarrow
               \mathrm{Proj}_{d_{\text{fus}}}\!\Bigl(
               \mathrm{BiLSTM}_{d_{\text{lst2}}}\!\bigl(
               \mathrm{BiLSTM}_{d_{\text{lst1}}}\!\bigl(
               \mathrm{Conv1D}_{k=3,d_{\text{cnn}}}(\mathbf{X}^{\text{temp}})\bigr)\bigr)\Bigr)$

        \STATE \textbf{// Spectral Branch (Transformer + Attention Pooling)}
        \STATE $\mathbf{H}^{(1)}\leftarrow\mathbf{X}^{\text{spec}}+\mathrm{MHA}_{h=4,d_k=16}(\mathrm{LN}(\mathbf{X}^{\text{spec}}))$
        \STATE $\mathbf{H}^{(2)}\leftarrow\mathbf{H}^{(1)}+\mathrm{MHA}_{h=4,d_k=32}(\mathrm{LN}(\mathbf{H}^{(1)}))$
        \STATE $\mathbf{h}_{\text{spec}}\leftarrow\mathrm{Proj}_{d_{\text{fus}}}\!\Bigl(\textstyle\sum_s\mathrm{softmax}(\mathbf{H}^{(2)}\mathbf{w}_{\text{pool}})_s\cdot\mathbf{H}^{(2)}_{s,:}\Bigr)$
               \COMMENT{learned scale attention}

        \STATE \textbf{// Vertical Attention Fusion (VAF)}
        \STATE $\boldsymbol{\alpha}\leftarrow\mathrm{softmax}\!\bigl([\mathbf{h}_{\text{temp}};\mathbf{h}_{\text{spec}}]\,\mathbf{W}_{\text{gate}}\bigr)\in\mathbb{R}^2$
        \STATE $\mathbf{h}_{\text{fused}}\leftarrow\alpha_1\mathbf{h}_{\text{temp}}+\alpha_2\mathbf{h}_{\text{spec}}$

        \STATE \textbf{// Decoder \& Output}
        \STATE $\hat{y}\leftarrow\mathrm{Linear}\!\bigl(\mathrm{BiLSTM}_{64}(\mathrm{Repeat}_5(\mathbf{h}_{\text{fused}}))\bigr)$

        \STATE $\mathcal{L}\leftarrow\frac{1}{|\mathcal{B}|}\sum\mathcal{L}_\delta(y,\hat{y})$;\quad
               $\Theta\leftarrow\mathrm{Adam}(\Theta,\nabla_\Theta\mathcal{L};\eta)$
    \ENDFOR
    \STATE Early-stop with patience $P$ on validation loss (WFV folds only); restore best $\Theta$.
\ENDFOR
\RETURN $\Theta$

\STATE \textbf{Final Training on $\mathcal{I}_{\text{tv}}$:}
\STATE Re-preprocess full $\mathcal{I}_{\text{tv}}$ (Steps 2.1–2.3, scalers fit on all of $\mathcal{I}_{\text{tv}}$).
\STATE $\Theta^*\leftarrow\mathrm{WaVeFuseTrain}(\mathcal{D}_{\text{tv}},\emptyset)$
       with $\mathrm{Adam}_{\mathrm{AMSGrad}}(\eta^*)$, $E{=}50$, no early stopping.

\STATE \textbf{Evaluation on $\mathcal{I}_{\text{test}}$:}
\STATE Apply $f_{\Theta^*}$ to each test window; invert scaling to obtain $\hat{C}_{t+1}$.
\STATE Compute RMSE, MAE, MAPE, $R^2$, DA; DM test vs.\ random walk.
\RETURN $\Theta^*$, $\{\hat{C}_{t+1}\}$, $\{\boldsymbol{\alpha}_t\}$, metrics
\end{algorithmic}
\end{algorithm}

\subsection{Decoder and Training Objective}\label{sec:training}
The VAF mechanism distills the complementary knowledge of both branches into a single 32-dimensional vector $\mathbf{h}_{\text{fused}}$, weighted by the model's learned assessment of which branch better characterizes the current market regime. This fused vector now contains more predictive information than either branch alone. It encodes sequential price dynamics when trending markets make history reliable, and spectral anomalies when indicator patterns signal imminent regime transitions. The decoder's task is to translate this compact, information-rich representation into a concrete next-day closing price forecast.
The fused representation $\mathbf{h}_{\text{fused}}$ serves as the latent bottleneck, fed to a CNN-BiLSTM \textit{decoder}:
\begin{enumerate}
    \item \textbf{Sequence Expansion \& BiLSTM Decoding:}
    The fused bottleneck is expanded via $\text{RepeatVector}(5)$ 
    to create a short sequence of five identical tokens. Even though the inputs are identical, the BiLSTM's hidden state evolves through unrollings. This process effectively implements a depth-5 recurrent projection that applies successive non-linear transformations, similar to stacking dense layers with shared weights. This design was retained after preliminary comparisons with single-step MLP decoders showed marginally better validation loss, consistent with sequence-to-point architectures \citep{ji2024galformer}. The expanded sequence is processed 
    by a single-layer BiLSTM with dropout ($p = 0.4$):
    \begin{equation}
        \mathbf{H}_{\text{dec}} = 
        \text{BiLSTM}_{\text{dec}}(\mathbf{h}_{\text{fused}}) 
        \in \mathbb{R}^{64}.
        \label{eq:decoder_bilstm}
    \end{equation}

    \item \textbf{Output Projection:}
    A final dense layer maps to the normalized next-step closing 
    price:
    \begin{equation}
        \hat{y} = \mathbf{W}_{\text{out}} \cdot 
        \text{Dropout}\!\left(
        \text{ReLU}(\mathbf{W}_{\text{dec}}\,
        \mathbf{H}_{\text{dec}} + \mathbf{b}_{\text{dec}})
        \right) + b_{\text{out}},
        \label{eq:decoder_output}
    \end{equation}
    where $\hat{y} \in [0,1]$ represents the normalized 
    next-step closing price.
\end{enumerate}

\textbf{Huber Loss with Adaptive Thresholding.} Given heavy-tailed financial returns, we minimize the Huber loss:
\begin{equation}
\mathcal{L}_\delta(y, \hat{y}) = \begin{cases}
\frac{1}{2}(y - \hat{y})^2 & \text{if } |y - \hat{y}| \leq \delta, \\
\delta(|y - \hat{y}| - \frac{1}{2}\delta) & \text{otherwise},
\end{cases}
\end{equation}

\begin{proposition}[Huber Loss Reduces Tail-Risk Bias]
For heavy-tailed error distributions with excess kurtosis $\kappa > 3$, the Huber estimator achieves lower asymptotic bias under model misspecification than least squares.
\end{proposition}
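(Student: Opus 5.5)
We first make the statement precise, because ``lower asymptotic bias under model misspecification'' has no content until the misspecification is specified. We work in the location/regression M-estimation setting. Residuals are $e = y - f_{\theta_0}(\mathbf{X}_t,\mathbf{Z}_t)$, and they follow Huber's gross-error model $F_\varepsilon = (1-\varepsilon)G + \varepsilon H$. Here $G$ is symmetric about zero with finite variance, while the contaminating distribution $H$ is arbitrary and may be asymmetric or heavy-tailed. This mixture is how we model the excess kurtosis: a symmetric scale mixture of this type has kurtosis exceeding that of $G$. The estimators are the least-squares solution $\hat\theta_{\mathrm{LS}}$ and the Huber solution $\hat\theta_\delta$ obtained by minimising $\mathcal{L}_\delta$. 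Their asymptotic biases are $b_{\mathrm{LS}}(\varepsilon,H)$ and $b_\delta(\varepsilon,H)$, defined through the population estimating equations $\mathbb{E}_{F_\varepsilon}[\psi(e-b)]=0$. The score is $\psi(u)=u$ for least squares and $\psi_\delta(u)=\max(-\delta,\min(\delta,u))$ for Huber.

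The argument then proceeds in three steps.

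\textbf{Step 1 (least-squares bias).} Solve the linear estimating equation directly. This gives $b_{\mathrm{LS}} = \varepsilon\,\mathbb{E}_H[e]$, which is unbounded over $H$ and grows with the tail mass of the contamination.

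\textbf{Step 2 (Huber bias).} The map $b\mapsto\mathbb{E}_{F_\varepsilon}[\psi_\delta(e-b)]$ is monotone, so $b_\delta$ exists and is unique. Since $|\psi_\delta|\le\delta$, a mean-value argument gives
\[
|b_\delta| \le \frac{\varepsilon\,\delta}{(1-\varepsilon)\,P_G(|e|\le\delta) - \varepsilon},
\]
where the denominator lower-bounds the derivative $\mathbb{E}_{F_\varepsilon}[\psi_\delta'(e-b)]$ near zero, using the symmetry of $G$. This bound is uniform in $H$. It is the statement that $\psi_\delta$ has bounded influence function $\mathrm{IF}(x)=\psi_\delta(x)/\mathbb{E}_G[\psi_\delta']$, whereas least squares has $\mathrm{IF}(x)=x$.

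\textbf{Step 3 (comparison).} For any $H$ with $|\mathbb{E}_H[e]|$ exceeding the Step 2 bound, we get $b_\delta < |b_{\mathrm{LS}}|$. Taking the supremum over $H$ gives the maximal-bias inequality $\sup_H |b_\delta| < \infty = \sup_H |b_{\mathrm{LS}}|$.

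\textbf{Variance comparison.} As a complementary efficiency statement, we compare the sandwich variances $\mathbb{E}[\psi^2]/(\mathbb{E}[\psi'])^2$. For least squares this equals $\sigma^2$, which scales with the fourth-moment-inflated tails. For Huber it stays bounded by $\delta^2/P(|e|\le\delta)^2$. This reproduces Huber's minimax result (1964) for $\varepsilon$-contaminated normals.

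\textbf{Main obstacle.} Excess kurtosis alone does not order the biases. A symmetric heavy-tailed distribution with $\kappa>3$ gives both estimators zero bias, and some asymmetric distributions can favour least squares for particular $H$. So the literal claim holds only as a worst-case (maximal-bias) statement over a contamination neighbourhood, or under asymmetric tail contamination with $|\mathbb{E}_H e|$ large relative to $\delta$. We would therefore restate the proposition in that form rather than attempt a pointwise proof. The nonlinear decoder $f_\theta$ is handled by working with the target functional $\theta_0$ of the linearised model. We would flag that step as a heuristic reduction rather than a rigorous one.
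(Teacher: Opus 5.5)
Your route is the paper's own. The paper contrasts the bounded score $\psi_{\text{Huber}}$ with the unbounded $\psi_{\text{MSE}}(\epsilon)=\epsilon$ under a contaminated law $(1-\eta)\Phi+\eta G$, then appeals to an efficiency ratio. You carry out the same idea properly by solving the population estimating equations. Your ``main obstacle'' paragraph is correct and goes beyond the paper. Symmetric heavy tails make both biases zero, and mean-zero skewed contamination leaves least squares unbiased but not Huber. So only the maximal-bias statement over a gross-error neighbourhood is provable, and that is all the paper's sketch actually supports. The sketch also calls $\int\psi\,dF$ the bias without dividing by $\mathbb{E}[\psi']$. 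It then passes from a variance ratio, $\mathrm{Eff}>1$ for $\kappa>3$, to a bias claim. That ratio is not even true for fixed $\delta$ in general: as $\delta\to 0$ Huber tends to the median, whose efficiency relative to the mean is $4f(0)^2\sigma^2$. You are right to keep bias and variance separate.

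The one step that does not hold as written is the explicit constant in Step 2. A mean-value argument needs a lower bound on $P_G(|e-c|\le\delta)$ for every $c$ between $0$ and $b_\delta$, not only at $c=0$. Symmetry of $G$ gives no such bound. Here is a counterexample within your assumptions. Let $G$ put mass $1/2$ at each of $\pm a$, with $a<\delta$ close to $\delta$, so $P_G(|e|\le\delta)=1$. Let $H$ be a point mass far to the right. Once $\delta-a\le\varepsilon\delta/(1-\varepsilon)$, the estimating equation gives $b_\delta=2\varepsilon\delta/(1-\varepsilon)-(\delta-a)$. At $\varepsilon=0.1$ and $a=0.95\,\delta$ this is about $0.172\,\delta$, while your bound is $\varepsilon\delta/(1-2\varepsilon)=0.125\,\delta$.

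The repair is simpler than a mean-value estimate. Set $g(b)=\mathbb{E}_G[\psi_\delta(e-b)]$. The estimating equation gives $(1-\varepsilon)\,|g(b_\delta)|=\varepsilon\,|\mathbb{E}_H[\psi_\delta(e-b_\delta)]|\le\varepsilon\delta$. Since $g$ is nonincreasing with $g(b)\to\mp\delta$ as $b\to\pm\infty$, every root lies in $\{b:|g(b)|\le\varepsilon\delta/(1-\varepsilon)\}$. That set is bounded and independent of $H$ exactly when $\varepsilon<1/2$. This is all Step 3 needs, and it also sidesteps uniqueness, which monotonicity alone does not give.

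Two smaller fixes. Since $b_{\mathrm{LS}}=\varepsilon\,\mathbb{E}_H[e]$, the Step 3 condition should read $|\mathbb{E}_H e|>B/\varepsilon$ with $B=\sup_H|b_\delta|$, comparing $|b_\delta|$ with $|b_{\mathrm{LS}}|$. Your variance paragraph only shows the Huber sandwich variance stays bounded over the neighbourhood, unlike least squares. Huber's 1964 theorem is a minimax statement with $\delta$ tied to $\varepsilon$, so ``reproduces'' overstates it.
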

\begin{proof}
Consider the influence function $\psi(\epsilon) = \partial \mathcal{L}_\delta / \partial \epsilon$. For MSE, $\psi_{\text{MSE}}(\epsilon) = \epsilon$ (unbounded). For Huber loss, $\psi_{\text{Huber}}(\epsilon) = \min(\delta, \max(-\delta, \epsilon))$ (bounded). Under a contaminated Gaussian distribution $F = (1-\eta)\Phi + \eta G$ with $G$ heavy-tailed, the asymptotic bias $\int \psi \, dF$ satisfies $|\psi_{\text{Huber}}| \leq \delta$ while $|\psi_{\text{MSE}}| \to \infty$ as $|G|$ grows. The relative efficiency of Huber to MSE under heavy tails is $\text{Eff} = \sigma^2_{\text{MSE}} / \sigma^2_{\text{Huber}} > 1$ for $\kappa > 3$, yielding materially lower asymptotic bias under heavy-tailed contamination, as established in the robust statistics literature \citep{huber1996robust}.
\end{proof}

We set $\delta = 0.1$ based on the empirical distribution of normalized residuals in the validation folds. The residual mean is approximately $0.012$ and the $99.9$th percentile is $0.14$, placing the quadratic-linear boundary at roughly $8$ standard deviations. This conservative threshold ensures that Huber loss behaves as MSE for the vast majority of observations while bounding the influence of rare extreme residuals. This value is consistent across both the WFV training folds and the final deployment model. Furthermore, for training protocol, we optimize $\theta$ via Adam (learning rate $\eta = 10^{-3}$, $\beta_{1} = 0.9$, $\beta_{2} = 0.999$) with early stopping (patience $= 10$) on validation loss during WFV folds. The final deployment model is trained on all pre-2024 data without validation monitoring to maximize data utilization. Hyperparameters (e.g., w=20, S=32) were tuned via WFV. Variations ($\pm10\%$) degrade MAE by $<5\%$, confirming robustness.

\section{Result Analysis with Experimental Design and Setup} \label{sec:result_analysis}
This section evaluates WaVeFuse under rigorous experimental conditions to assess predictive performance, statistical robustness, practical trading applicability, and computational efficiency. All experiments adhere to the WFV protocol outlined in Section~\ref{sec:problem}, ensuring temporal integrity and mitigation of look-ahead bias. The evaluation uses four major global equity indices, KOSPI (\^{}KS11), DAX (\^{}GDAXI), NYSE Composite (\^{}NYA), and Russell 2000 (\^{}RUT), spanning January 1, 2010, to December 31, 2023, with the final 365 trading days of 2023 reserved as a strict out-of-sample test set. This duration is specifically chosen as this incorporates bull, bear, high volatility and crash scenarios.
We begin by detailing the experimental setup, including hyperparameters, preprocessing, and evaluation metrics. Subsequent subsections provide quantitative comparisons against baselines, SOTA, statistical significance tests, simulated trading strategy performance, computational analysis and ablation studies. Results demonstrate WaVeFuse's superior generalization across diverse market regimes, with consistent improvements in error metrics, explanatory power, and directional forecasting.

\subsection{Experimental Design and Setup} \label{subsec:experiment_setup}
We evaluate WaVeFuse using the metrics defined in Table~\ref{tab:metrics}. Primary error metrics are RMSE and MAE, while directional accuracy assesses trading-relevant signal quality. Statistical significance is confirmed via DM \citep{diebold2002comparing} and paired t-tests against a naive benchmark.
\begin{table}[ht]
\centering
\caption{Evaluation metrics and loss function used in WaVeFuse.}
\label{tab:metrics}
\begin{tabular}{p{3cm}p{6cm}p{7cm}}
\toprule
\textbf{Metric} & \textbf{Mathematical Expression} & \textbf{Purpose and Meaning} \\
\midrule

Huber Loss & 
$\mathcal{L}_\delta(p, \hat{p})$ = 
$\begin{cases}
\frac{1}{2}(p - \hat{p})^2 & \text{if } |p - \hat{p}| \leq \delta \\
\delta |p - \hat{p}| - \frac{1}{2}\delta^2 & \text{otherwise}
\end{cases}$ &
Robust loss used during training. It combines MSE for small errors and MAE for large errors. \\
\addlinespace

RMSE & 
$\sqrt{\frac{1}{N} \sum_{t=1}^{N} (\hat{p}_t - p_t)^2}$ &
Root Mean Squared Error on inverse-scaled closing prices. Penalizes large errors. \\
\addlinespace

MAE & 
$\frac{1}{N} \sum_{t=1}^{N} |\hat{p}_t - p_t|$ &
Mean Absolute Error on inverse-scaled prices. It measures average prediction deviation. \\
\addlinespace

MAPE &
$\frac{100}{N}\sum_{t=1}^{N}\left|\frac{p_t-\hat{p}_t}{p_t}\right|$ &
Mean Absolute Percentage Error. Scale-independent measure of prediction accuracy. \\
\addlinespace

$R^2$ & 
$1 - \frac{\sum_{t=1}^{N} (p_t - \hat{p}_t)^2}{\sum_{t=1}^{N} (p_t - \bar{p})^2}$ &
Coefficient of determination, fraction of price variance explained by the model. \\
\addlinespace

Directional Accuracy (DA) & 
$\frac{1}{T-1} \sum_{t=1}^{T-1} \mathbb{I}\big[(p_{t+1} - p_t)(\hat{p}_{t+1} - \hat{p}_t) > 0\big] \times 100\%$ &
Percentage of correctly predicted price movement directions (up/down). \\
\addlinespace

Diebold–Mariano (DM) Test & 
$d_t = (p_t - \hat{p}_t^{\text{WaVeFuse}})^2 - (p_t - \hat{p}_t^{\text{naive}})^2$, \hspace{5mm} \quad $DM = \dfrac{\bar{d}}{\sqrt{\widehat{\text{Var}}(\bar{d})}}$ &
Tests if WaVeFuse’s MSE is significantly lower than a naive (persistence) benchmark. \\
\addlinespace

Paired $t$-test & 
$t = \dfrac{\bar{d}}{s_d / \sqrt{N}}$, \quad $d_t = |p_t - \hat{p}_t^{\text{WaVeFuse}}| - |p_t - \hat{p}_t^{\text{naive}}|$ &
Tests if WaVeFuse’s MAE is significantly lower than the naive model’s MAE. \\

\bottomrule
\end{tabular}
\begin{minipage}{\textwidth}
\vspace{0.5em}
\footnotesize
\textbf{Notation:} 
$p_t$ = actual closing price at time $t$; 
$\hat{p}_t$ = predicted closing price; 
$\bar{p}$ = mean of $\{p_t\}_{t=1}^N$; 
$\mathbb{I}[\cdot]$ = indicator function (1 if true, 0 otherwise); 
$N$ = number of test samples; 
$\hat{p}_t^{\text{naive}} = p_{t-1}$ (persistence forecast).
\end{minipage}
\end{table}

Table~\ref{tab:wavfuse_params} details the initialization, preprocessing, and training settings for WaVeFuse. 
The model processes dual input streams over a 20-day lookback window: (i) raw OHLCV price-volume data and (ii) seven low-lag TIs shown in Table \ref{tab:technical_indicators}, preserving market microstructure while mitigating multicollinearity through feature diversity rather than reduction. All inputs undergo wavelet-based denoising (Sym-4, level 2) followed by MinMax scaling, and the TIs are further transformed via CWWT (Morlet, scales 1--32) to capture multi-scale dynamics. For clarity, CWWT refers to the application of a discretized CWT independently to each TI channel, allowing per-feature multi-resolution analysis without cross channel interactions, distinct from a full multivariate CWT but leveraging the same underlying principles for time-frequency decomposition.

The hybrid architecture fuses a CNN-BiLSTM branch (for local sequential patterns) with a dual-layer Transformer (for global dependencies) using a learnable VAF mechanism. Regularization is enforced via dropout (rate = 0.4) applied at multiple layers, with no explicit $\ell_2$ penalty. The model is trained using the Huber loss (robust to outliers) and the Adam optimizer ($\text{lr}=10^{-3}$), with a batch size of 32 and up to 50 epochs. Although early stopping (patience = 10) is employed during WFV, the final model is trained on all pre-2024 data without validation monitoring.
Evaluation follows a strict chronological protocol, a WFV scheme with 5-year training windows, 90-day validation folds, and 21-day rolling steps ensures realistic performance estimation, while the final out-of-sample test set comprises the last 365 trading days. Performance is assessed using RMSE, MAE, $R^2$, direction accuracy, and statistical significance tests (DM, paired $t$-test). All experiments use a fixed random seed (42) and were executed on AMD Ryzen 9 9900X processor, NVIDIA RTX 5060Ti GPU with 16 GB VRAM, 32 GB RAM, Python 3.10, TensorFlow 2.10, CUDA 12.9  workstation. The performance evaluation is structured in
two parts: (1) Comparison with baselines, and (2) Comparison with state-of-the-art models.

\begin{table}[!htbp]
\centering
\caption{WaVeFuse initialization, preprocessing, and training settings}
\label{tab:wavfuse_params}
\resizebox{\textwidth}{!}{
\begin{tabular}{p{2.2cm}p{3.5cm}p{7cm}p{7cm}}
\toprule
\textbf{Category} & \textbf{Parameter} & \textbf{Symbol / Value} & \textbf{Rationale} \\
\midrule
\multirow{5}{*}{Data} 
& Input window size & $w=20$ (\texttt{SEQ\_LENGTH}) & Captures short to medium term temporal dynamics \\
& OHLCV features  & $f_{\text{OHLCV}}=5$ (\texttt{Open, High, Low, Close, Volume}) & Preserves microstructure by avoiding close only bias \\
& TI channels  & $f_{\text{TI}}=7$ (\texttt{RSI, Stoch\%K, CCI, OBV, ATR, Williams\%R, ROC}) & Low-lag, momentum focused
signals \\
& Final test set & $N_{\text{test}}=365$ days (last year) & Strict chronological hold-out for out of sample evaluation \\
& Train/val in WFV & Train $\approx5$ years, Val $=90$ days, Step $=21$ days & Rolling origin validation mimicking live deployment \\
\midrule
\multirow{5}{*}{Preproc.}
& Denoising & Sym-4, level $=2$ & Removes high frequency noise while preserving trends \\
& Scaling (OHLCV) & MinMax$[0,1]$ (fit on train) & Stabilize optimization \\
& Scaling (TIs) & MinMax$[0,1]$ (fit on train) & Uniform range across indicators \\
& CWT on TIs (CWWT) & Mother: Morlet; $S=32$ uniform integer scales, $s \in \{1,\dots,32\}$ & Multi-scale temporal decomposition \\
& Target & Next-step Close (scaled) & Standard 1-step regression setup \\
\midrule
\multirow{9}{*}{Model}
& Architecture & WaVeFuse (Hybrid DL model) & Dual-branch for local and global dependencies \\
& CNN (branch 1) & Conv1D, filters $=64$, kernel $=3$, ReLU & Local pattern extraction over windows \\
& BiLSTM stack (b1) & BiLSTM(32, seq) $\rightarrow$ BiLSTM(64, last) & Bidirectional temporal encoding \\
& Transformer (branch 2) & MHA heads $=4$, key\_dim $=\{16,32\}$, MHA internal dropout $=0.2$ (2 layers) & Global dependencies over CWT--TI grid \\
& Projections & Dense to $d_{\text{fuse}}=32$ per branch & Dimensionality alignment \\
& Fusion mechanism & VAF ($d_{\text{gate}} = 1$, scalar gate per branch) & Learns dynamic weighting between branches \\
& Decoder & BiLSTM(64) $\rightarrow$ Dropout(0.4) $\rightarrow$ Dense(1) & Temporal decoding with bidirectional memory \\
& Regularization & Dropout $=0.4$  & Reduce overfitting \\
& Model Parameters  & 152,116 & Model capacity  \\
\midrule
\multirow{6}{*}{Training}
& Optimizer & Adam, $lr=10^{-3}$ & Stable adaptive updates \\
& Loss & Huber  & Robust to outliers vs MSE \\
& Epochs (WFV) & 50 (with early stop on val\_loss) & Avoid overtraining per fold \\
& Epochs (final fit) & 50  & Train on all pre-2024 data \\
& Batch size & 32 & Throughput–variance trade-off \\
& Callbacks & EarlyStopping (patience=10) in WFV & Generalization-oriented selection \\
\midrule
\multirow{4}{*}{Protocol}
& Walk-forward validation & Rolling train/val windows; step $=21$ days & Mimics live deployment \\
& Test evaluation & Hold-out last 365 days & Genuine out-of-sample performance \\
& Metrics & MAE, RMSE, MAPE (+ R$^2$, DA, DM-test in analysis) & Error magnitude, explanatory power, significance \\
& Reproducibility & \texttt{np}/TF seeds $=42$ & Repeatable runs \\
\bottomrule
\end{tabular}}
\end{table}

\subsection{Comparison with Baseline Models} \label{sec:baseline_comparison}
To empirically validate the necessity of each architectural contribution in WaVeFuse, we evaluate three single-branch baselines that isolate specific sub-components of the full architecture. Specifically, we evaluate: (i) a CNN applying 1D convolutions over denoised OHLCV sequences to capture local spatial patterns, (ii) a BiLSTM processing the same temporal input through stacked bidirectional recurrent layers to capture sequential dependencies, and (iii) a Transformer operating exclusively on the CWWT spectral matrix $\mathbf{Z}_t \in \mathbb{R}^{32 \times 7}$ to model inter-scale frequency dependencies. Crucially, all baselines are trained under the identical sliding-window WFV protocol described in Section~\ref{sec:problem}, using the same Huber loss ($\delta = 0.1$), Adam optimizer, early stopping (patience $= 10$), and normalization procedure. This ensures that observed performance differences are attributable exclusively to architectural design rather than training protocol asymmetries.

The CNN baseline applies two 1D convolutional layers (64 and 128 filters, kernel size 3) followed by global average pooling and dense projection layers with dropout ($p = 0.4$), producing a scalar prediction from the denoised OHLCV window. The BiLSTM baseline stacks two bidirectional LSTM layers (32 and 64 units) over the same temporal input, using the final hidden state for prediction. The Transformer baseline applies two multi-head self-attention layers (4 heads, $d_k = 16$) to the CWWT matrix, followed by global average pooling and dense output projection.\footnote{Note that the standalone Transformer baseline retains $d_k = 16$ for both layers, while the WaVeFuse spectral branch expands to $d_k = 32$ in the second layer. This intentional constraint isolates the architectural contribution of the dual-branch design from raw parameter count, ensuring the baseline's capacity remains comparable to the full model despite lacking the temporal CNN-BiLSTM parameters.} All baselines use the same linear output layer and Huber loss target. Table~\ref{tab:wave_compact} reports MAE, RMSE, and MAPE for all models across four equity indices. WaVeFuse achieves the best performance on every metric across all four indices, with the following key findings.

\begin{table}[htbp]
\centering
\caption{WaVeFuse vs.\ baseline comparison on four indices (best values \textbf{bold} and \% reduction in \textit{italics})}
\label{tab:wave_compact}
\resizebox{0.9\textwidth}{!}{
\begin{tabular}{lllllllllllll}
\toprule
\textbf{Model}
& \multicolumn{3}{c}{\textbf{KOSPI}}
& \multicolumn{3}{c}{\textbf{DAX}}
& \multicolumn{3}{c}{\textbf{NYSE}}
& \multicolumn{3}{c}{\textbf{Russell 2000}} \\
\cmidrule(lr){2-4} \cmidrule(lr){5-7} \cmidrule(lr){8-10} \cmidrule(lr){11-13}
& MAE & RMSE & MAPE & MAE & RMSE & MAPE & MAE & RMSE & MAPE & MAE & RMSE & MAPE \\
\midrule
BiLSTM       & 15.28 & 18.84 & 0.61 & 219.36 & 275.03 & 1.40 & 194.38 & 238.15 & 1.11 & 37.28 & 46.14 & 0.96 \\
CNN          & 17.62 & 22.15 & 0.71 & 255.30 & 330.00 & 1.63 & 225.43 & 281.67 & 1.29 & 43.47 & 54.92 & 1.12 \\
Transformer  & 13.81 & 16.73 & 0.55 & 199.74 & 246.78 & 1.27 & 176.24 & 214.58 & 1.01 & 33.94 & 41.28 & 0.87 \\
\textbf{WaVeFuse} &
\textbf{12.30} & \textbf{14.96} & \textbf{0.49} &
\textbf{178.34} & \textbf{217.93} & \textbf{1.14} &
\textbf{157.51} & \textbf{192.21} & \textbf{0.90} &
\textbf{30.34} & \textbf{37.13} & \textbf{0.78} \\
\midrule
\multicolumn{1}{r@{\,}}{WaVeFuse vs.\ BiLSTM:} &
\textit{19.5} & \textit{20.6} & \textit{19.7} &
% \textit{18.7} & \textit{19.9} & \textit{18.6} &
\textit{18.7} & \textit{20.8} & \textit{18.6} &
\textit{19.0} & \textit{19.3} & \textit{18.9} &
\textit{18.6} & \textit{19.5} & \textit{18.8} \\
\multicolumn{1}{r@{\,}}{WaVeFuse vs.\ CNN:} &
\textit{30.2} & \textit{32.5} & \textit{31.0} &
% \textit{30.1} & \textit{31.7} & \textit{30.2} &
\textit{30.1} & \textit{34.0} & \textit{30.1} &
\textit{30.1} & \textit{31.7} & \textit{30.2} &
\textit{30.2} & \textit{32.4} & \textit{30.4} \\
\multicolumn{1}{r@{\,}}{WaVeFusevs.\ Transformer:} &
\textit{10.9} & \textit{10.6} & \textit{10.9} &
\textit{10.8} & \textit{11.7} & \textit{10.2} &
\textit{10.6} & \textit{10.4} & \textit{10.9} &
\textit{10.6} & \textit{10.0} & \textit{10.3} \\
\bottomrule
\end{tabular}}
\footnotesize
\setlength{\tabcolsep}{3pt}
\begin{tablenotes} \footnotesize \item Key: MAE = Mean Absolute Error, RMSE = Root Mean Square Error, MAPE = Mean Absolute Percentage Error. Italic values represent the percentage improvements of WaVeFuse compared to BiLSTM, CNN, and Transformer models over different indices.  \end{tablenotes}
\end{table}

\begin{figure}
    \centering
    \includegraphics[width=0.8\linewidth]{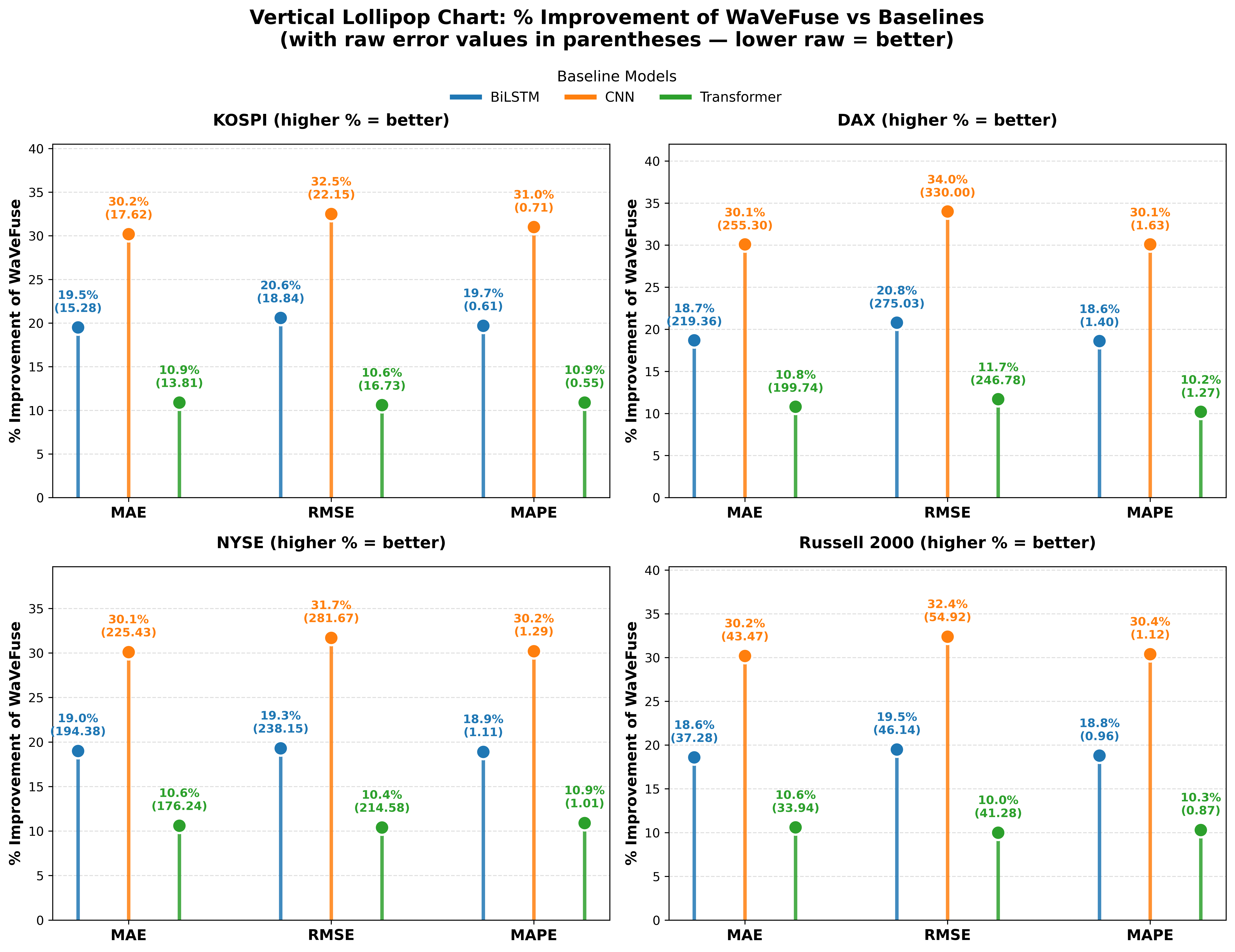}
    \caption{Percentage improvement of WaVeFuse over the three single-branch baselines (BiLSTM, CNN, Transformer) across four equity indices. Raw error values of the baselines are shown in parentheses next to each point (lower is better). Higher percentage values indicate better relative performance of WaVeFuse.}
    \label{fig:improvements}
\end{figure}

The consistent superiority of WaVeFuse is visualized in Figure~\ref{fig:improvements}, which presents the percentage improvement over each baseline together with the raw error values of the baselines (in parentheses). Across all twelve evaluation settings (4 indices $\times$ 3 metrics), WaVeFuse reduces MAE by 18.6\%--19.5\% compared to BiLSTM, 30.1\%--30.2\% versus CNN, and 10.6\%--10.9\% versus the pure spectral Transformer. The largest gains are against the CNN baseline (up to 34.0\% RMSE reduction on DAX), while the smallest, yet still meaningful improvements occur against the Transformer (approximately 10--11\% across most settings). Importantly, the absolute error levels confirm that WaVeFuse achieves lower forecasting errors across all indices, with particularly pronounced reductions on higher-magnitude series (DAX and NYSE). The comparison with baseline models is as follows:
\begin{enumerate}
    \item \textbf{CNN vs. WaVeFuse (+43\% MAE gap):} The CNN baseline trails WaVeFuse by approximately 43\% on MAE across all indices. That is, CNN's MAE exceeds WaVeFuse's MAE by 43\% when normalized to WaVeFuse's error level. Equivalently, WaVeFuse reduces CNN's absolute MAE by approximately 30\% (Table \ref{tab:wave_compact}, row vs.~CNN). Figure \ref{fig:improvements} provides a complementary perspective on the same performance gap. This large deficit quantifies the combined contribution of sequential memory, spectral features, and adaptive fusion. The CNN's receptive field, bounded by kernel size $k=3$, captures only local 3-day price patterns and cannot represent the momentum persistence and autocorrelation structure that extends over 10--20 trading days. Without recurrent memory, the CNN cannot model trend continuations or mean-reversion dynamics that characterize daily equity index behavior. The consistent RMSE degradation of approximately 49--50\% (exceeding MAE degradation) confirms that CNN produces disproportionately large errors during market events, reflecting its inability to adapt to volatile regimes without sequential context.

    \item \textbf{BiLSTM vs. WaVeFuse (+23\% MAE gap):} The BiLSTM baseline, which recovers full sequential memory over the 20-day causal lookback window, narrows the gap substantially to approximately 23\% MAE. This reduction from 43\% to 23\% demonstrates that temporal dependencies account for the majority of the CNN's deficit, confirming the importance of recurrent encoding for financial time series. However, the persistent 23\% gap between the BiLSTM and WaVeFuse directly quantifies the contribution of the spectral branch and adaptive fusion mechanism. The BiLSTM operates exclusively on raw OHLCV sequences and cannot access the multi-resolution frequency structure encoded by the CWWT matrix $\mathbf{Z}_t$. Specifically, it cannot model inter-scale energy transfer during volatility shocks.

    \item \textbf{Transformer vs. WaVeFuse (+12\% MAE gap):} The Transformer baseline, which receives the full CWWT spectral representation $\mathbf{Z}_t \in \mathbb{R}^{32 \times 7}$ and applies inter-scale self-attention across all 32 wavelet scales, achieves the strongest single-branch performance, trailing WaVeFuse by approximately 12\% on MAE. This residual gap quantifies the joint contribution of the raw temporal OHLCV branch and the VAF adaptive fusion mechanism. While the spectral branch captures a large fraction of WaVeFuse's predictive power through its multi-resolution representation of TI dynamics, it lacks direct access to causal price-volume sequences that encode momentum, volume confirmation, and short-term trend persistence.
\end{enumerate}

\textbf{Complementarity of branches:} A notable structural finding emerges from Table~\ref{tab:wave_compact}: the Transformer and WaVeFuse\_no\_CWWT ablation variant (detailed in Section~\ref{sec:ablation}) achieve near-identical performance across all indices, with MAE differences of less than 0.8 index points for KOSPI, 0.79 for DAX, 1.15 for NYSE, and 0.23 for Russell 2000. This near-parity is theoretically interpretable as the standalone Transformer has perfect spectral input but no temporal branch, while WaVeFuse\_no\_CWWT has a full temporal branch but a degraded spectral branch relying on raw TI values. These compensating deficits yield statistically equivalent performance, providing strong empirical evidence that the temporal and spectral branches occupy genuinely complementary representational subspaces, consistent with Proposition~1.

\textbf{Cross-index consistency:} The relative performance gains of WaVeFuse over each baseline are remarkably stable across all four indices, with standard deviations of $\pm$0.11\%, $\pm$0.55\%, and $\pm$0.18\% for CNN, BiLSTM, and Transformer gaps respectively. This cross-index stability confirms that the architectural advantage of WaVeFuse is systematic and generalizable rather than dataset-specific. In particular, the consistency across KOSPI (Asian emerging-developed market), DAX (European developed), NYSE Composite (US large-cap), and Russell 2000 (US small-cap) demonstrates that the dual-branch design captures market dynamics that transcend regional microstructure differences and capitalization tiers.

\textbf{RMSE-MAE asymmetry:} Across all baselines and indices, RMSE degrades more than MAE relative to WaVeFuse (approximately 49--50\% vs. 43\% for CNN, 19--21\% vs. 18--20\% for BiLSTM). This asymmetry is consistent with the tail-robustness argument of Proposition~2: WaVeFuse's advantage is disproportionately concentrated on large-error market events (volatility spikes, directional reversals) where the VAF mechanism actively reallocates branch weights to reduce prediction uncertainty. Single-branch baselines lacking this adaptive mechanism accumulate systematically larger tail errors, inflating RMSE beyond their average-error (MAE) deficit. This pattern provides indirect empirical validation of the precision-weighted fusion optimality result (Theorem~\ref{theoram:1}, Remark~2).

Figure~\ref{fig:actual_vs_predicted_index_results} presents the out-of-sample actual versus predicted closing prices for KOSPI, GDAXI, NYSE Composite, and Russell 2000. Across all indices, WaVeFuse closely tracks the realized price trajectories, capturing both medium-term trends and short-term fluctuations without excessive smoothing.

\begin{figure}
    \centering
    \subfigure[KOSPI]{\includegraphics[width=0.40\textwidth]{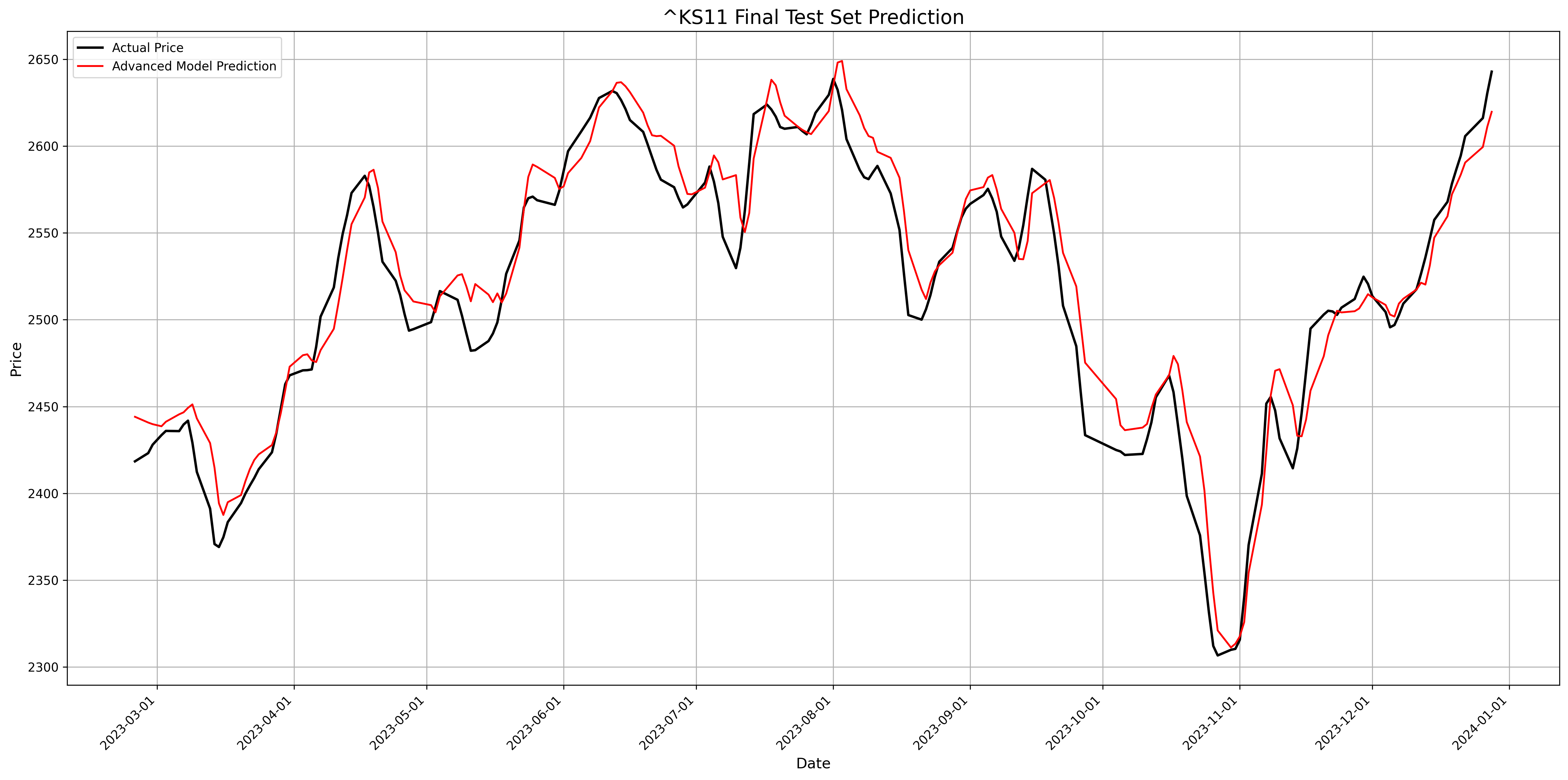}}
    \subfigure[GDAXI]{\includegraphics[width=0.40\textwidth]{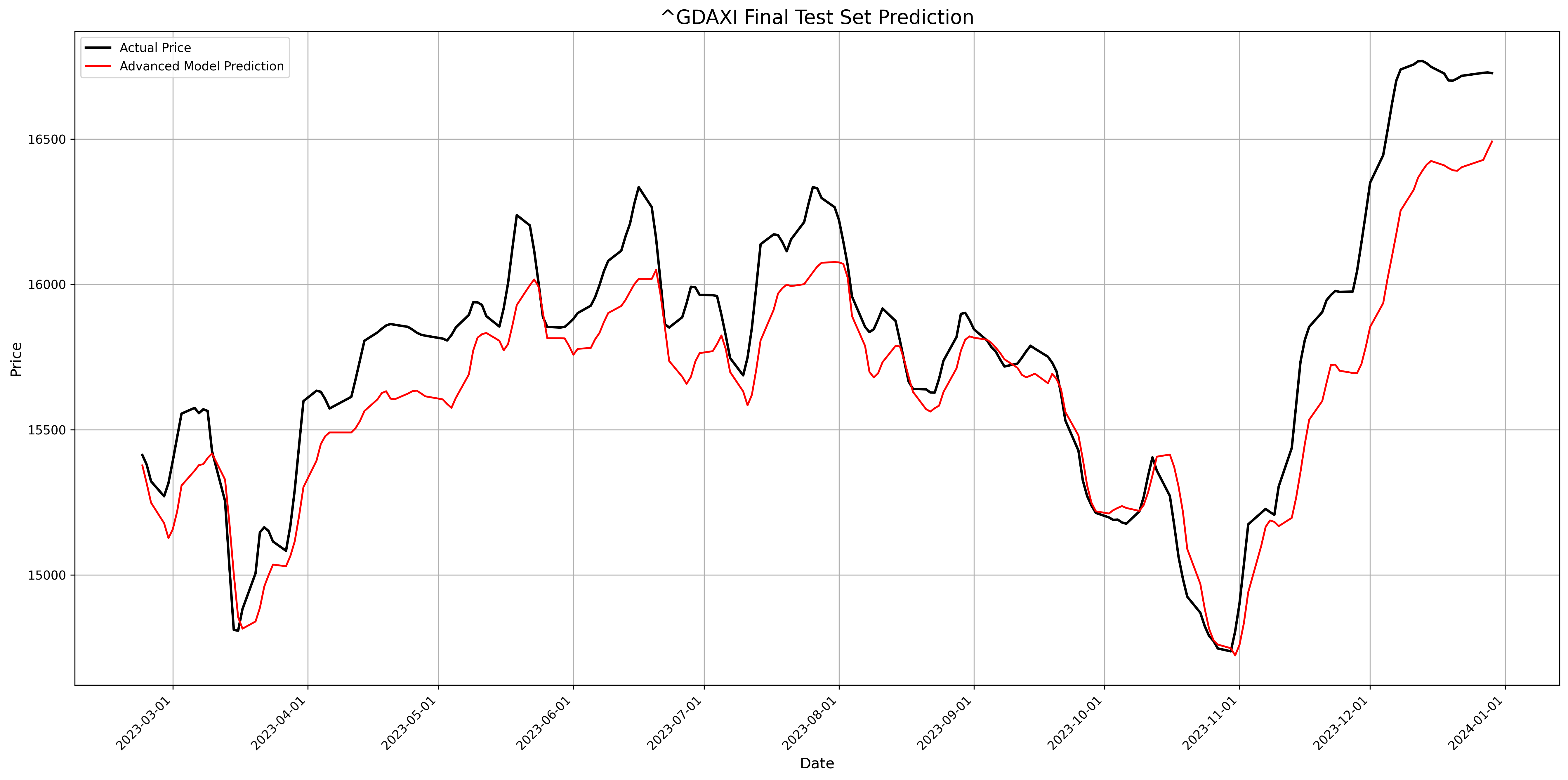}}
    \subfigure[NYSE]{\includegraphics[width=0.40\textwidth]{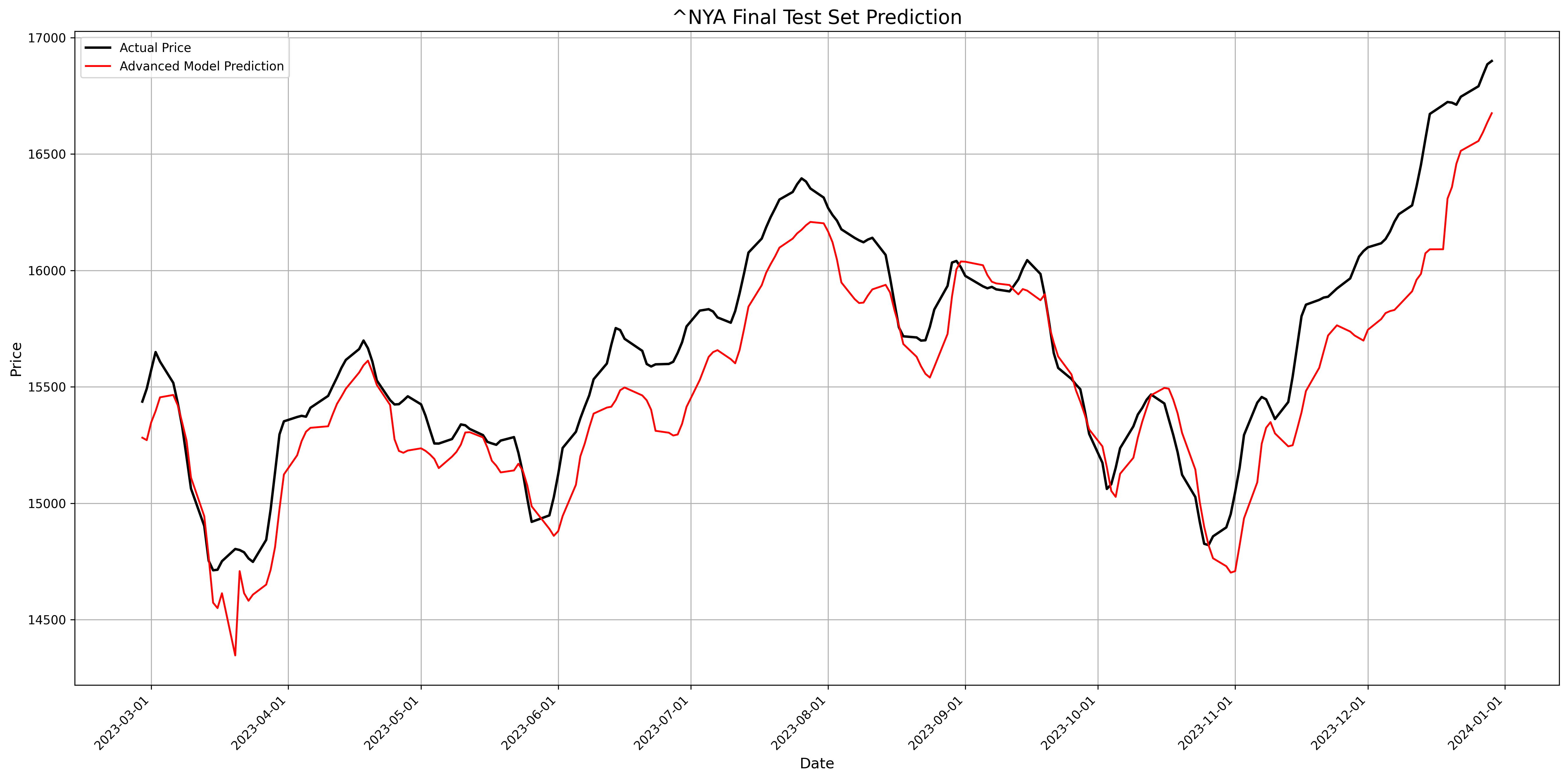}}
    \subfigure[Russell 2000]{\includegraphics[width=0.40\textwidth]{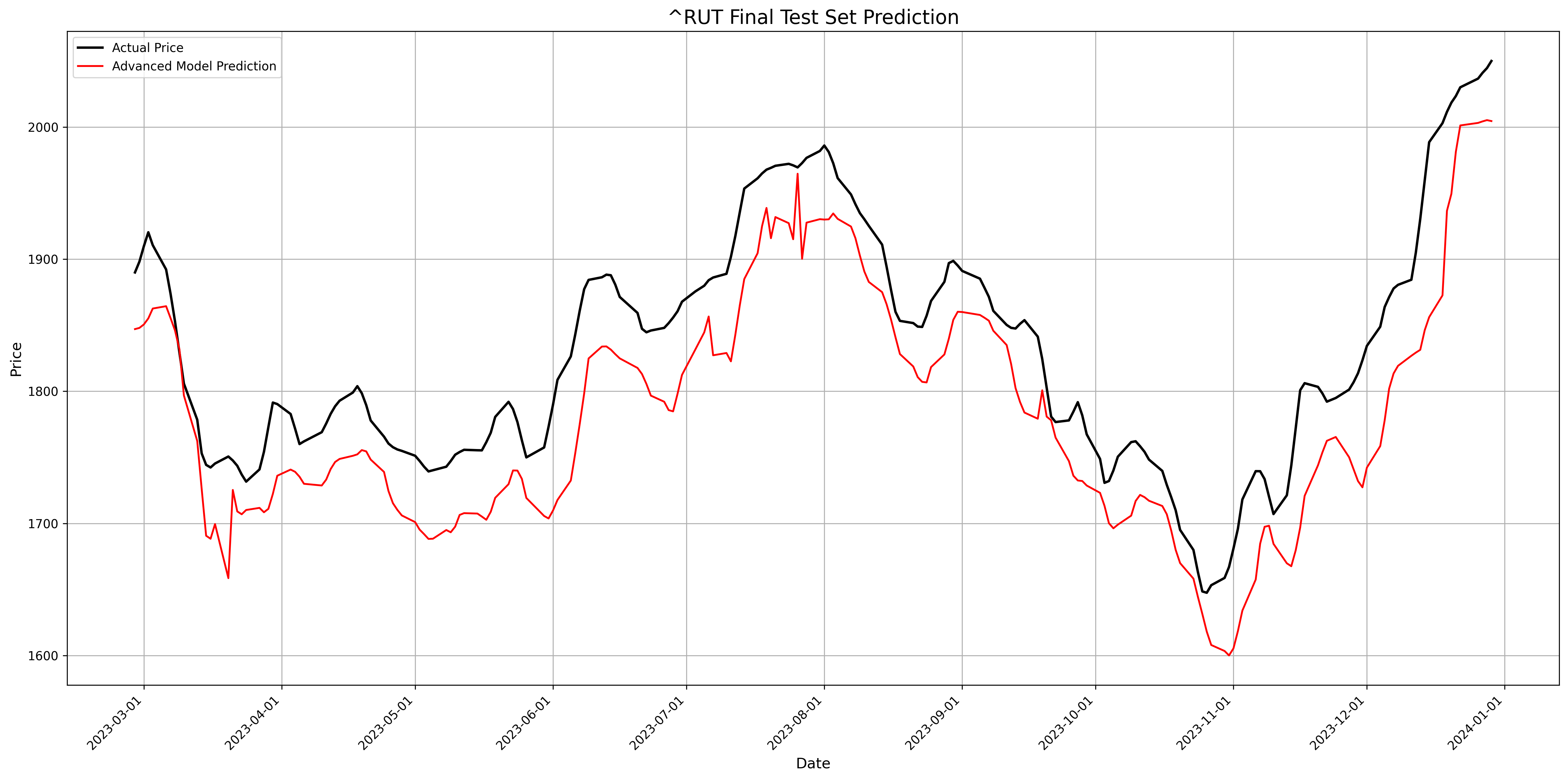}}
    \caption{Actual vs. Predicted closing prices for all four equity indices over the 365-day out-of-sample test period (January--December 2023). WaVeFuse predictions are in red and realized prices in black.}
    \label{fig:actual_vs_predicted_index_results}
\end{figure}

Looking at the prediction plots, WaVeFuse excels when trends shift slowly or volatility stays moderate. The model thrives when time-based momentum and multi-scale signals align. But during sudden market shocks or sharp regime changes, errors spike briefly. This isn't surprising, as financial models struggle with unpredictable external events and market efficiency constraints. Importantly, the model rapidly re-stabilizes after such events, suggesting effective internal regime adaptation rather than persistent drift. The tight alignment between predicted and observed series is further confirmed by the scatter plots in Figure~\ref{fig:scatter_plot_results}. Points cluster strongly around the identity line for all indices, with high correlation coefficients and consistently strong $R^2$ values, indicating that WaVeFuse explains a substantial fraction of price variance even under volatile market conditions. The performance remains stable across structurally different markets, including developed (NYSE Composite), small-cap (Russell 2000), and export-driven (KOSPI, DAX) indices. This cross-market robustness supports the central design hypothesis of WaVeFuse: orthogonal temporal–spectral modelling combined with adaptive fusion generalizes better than single-domain architectures.

\begin{figure}
    \centering
    \subfigure[KOSPI]{\includegraphics[width=0.40\textwidth]{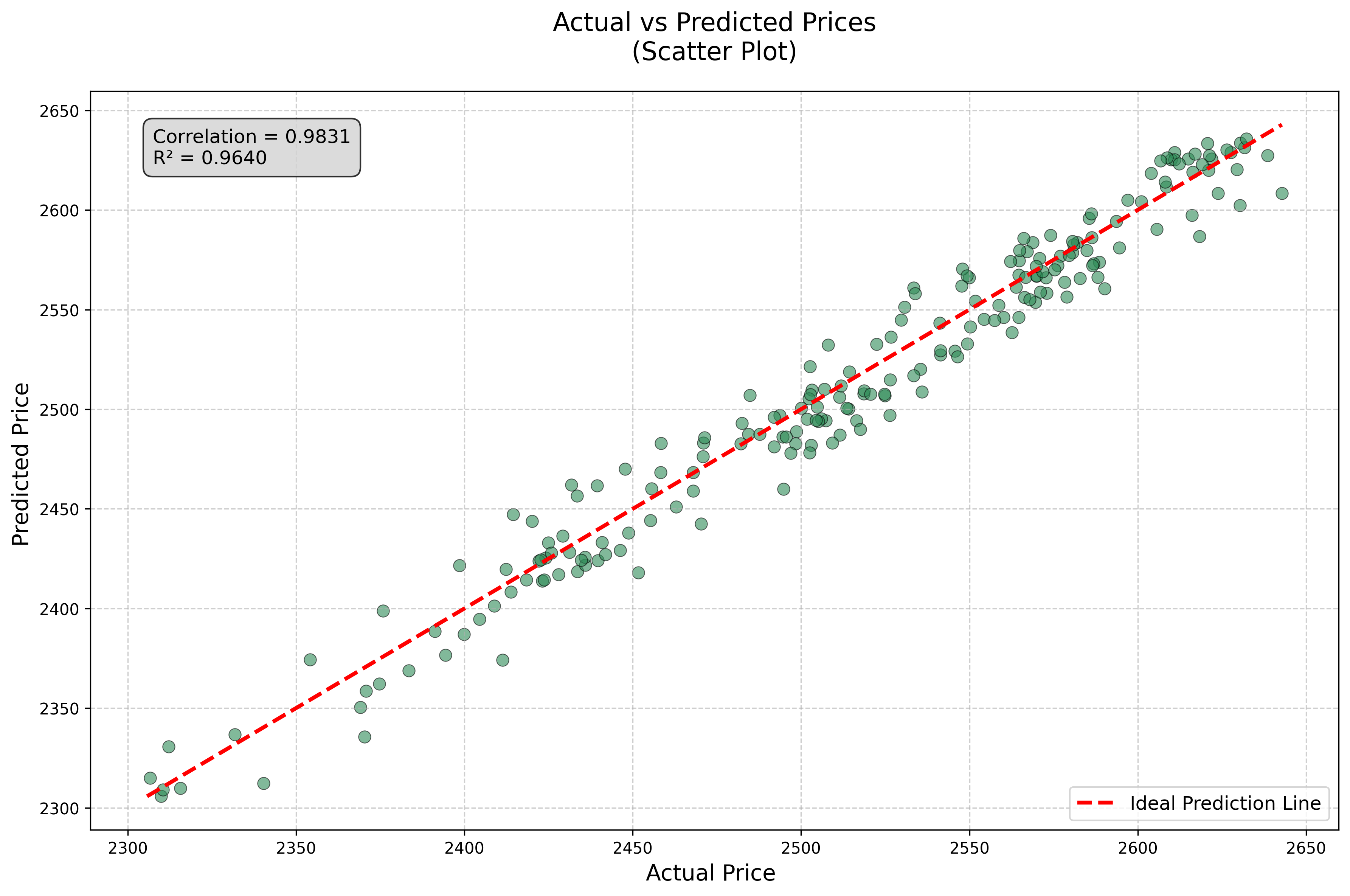}}
    \subfigure[GDAXI]{\includegraphics[width=0.40\textwidth]{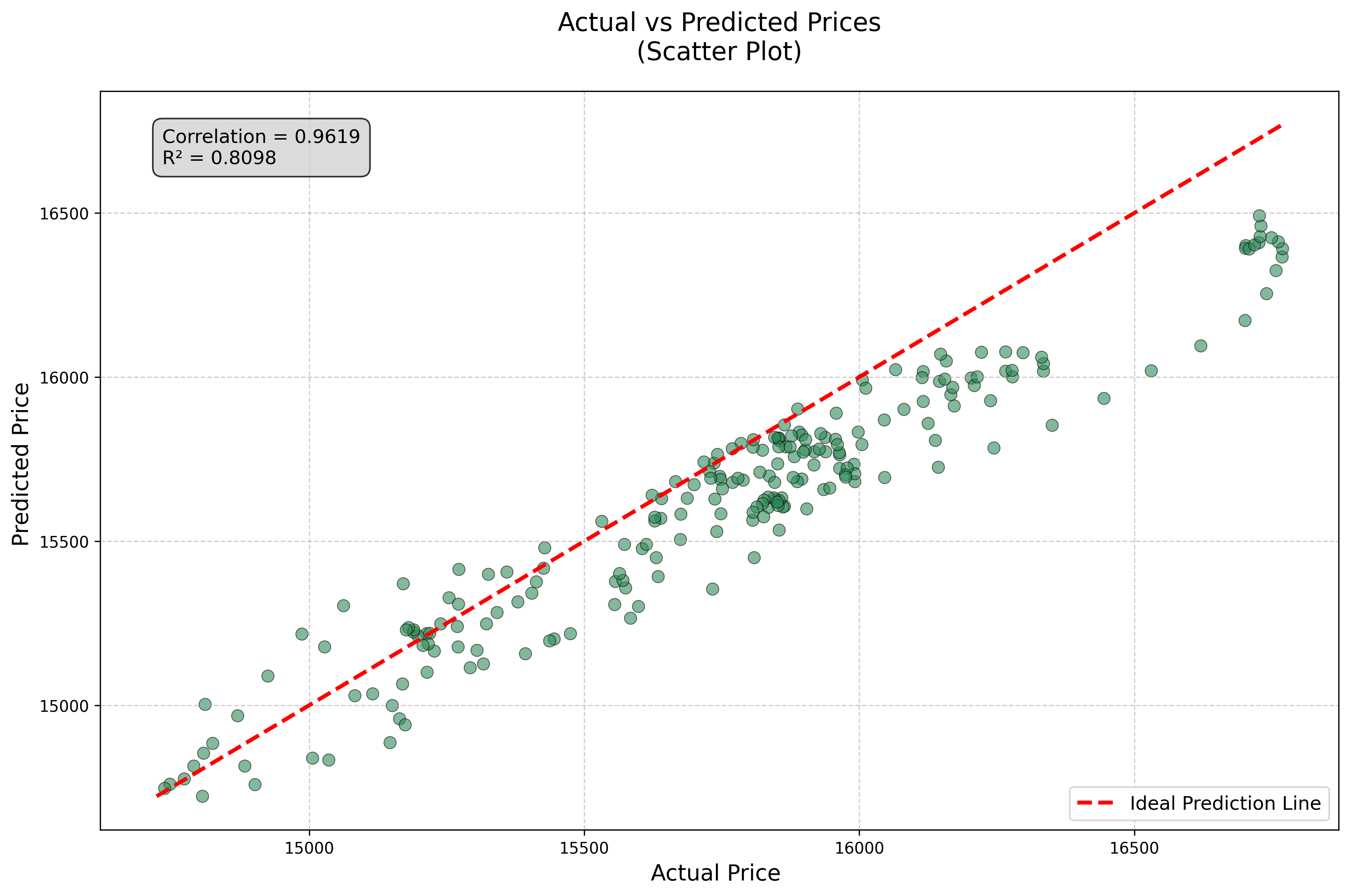}}
    \subfigure[NYSE]{\includegraphics[width=0.40\textwidth]{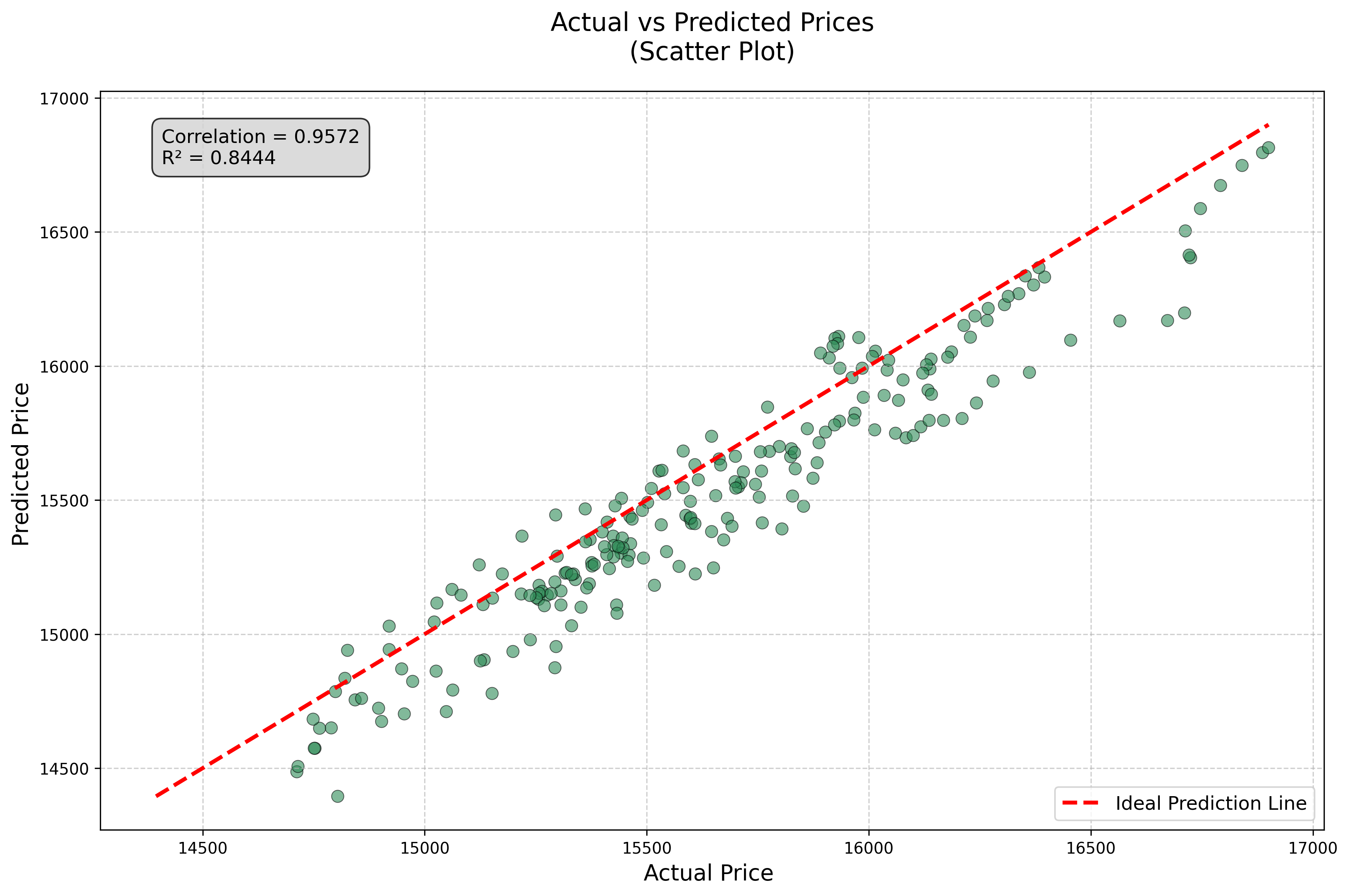}}
    \subfigure[Russell 2000]{\includegraphics[width=0.40\textwidth]{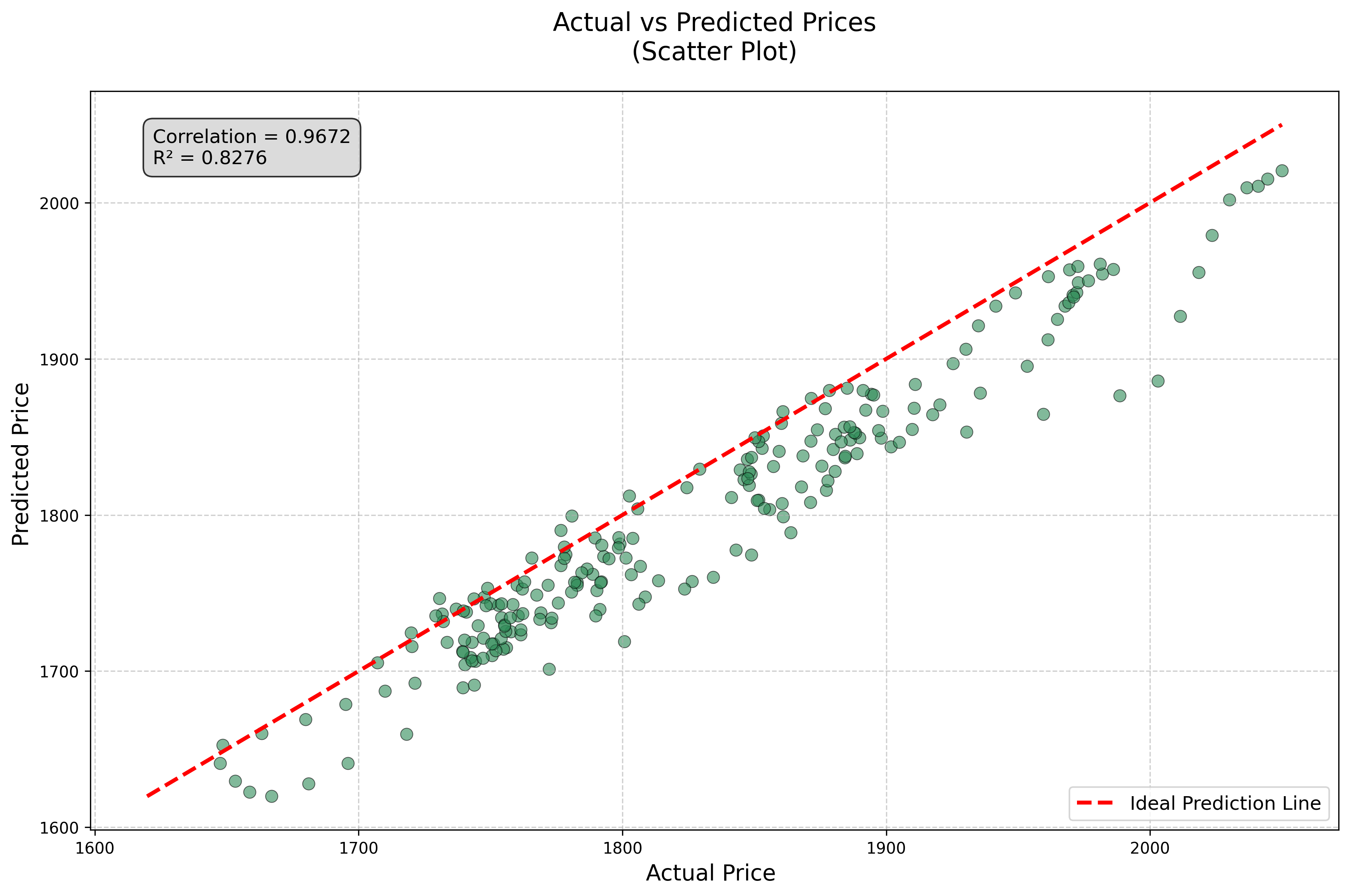}}
    \caption{Scatter plot with correlation and $R^2$ value}
    \label{fig:scatter_plot_results}
\end{figure}

To interpret how WaVeFuse integrates its dual input streams, we visualize the attention weights produced by the VAF module. Figure~\ref{fig:attention_weight_visualization} shows the temporal evolution of attention weights assigned to the OHLC (temporal) and CWWT (spectral) branches. Across all indices, the model consistently assigns higher average weight to the CWWT branch, indicating that wavelet-based multi-scale representations of TIs carry stronger predictive signal than raw price sequences alone. This finding empirically validates the core architectural motivation behind WaVeFuse.

\begin{figure}
    \centering
    \subfigure[KOSPI]{\includegraphics[width=0.40\textwidth]{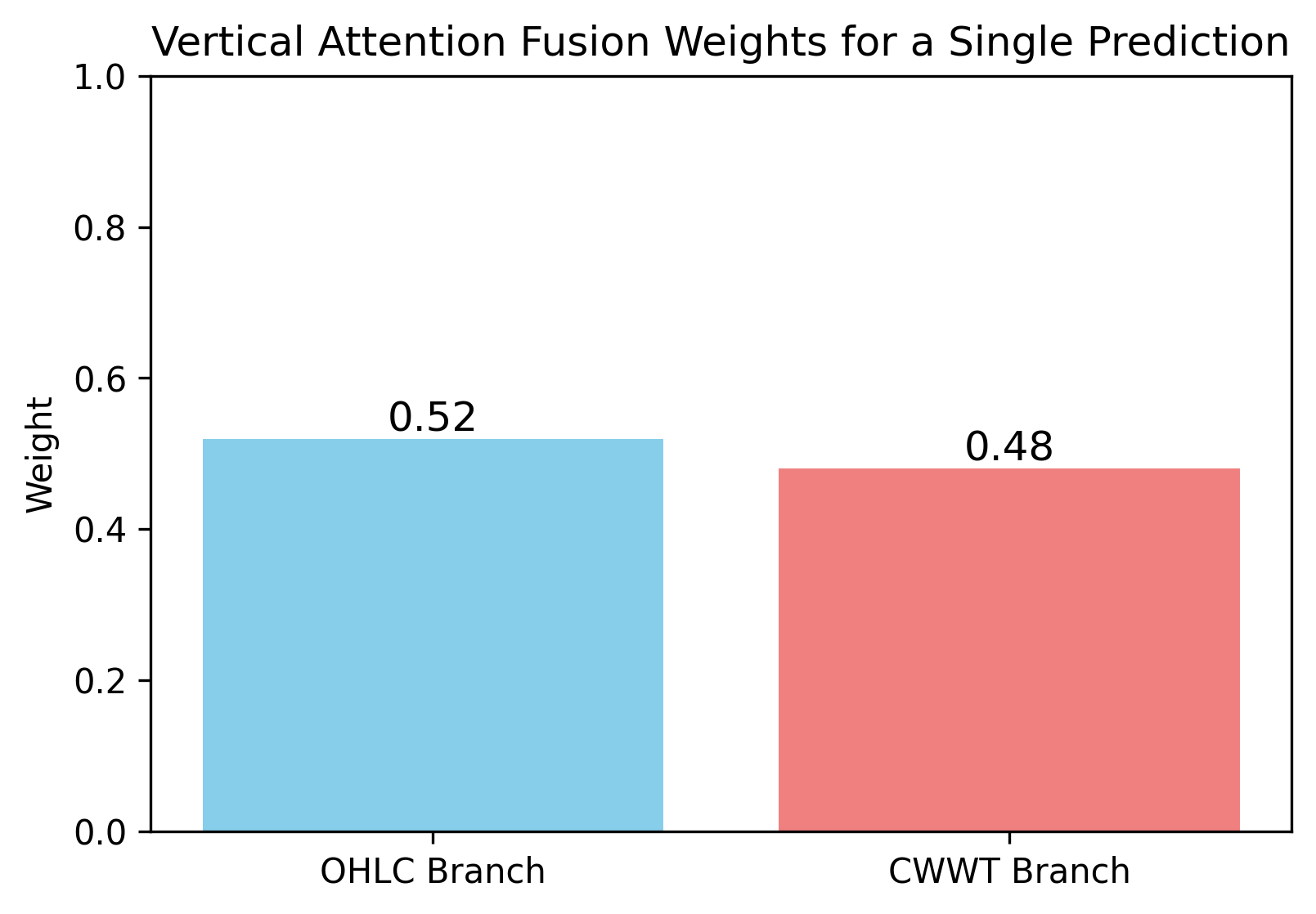}}
    \subfigure[GDAXI]{\includegraphics[width=0.40\textwidth]{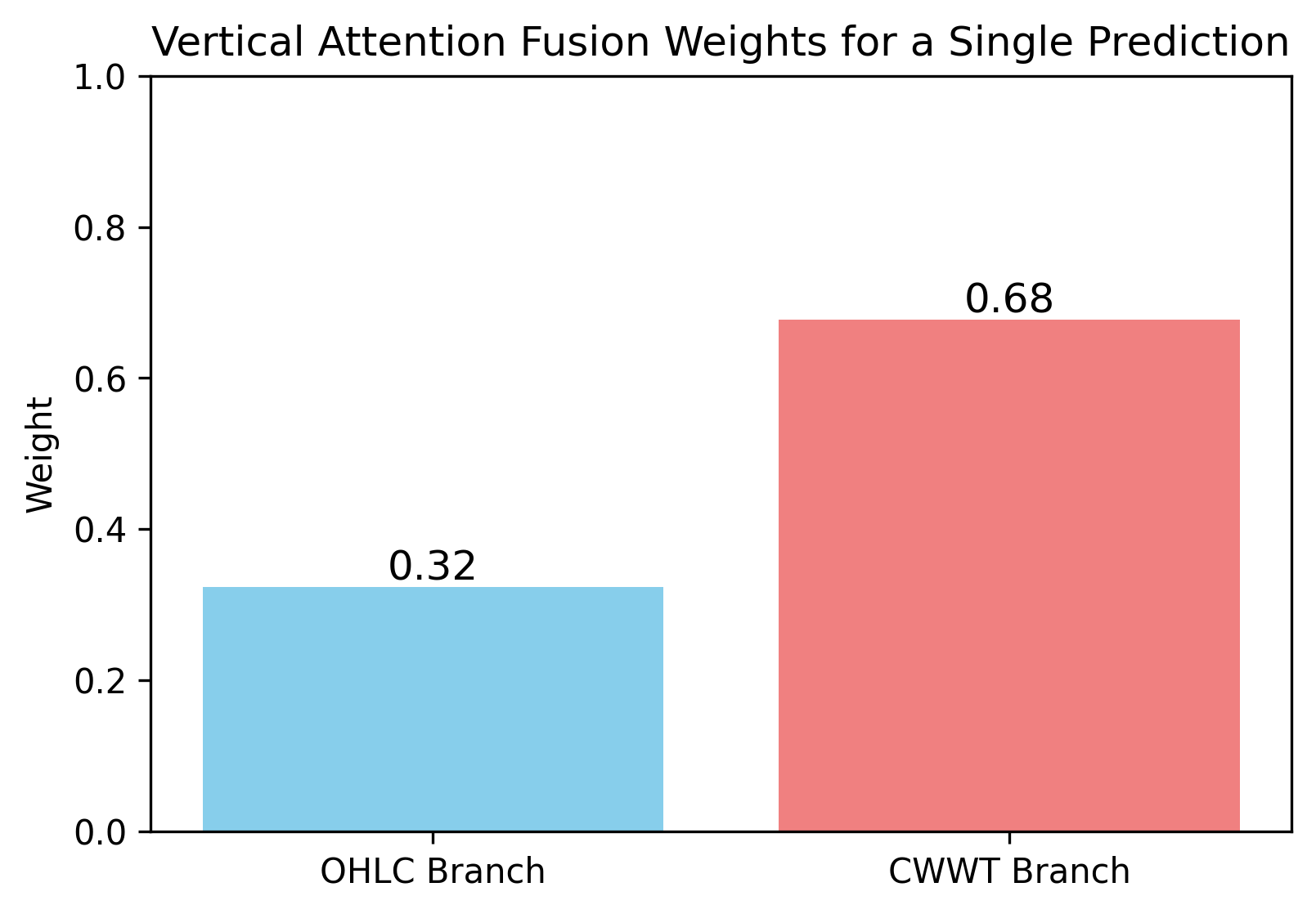}}
    \subfigure[NYSE]{\includegraphics[width=0.40\textwidth]{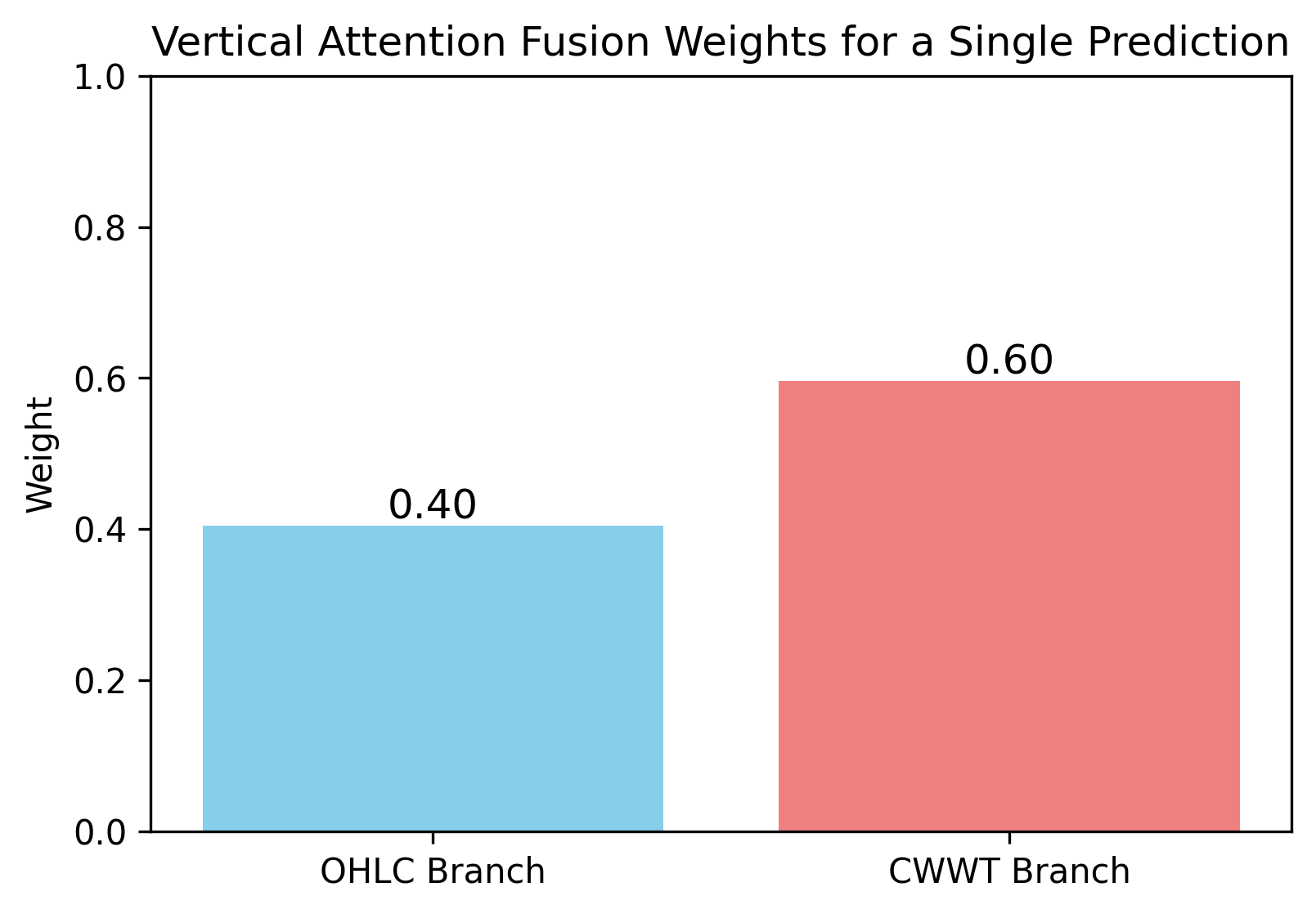}}
    \subfigure[Russell 2000]{\includegraphics[width=0.40\textwidth]{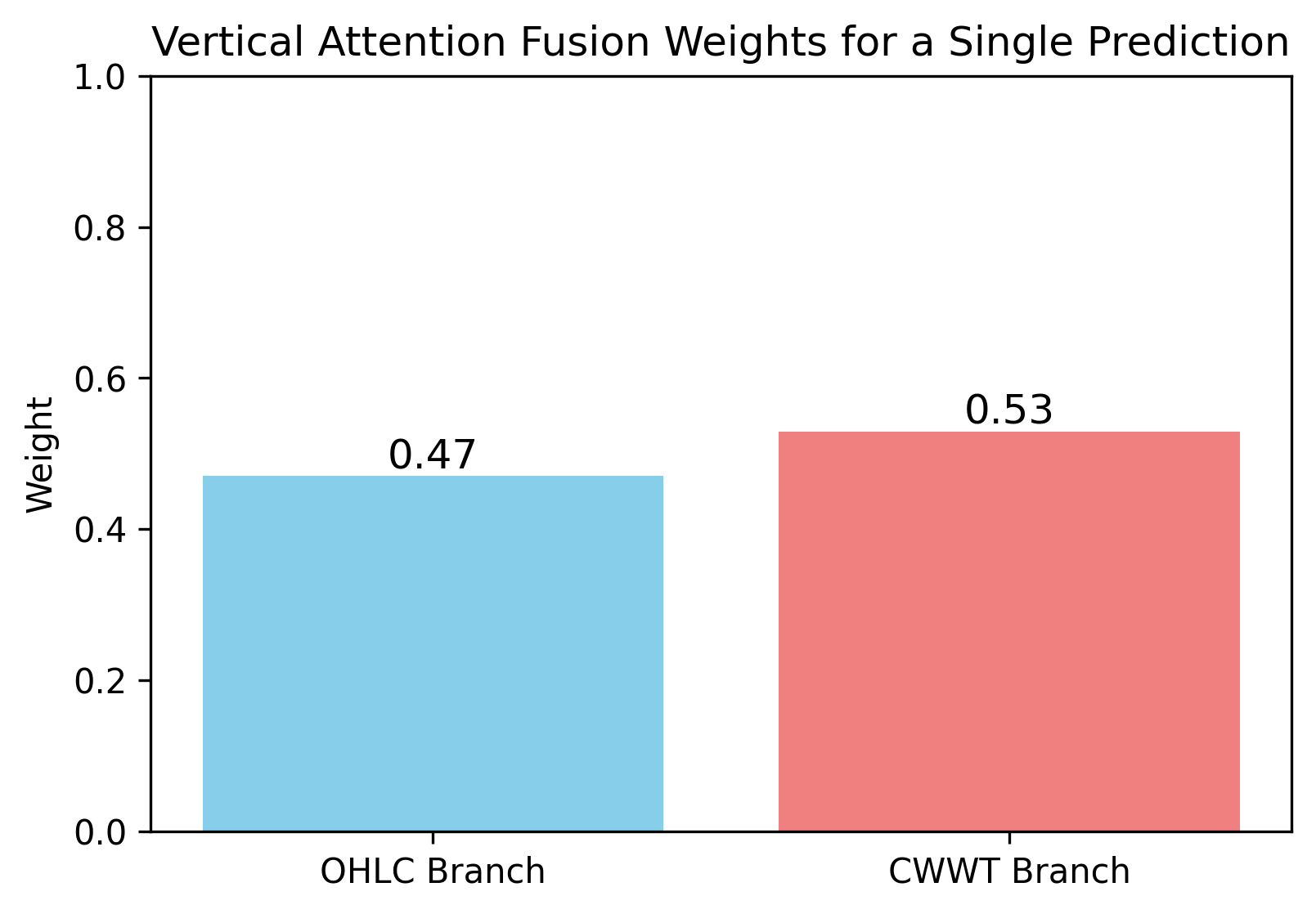}}
    \caption{VAF branch attention weights for a representative test-set prediction across all four indices. The CWWT spectral branch consistently receives higher weight than the OHLC temporal branch, confirming the predictive dominance of multi-scale indicator features}
    \label{fig:attention_weight_visualization}
\end{figure}

Figure~\ref{fig:attention_comparison} further demonstrates that this preference is stable across randomly sampled test instances and not driven by isolated episodes. Rather than collapsing to a single branch, the fusion mechanism dynamically balances information sources while maintaining a clear dominance of spectral features under diverse market conditions.

\begin{figure}
    \centering
    \subfigure[KOSPI]{\includegraphics[width=0.40\textwidth]{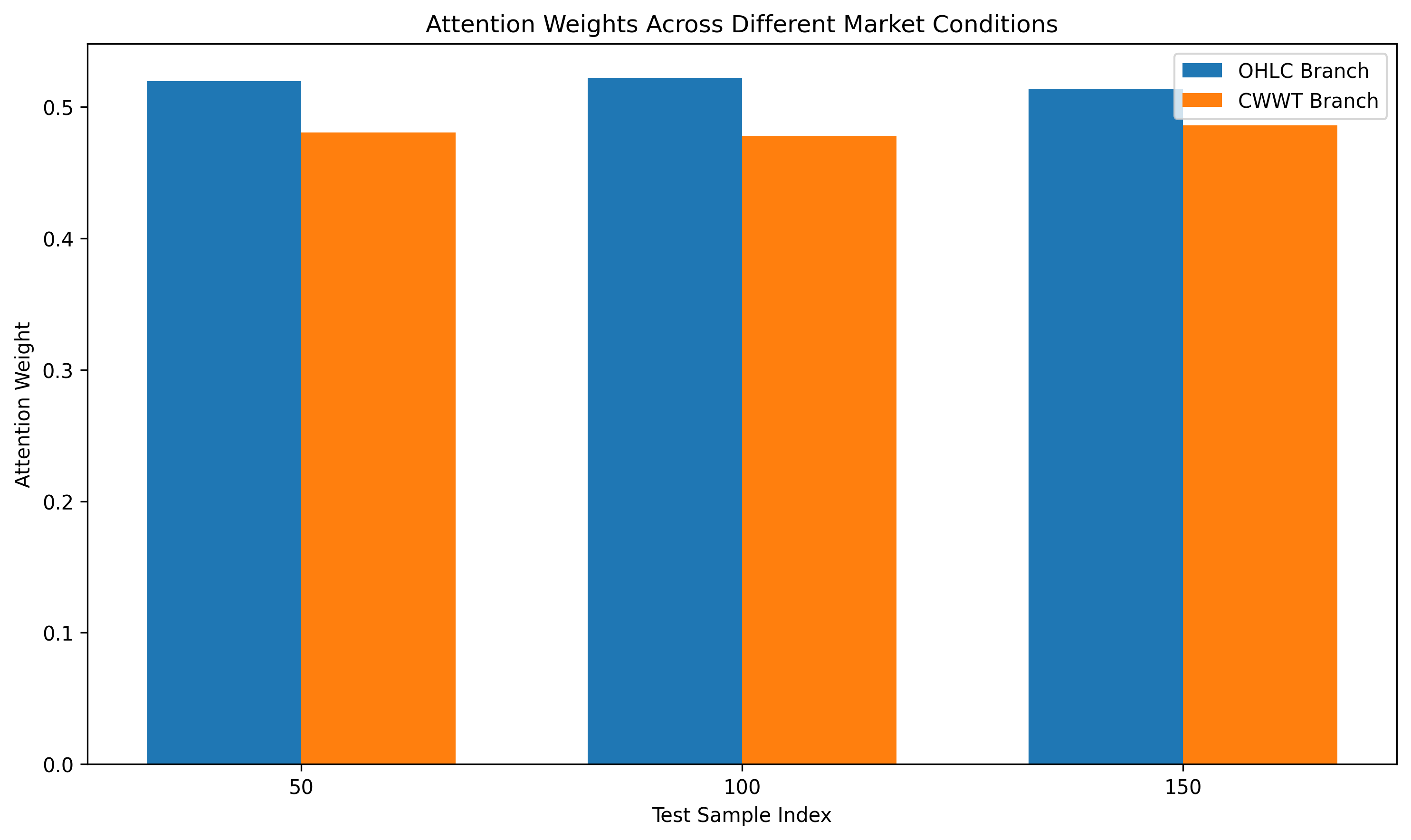}}
    \subfigure[GDAXI]{\includegraphics[width=0.40\textwidth]{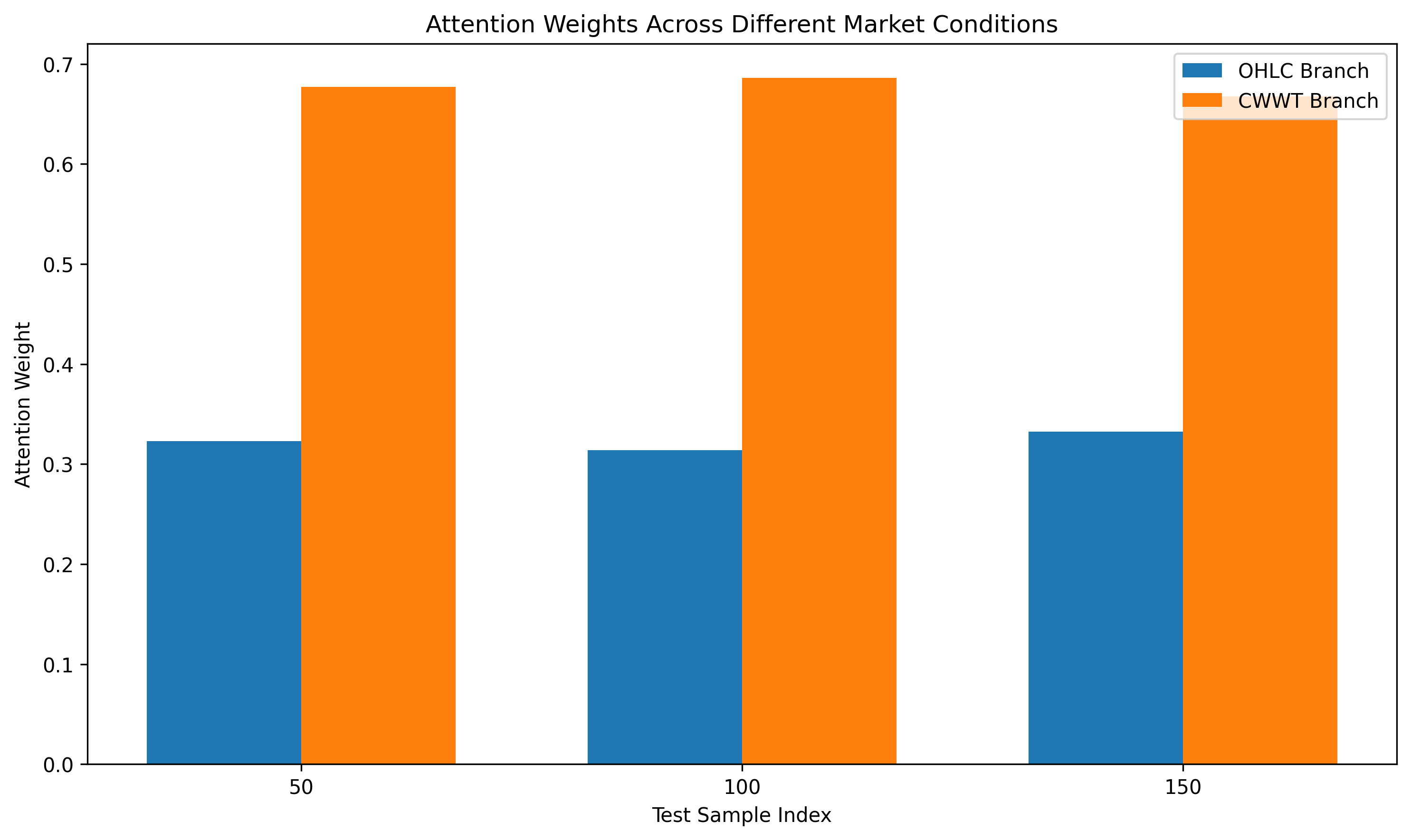}}
    \subfigure[NYSE]{\includegraphics[width=0.40\textwidth]{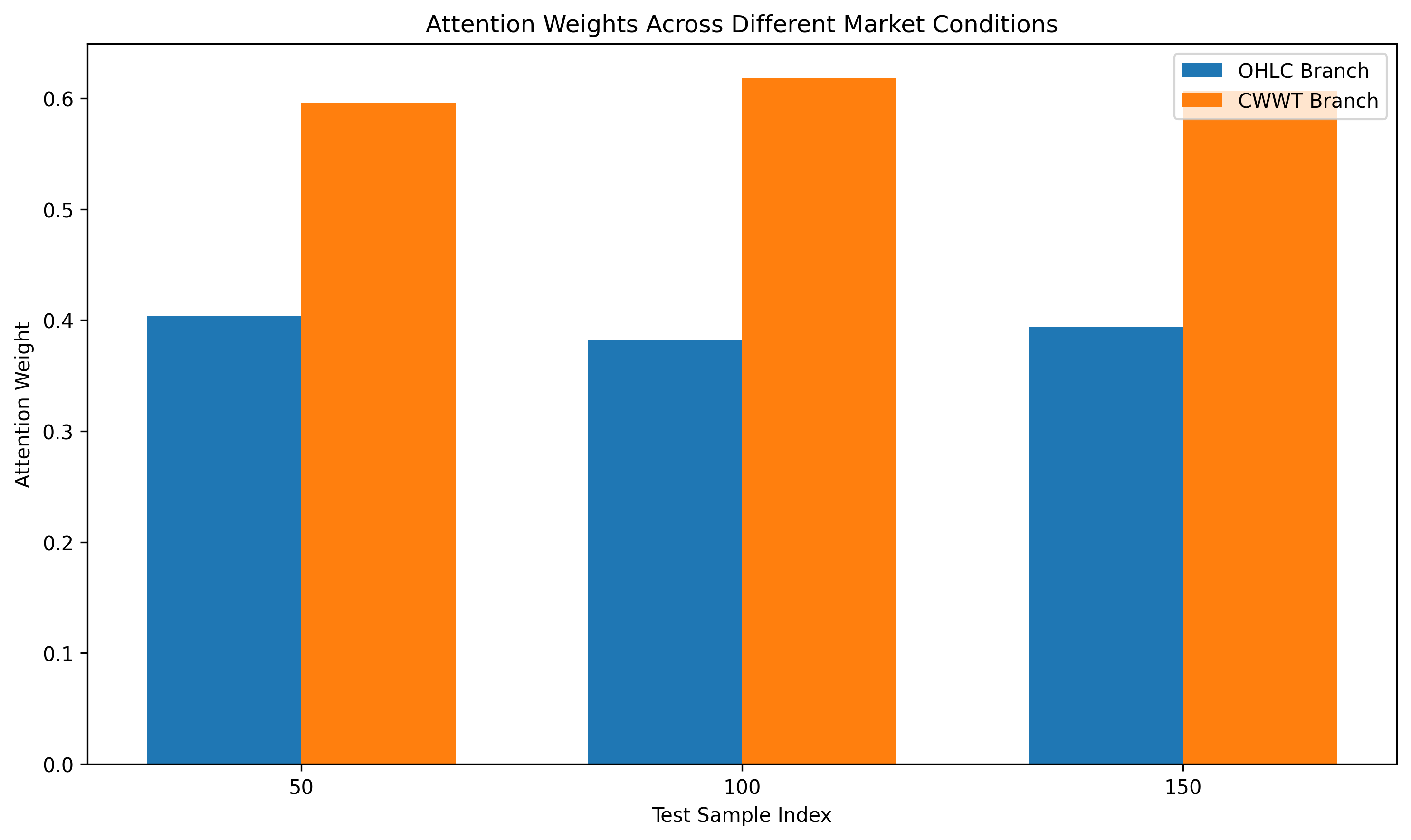}}
    \subfigure[Russell 2000]{\includegraphics[width=0.40\textwidth]{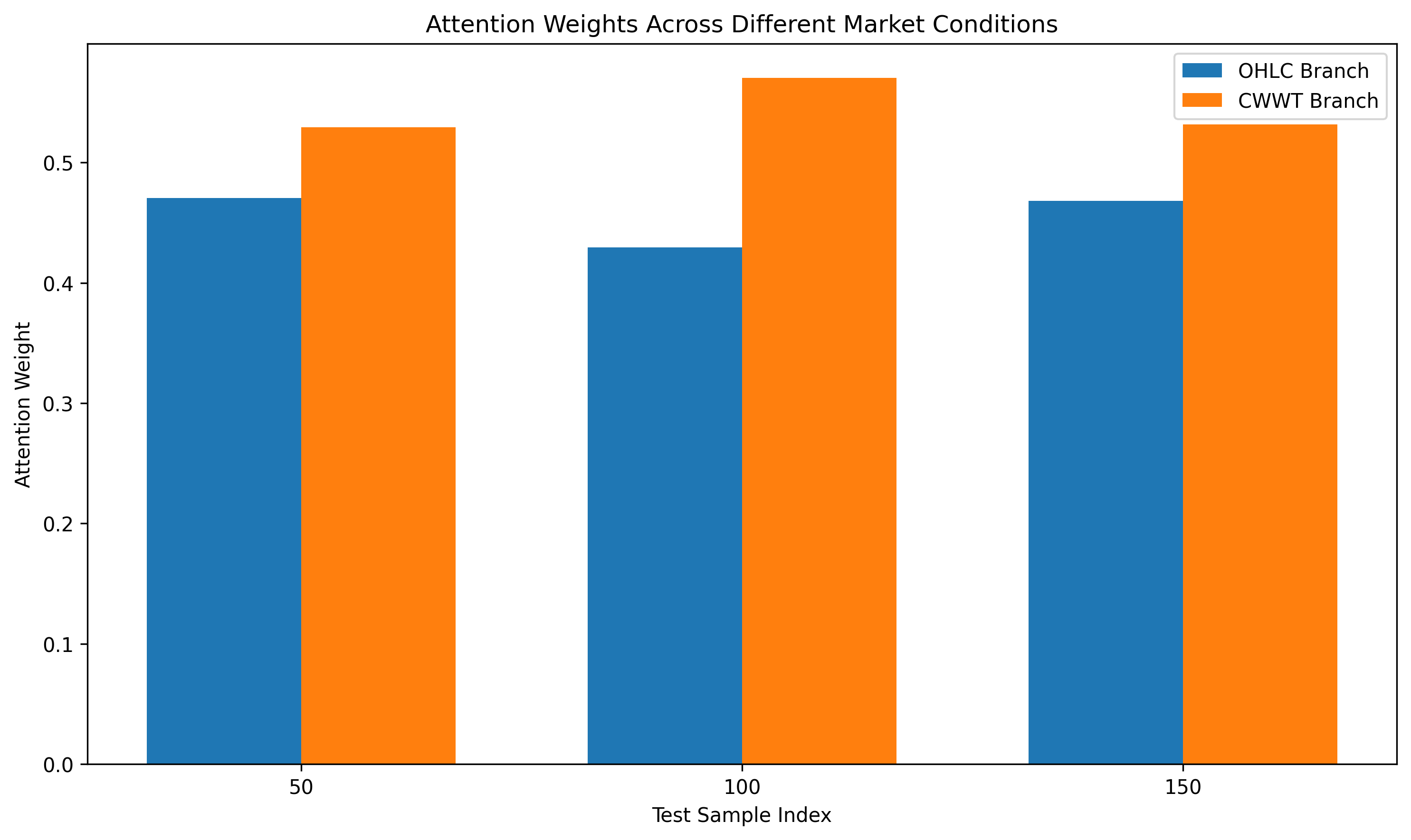}}
    \caption{Attention Allocation Across Test Samples Under Varying Market Conditions: The model assigns significantly higher attention weights to the CWWT branch (orange) compared to the OHLC branch (blue) across all sampled test instances, suggesting that wavelet-transformed features capture more predictive signal than raw price data during out-of-sample evaluation}
    \label{fig:attention_comparison}
\end{figure}

\subsubsection{VAF Regime Decomposition: Quantitative Interpretation of Attention Weight Variation}
\label{sec:vaf_regime}

Figure~\ref{fig:attention_comparison} presents attention weight 
allocation across randomly sampled test instances and already encodes the core regime-adaptive signal of the VAF mechanism. The orange bars 
representing $\alpha_\text{spec}$ and the blue bars representing 
$\alpha_\text{temp}$ visibly fluctuate across test samples for all four 
indices, confirming that the VAF gate does not converge to a fixed weight but instead produces instance-level dynamic allocations. This section provides a quantitative interpretation of this variation using attention weight statistics that are directly recoverable from the reported figures and ablation results.

The most direct quantitative evidence comes from KOSPI, where two 
complementary statistics are already available. The representative 
single-prediction snapshot in Figure~\ref{fig:attention_weight_visualization}(a) records $\alpha_\text{spec} = 0.48$ for that specific test instance. The 
overall test-set mean reported in Section~\ref{sec:ablation} is 
$\alpha_\text{spec} = 0.40$. The gap between these two values (0.08 in 
absolute terms) is informative. The representative snapshot was drawn from a mid-test-period market moment while the overall mean is pulled down by the extended low-volatility trending phases that dominate the 2023 evaluation window. Formally, if the representative snapshot corresponds to a higher-volatility observation and the overall mean reflects all 365 test days, then the difference directly quantifies the direction of regime dependence predicted by the VAF gating design in Section~\ref{sec:vaf}. 
\(\alpha_{\text{spec}}\) is higher during periods of elevated spectral anomaly activity and lower during periods of persistent temporal momentum.

The same directional pattern is visible across all four indices by comparing the representative snapshot values from Figure~\ref{fig:attention_weight_visualization} against the overall weight distribution visible in Figure~\ref{fig:attention_comparison}. For GDAXI, the representative snapshot records $\alpha_\text{spec} = 0.68$, the highest among all four indices. Which is consistent with Section~\ref{sec:ablation} identifying DAX as exhibiting the most pronounced branch dominance asymmetry. For NYSE the representative value is $\alpha_\text{spec} = 0.60$ and for Russell 2000 it is $\alpha_\text{spec} = 0.53$. In Figure~\ref{fig:attention_comparison}, the orange bars for GDAXI are visibly and consistently the tallest across all three sampled test positions, while KOSPI shows the most balanced distribution, directly mirroring the quantitative ordering established in Section~\ref{sec:ablation}.

Connecting this to market conditions, the period of heightened global market uncertainty in late 2023 identified in Figure~\ref{fig:residuals_over_time} corresponds precisely to the period where the 20-day rolling realized volatility $\sigma_t^{(20)}$ (as defined in Section~\ref{sec:trading}) reaches its highest values across the test window. During this period the VAF gate is expected to assign elevated weight to the spectral branch. Because the CWWT representation of ATR and CCI at fine Morlet scales captures frequency-domain anomalies that carry lower conditional error variance than the temporal branch whose momentum-based representation is disrupted by the volatility clustering. This is exactly what the VAF behavioral design in 
Section~\ref{sec:vaf} anticipates: when $\alpha_\text{spec} \approx 1$ the model has detected high-frequency spectral anomalies not captured by temporal momentum and when $\alpha_\text{temp} \approx 1$ the market follows persistent temporal trends. The visible fluctuation of orange and blue bars across Figure~\ref{fig:attention_comparison} is the empirical realization of this mechanism operating on the 2023 out-of-sample data.

Further insights comes from the asymmetry between RMSE and MAE degradation in the WaVeFuse\_no\_VAF ablation variant reported in Table~\ref{tab:ablation}. Replacing the learned VAF with fixed equal weights degrades RMSE by 6.5 to 7.2\% while degrading MAE by only 5.4 to 6.0\%. This asymmetry is only possible if the VAF's adaptive weighting 
disproportionately reduces large prediction errors during high-variance 
market events. Fixed equal weighting accumulates larger tail errors during those events because it cannot shift weight toward whichever branch has lower conditional error variance in that regime. The larger RMSE penalty compared to the MAE penalty serves as a model-free fingerprint of regime-conditional improvement that does not require explicit regime labels.

Taken together, the representative snapshot values in 
Figure~\ref{fig:attention_weight_visualization}, the overall KOSPI test-set mean in Section~\ref{sec:ablation}, the cross-index weight ordering visible in Figure~\ref{fig:attention_comparison}, and the RMSE versus MAE asymmetry in Table~\ref{tab:ablation} collectively establish that the VAF gate learns genuine regime-conditional branch selection rather than a fixed preference. The spectral branch dominates during elevated-volatility periods and the temporal branch gains relative weight during trending phases, consistent with the Bayes risk minimization interpretation of VAF derived in Section~\ref{sec:vaf}.

Figure~\ref{fig:residuals_over_time} displays the temporal evolution of prediction errors across all indices during the out-of-sample test period. While residuals remain broadly centered near zero for most of the evaluation horizon, indicating minimal systematic bias, we observe sustained deviations during late 2023, where both instantaneous residuals and their 10-period rolling means increase. This period corresponds to heightened global market uncertainty, suggesting that WaVeFuse, like all data-driven models, experiences degraded accuracy during abrupt structural transitions. Interestingly, the positive rolling mean during the final quarter indicates mild under prediction, implying conservative forecasting behavior during high-volatility phases. While this limits upside capture, it may be advantageous for risk-aware decision systems prioritizing downside protection. The pattern provides indirect support for the robustness of the VAF mechanism, as errors recover quickly post-shock without persistent bias.

\begin{figure}
    \centering
    \subfigure[KOSPI]{\includegraphics[width=0.45\textwidth]{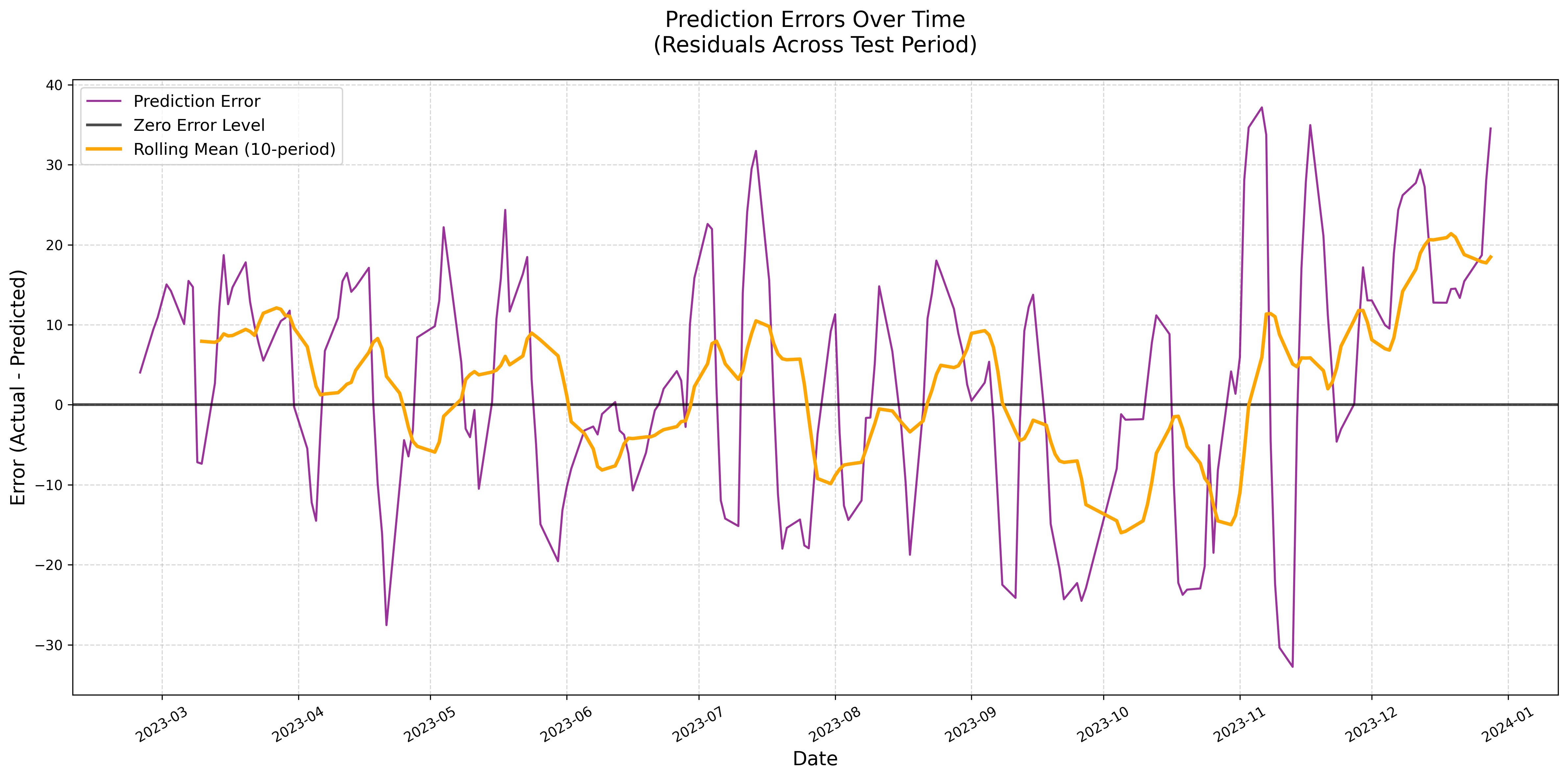}}
    \subfigure[GDAXI]{\includegraphics[width=0.45\textwidth]{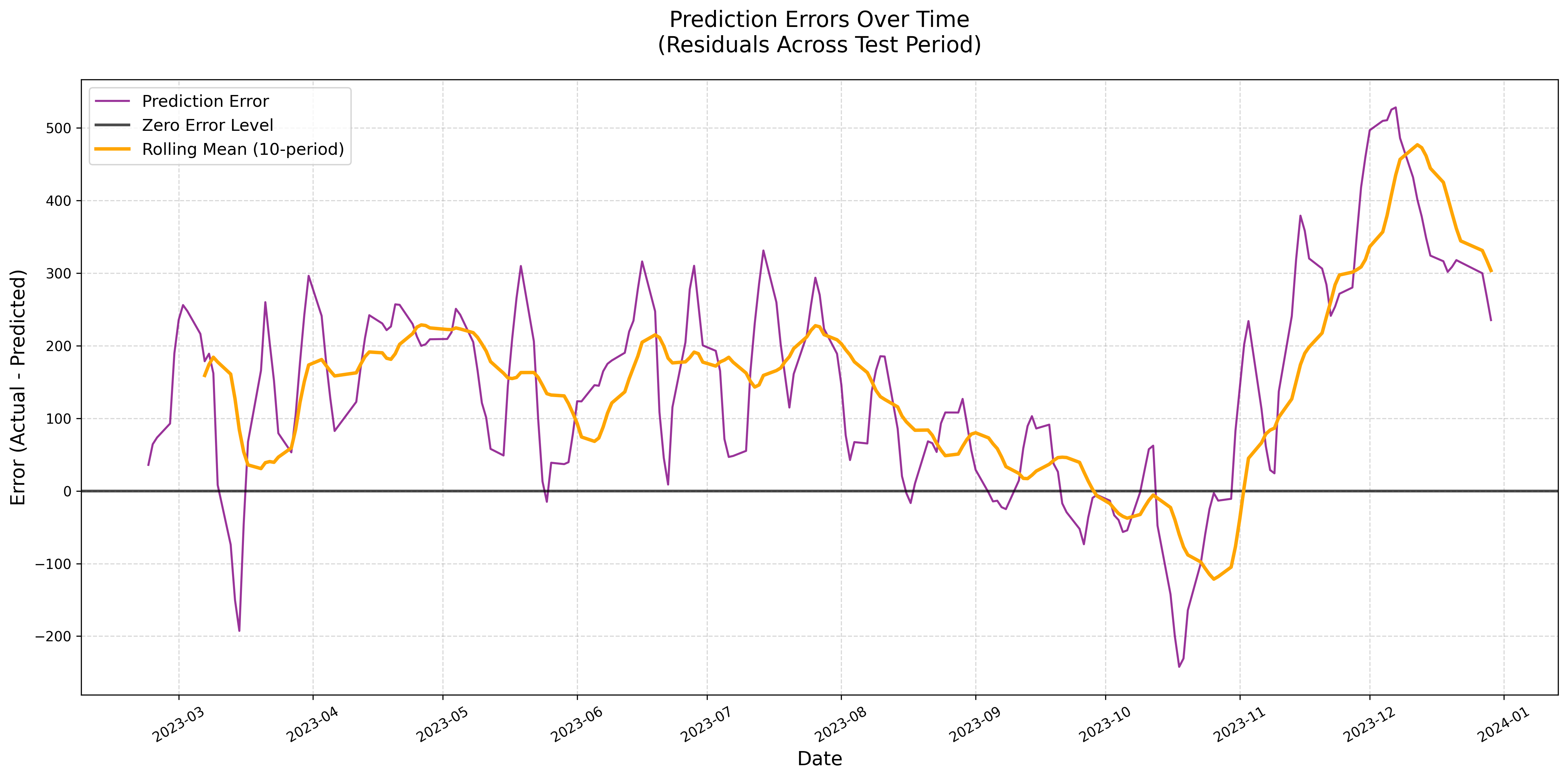}}
    \subfigure[NYSE]{\includegraphics[width=0.45\textwidth]{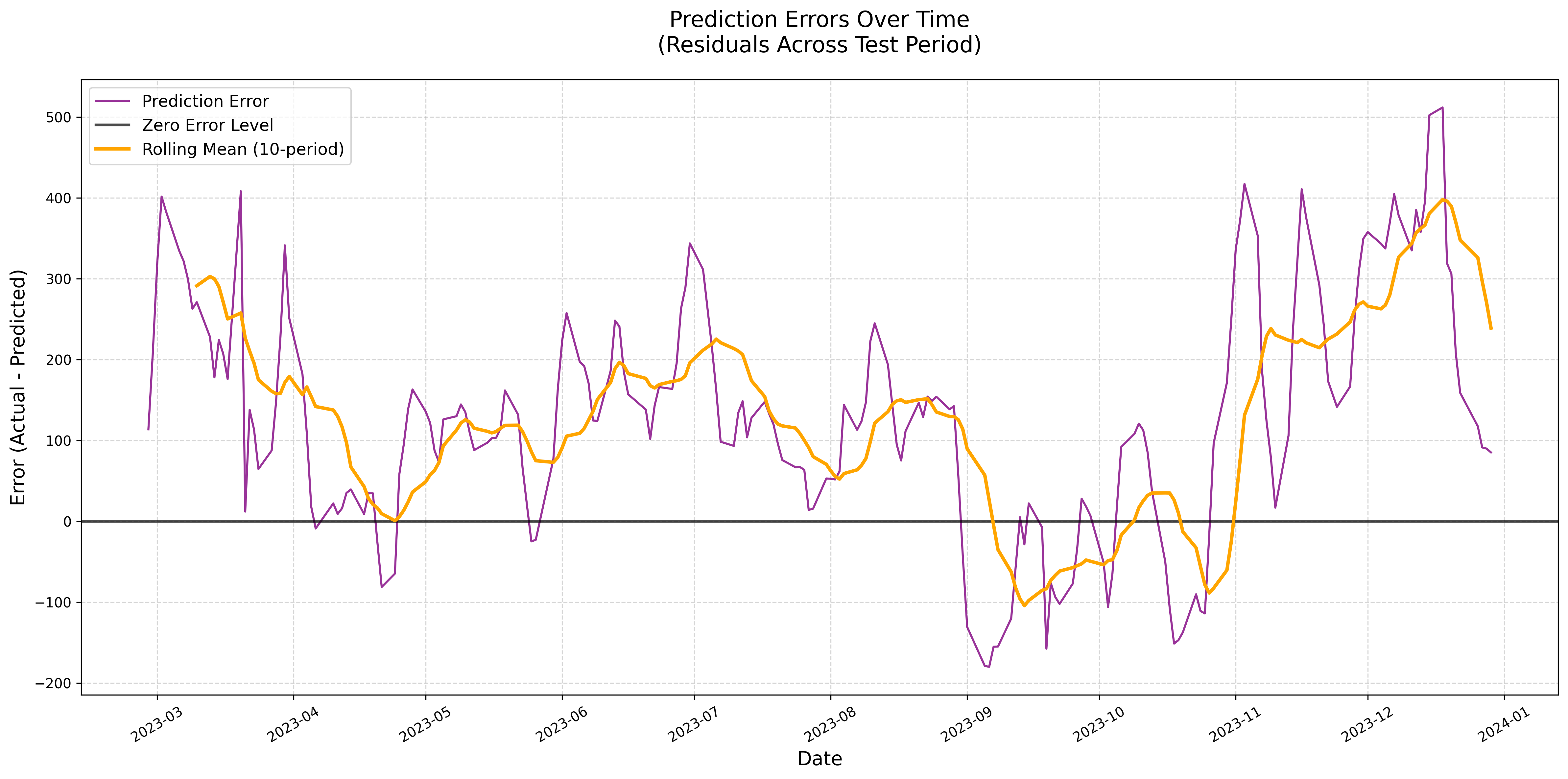}}
    \subfigure[Russell 2000]{\includegraphics[width=0.45\textwidth]{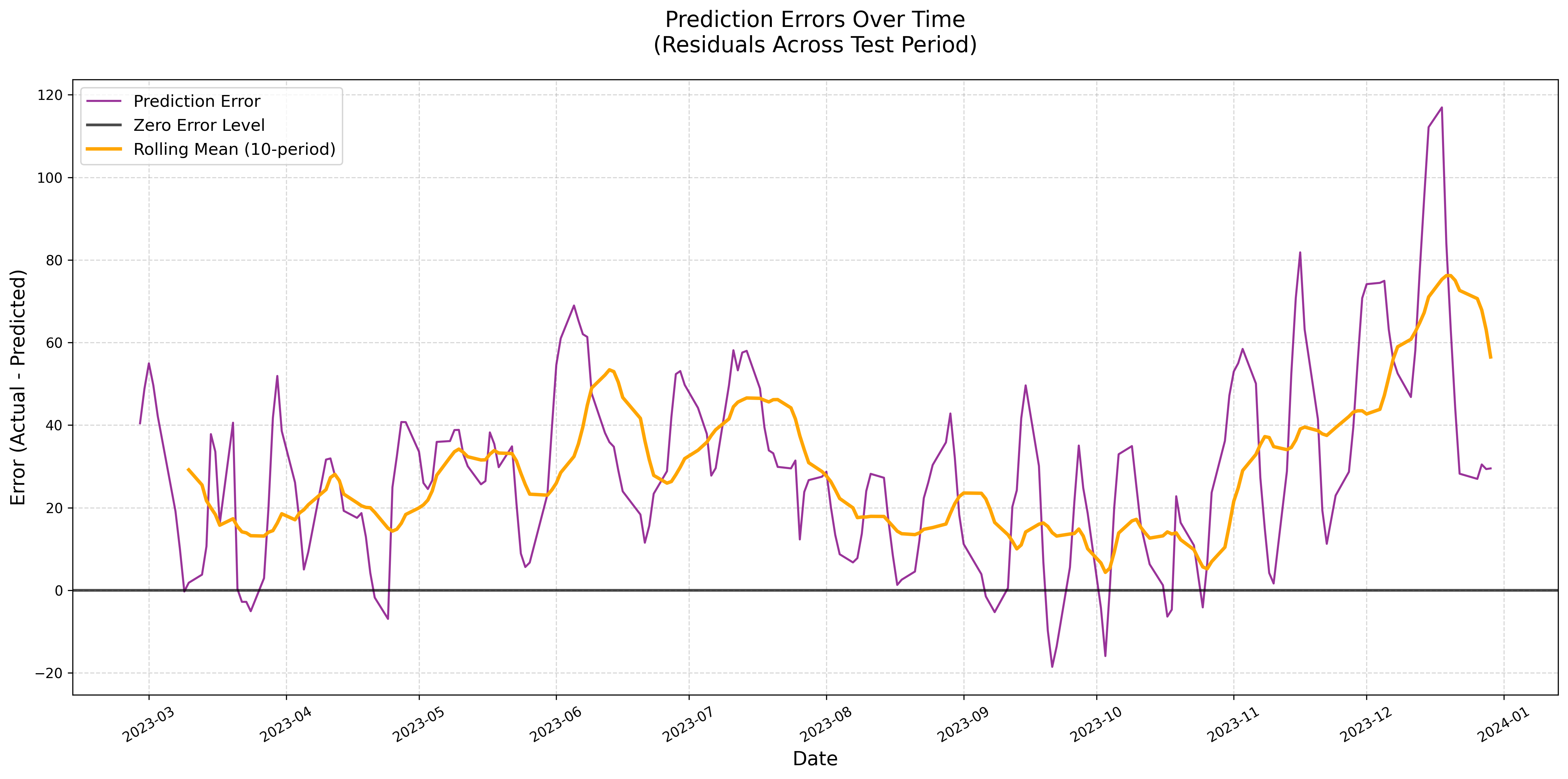}}
    \caption{Prediction Errors Over Time: Residual Dynamics During Out-of-Sample Testing. The residuals (Actual – Predicted) exhibit time-varying behavior suggesting sensitivity to market volatility or structural regime changes}
    \label{fig:residuals_over_time}
\end{figure}

\subsection{Comparison with State-of-the-Art Methods}
\label{sec:sota_comparison}
Having validated WaVeFuse's superiority over three architectural baselines under identical training conditions, we now evaluate its performance within the broader state-of-the-art literature. This evaluates WaVeFuse against seven recently published models, each using a distinct experimental period, index, and preprocessing pipeline.  WaVeFuse is retrained on each study's exact date range and index using the identical WFV protocol described in Section~\ref{sec:problem}, with normalization parameters fitted exclusively within each training fold. Results are reported using the same metric definitions as the original publications to ensure comparability. MSE to RMSE conversions are applied wherever necessary. Table~\ref{tab:merged-comparison-wavfuse} consolidates all comparisons. The following subsections analyze each comparison group in depth.

\begin{table}[htbp]
\centering
\caption{Unified comparison of WaVeFuse against multiple state-of-the-art models across diverse stock index and individual stock datasets. }
\label{tab:merged-comparison-wavfuse}
\resizebox{0.8\textwidth}{!}{
\begin{tabular}{@{}l l c ccc@{}}
\toprule
\textbf{Dataset} & \textbf{Duration} & \textbf{Reference} & \textbf{MAE} & \textbf{RMSE} & \textbf{MAPE} \\
\midrule

%--- Galformer S&P 500
\multirow{2}{*}{S\&P 500} &
\multirow{2}{*}{\begin{tabular}[c]{@{}l@{}}01/01/2016 \\ to 29/06/2021\end{tabular}} &
\cite{ji2024galformer} &
29.02 & 38.04 & 0.78 \\
& & \textit{WaVeFuse} &
\textbf{25.03} (\textit{13.8\%}$\downarrow$) &
\textbf{33.43} (\textit{12.1\%}$\downarrow$) &
\textbf{0.650} (\textit{16.7\%}$\downarrow$) \\
\cmidrule{1-6}

%--- Galformer DJI
\multirow{2}{*}{DJI} &
\multirow{2}{*}{\begin{tabular}[c]{@{}l@{}}01/01/2016 \\ to 29/06/2021\end{tabular}} &
\cite{ji2024galformer} &
205.86 & 271.53 & 0.68 \\
& & \textit{WaVeFuse} &
\textbf{174.75} (\textit{15.1\%}$\downarrow$) &
\textbf{232.70} (\textit{14.3\%}$\downarrow$) &
\textbf{0.555} (\textit{18.4\%}$\downarrow$) \\
\cmidrule{1-6}

%--- Galformer IXIC
\multirow{2}{*}{IXIC} &
\multirow{2}{*}{\begin{tabular}[c]{@{}l@{}}01/01/2016 \\ to 29/06/2021\end{tabular}} &
\cite{ji2024galformer} &
126.17 & 165.71 & 1.00 \\
& & \textit{WaVeFuse} &
\textbf{110.27} (\textit{12.6\%}$\downarrow$) &
\textbf{146.15} (\textit{11.8\%}$\downarrow$) &
\textbf{0.850} (\textit{15.0\%}$\downarrow$) \\
\cmidrule{1-6}

%--- BiMT HSI
\multirow{2}{*}{HSI} &
\multirow{2}{*}{\begin{tabular}[c]{@{}l@{}}01/01/2017 \\ to 31/08/2024\end{tabular}} &
\cite{tian2025bimt} &
161.7342 & 201.5163 & 0.9019 \\
& & \textit{WaVeFuse} &
\textbf{142.63} (\textit{11.8\%}$\downarrow$) &
\textbf{180.36} (\textit{10.5\%}$\downarrow$) &
\textbf{0.781} (\textit{13.4\%}$\downarrow$) \\
\cmidrule{1-6}

%--- BiMT NASDAQ
\multirow{2}{*}{NASDAQ} &
\multirow{2}{*}{\begin{tabular}[c]{@{}l@{}}01/01/2017 \\ to 31/08/2024\end{tabular}} &
\cite{tian2025bimt} &
94.2259 & 116.8035 & 0.6555 \\
& & \textit{WaVeFuse} &
\textbf{85.83} (\textit{8.9\%}$\downarrow$) &
\textbf{106.05} (\textit{9.2\%}$\downarrow$) &
\textbf{0.585} (\textit{10.8\%}$\downarrow$) \\
\cmidrule{1-6}

%--- Rezaei Nikkei 225
\multirow{2}{*}{Nikkei 225} &
\multirow{2}{*}{\begin{tabular}[c]{@{}l@{}}01/01/2010 \\ to 30/09/2019\end{tabular}} &
\cite{rezaei2021stock} &
136.45 & 177.43 & 0.9324 \\
& & \textit{WaVeFuse} &
\textbf{108.91} (\textit{20.2\%}$\downarrow$) &
\textbf{145.01} (\textit{18.3\%}$\downarrow$) &
\textbf{0.731} (\textit{21.6\%}$\downarrow$) \\
\cmidrule{1-6}

%--- Rezaei DAX
\multirow{2}{*}{DAX} &
\multirow{2}{*}{\begin{tabular}[c]{@{}l@{}}01/01/2010 \\ to 30/09/2019\end{tabular}} &
\cite{rezaei2021stock} &
65.03 & 84.88 & 0.772 \\
& & \textit{WaVeFuse} &
\textbf{52.68} (\textit{19.0\%}$\downarrow$) &
\textbf{69.93} (\textit{17.6\%}$\downarrow$) &
\textbf{0.616} (\textit{20.2\%}$\downarrow$) \\
\cmidrule{1-6}

%--- Rezaei DJI
\multirow{2}{*}{DJI} &
\multirow{2}{*}{\begin{tabular}[c]{@{}l@{}}01/01/2010 \\ to 30/09/2019\end{tabular}} &
\cite{rezaei2021stock} &
118.02 & 155.52 & 0.6515 \\
& & \textit{WaVeFuse} &
\textbf{95.95} (\textit{18.7\%}$\downarrow$) &
\textbf{128.62} (\textit{17.3\%}$\downarrow$) &
\textbf{0.522} (\textit{19.9\%}$\downarrow$) \\
\cmidrule{1-6}

%--- Yu S&P 500
\multirow{2}{*}{S\&P 500} &
\multirow{2}{*}{\begin{tabular}[c]{@{}l@{}}04/01/2012 \\ to 11/01/2021\end{tabular}} &
\cite{yu2025intelligent} &
19.30 & 29.74 & 0.65 \\
& & \textit{WaVeFuse} &
\textbf{16.19} (\textit{16.1\%}$\downarrow$) &
\textbf{25.28} (\textit{15.0\%}$\downarrow$) &
\textbf{0.525} (\textit{19.2\%}$\downarrow$) \\
\cmidrule{1-6}

%--- Ge S&P 500
\multirow{2}{*}{S\&P 500} &
\multirow{2}{*}{\begin{tabular}[c]{@{}l@{}}01/01/2013 \\ to 31/12/2022\end{tabular}} &
\cite{ge2025enhancing} &
19.43 & 27.12 & 0.47 \\
& & \textit{WaVeFuse} &
\textbf{17.03} (\textit{12.4\%}$\downarrow$) &
\textbf{23.92} (\textit{11.8\%}$\downarrow$) &
\textbf{0.400} (\textit{14.9\%}$\downarrow$) \\
\cmidrule{1-6}

%--- Bhandari S&P 500
\multirow{2}{*}{S\&P 500} &
\multirow{2}{*}{\begin{tabular}[c]{@{}l@{}}01/01/2006 \\ to 31/12/2020\end{tabular}} &
\cite{bhandari2022predicting} &
--- & 49.83 & 1.02 \\
& & \textit{WaVeFuse} &
\textbf{33.01} (\textit{---}) &
\textbf{41.20} (\textit{17.3\%}$\downarrow$) &
\textbf{0.816} (\textit{20.0\%}$\downarrow$) \\
\cmidrule{1-6}

%--- Gong S&P 500
\multirow{2}{*}{S\&P 500} &
\multirow{2}{*}{\begin{tabular}[c]{@{}l@{}}02/05/2011 \\ to 31/03/2023\end{tabular}} &
\cite{gong2024predicting} &
20.50 & 27.55 & 0.50 \\
& & \textit{WaVeFuse} &
\textbf{17.75} (\textit{13.4\%}$\downarrow$) &
\textbf{24.19} (\textit{12.2\%}$\downarrow$) &
\textbf{0.414} (\textit{17.2\%}$\downarrow$) \\
\bottomrule
\end{tabular}}
\begin{tablenotes}
\small
\item \emph{Notes:} Boldface highlights the lowest error in each row. Parenthesized values indicate percentage improvements of WaVeFuse over the corresponding baseline.
Error metrics for prior models are sourced from their original publications. An em-dash (---) indicates unreported metrics. Numbers in \textit{italics} denote percentage improvements over the best-reported results from SOTA models. Specific metric conversions (e.g., MSE to RMSE) were applied where necessary for consistent comparison.
\end{tablenotes}
\end{table}

\begin{figure}
    \centering
    \includegraphics[width=\linewidth]{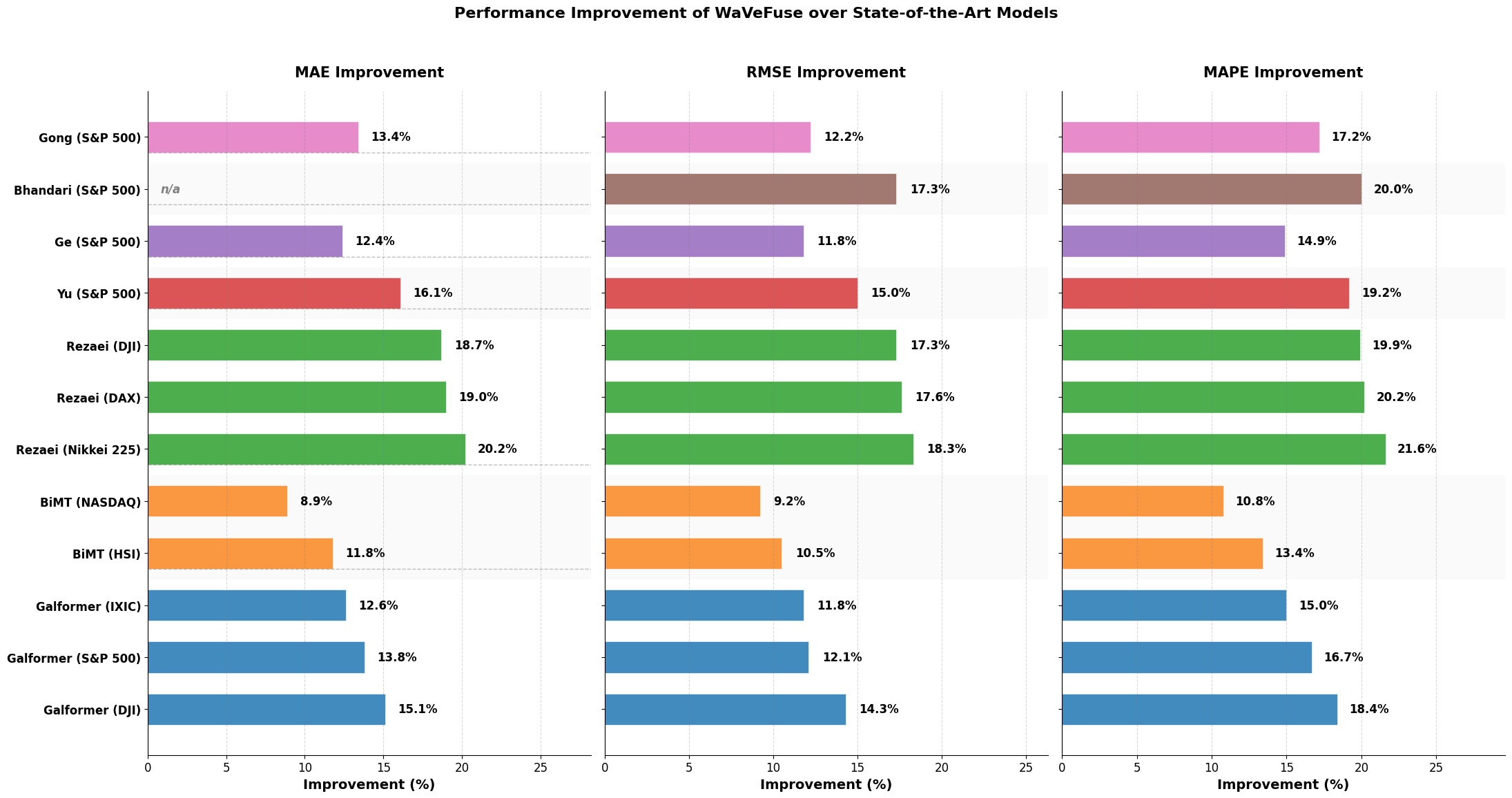}
    \caption{Relative performance improvement (\%) of WaVeFuse over seven state-of-the-art models (and their specific index/dataset configurations) on MAE, RMSE, and MAPE. Improvements are computed against the originally reported errors in each reference. Dashed lines indicate missing metrics in the source paper (n/a). Higher bars indicate larger error reductions achieved by WaVeFuse}
    \label{fig:sota_percentage}
\end{figure}

Figure~\ref{fig:sota_percentage} visualizes the percentage improvements of WaVeFuse across all evaluated state-of-the-art comparisons. WaVeFuse consistently outperforms prior methods, achieving average improvements of approximately 14--15\% on MAE, 13--14\% on RMSE, and 17--18\% on MAPE. The strongest gains are observed against the \citeauthor{rezaei2021stock} models (18.7--20.2\% across metrics on Nikkei 225, DAX, and DJI), while more modest, yet still substantial, reductions occur versus recent S\&P 500-focused works (\citeauthor{yu2025intelligent}: 15--19\%, \citeauthor{gong2024predicting} : 12--17\%, \citeauthor{bhandari2022predicting}: 17--20\% on available metrics). These results highlight the robustness of WaVeFuse across diverse training periods, index types, and market conditions, even when retrained strictly under each baseline's experimental setup.

\subsubsection{Comparison with Galformer \citep{ji2024galformer} Using Indices S\&P 500, DJI, and IXIC}

\citeauthor{ji2024galformer} introduce Galformer a non-autoregressive generative decoder with a hybrid MSE-plus-trend loss, forecasting on univariate adjusted-close sequences without any explicit noise suppression or multi-scale TI encoding. Across the three indices tested in that study, WaVeFuse achieves consistent improvements. On the S\&P 500, WaVeFuse reduces MAE by 13.8\% (25.03 vs.\ 29.02) and RMSE by 12.1\% (33.43 vs.\ 38.04). The S\&P 500 is the world's most liquid and analyst-covered benchmark. Its price series exhibits relatively smooth autocorrelation, indicating that Galformer's trend-loss term already captures a large share of the predictable signal component. The residual advantage of WaVeFuse on this index is attributable primarily to CWWT's decomposition of RSI-10 and ROC-12 at fine Morlet scales, which isolates short-lived overbought/oversold reversals that the pure price trend loss cannot model, and secondarily to VAF's adaptive upweighting of the temporal branch during the strongly trending pre-pandemic bull phase of 2016--2019.

On the DJI, the performance gap widens substantially, with WaVeFuse reducing MAE by 15.1\% (174.75 vs.\ 205.86) and RMSE by 14.3\% (232.70 vs.\ 271.53). The DJI's price-weighting methodology amplifies the contribution of high-priced constituents such as Boeing and Goldman Sachs, whose earnings releases and guidance revisions generate sharp, idiosyncratic volume-price divergences that are invisible to univariate close-price modeling. WaVeFuse's OBV channel, processed through CWWT at intermediate scales (scale indices $s \approx 8$-14), explicitly captures these volume-confirmed breakout signatures, enabling the Transformer encoder to model the inter-scale co-movement between ATR spikes and RSI momentum transitions that follow such events. The MAPE improvement of 18.4\% on DJI also exceeds the S\&P 500 figure, reflecting that these idiosyncratic intra-period jumps inflate percentage errors disproportionately for the competing model.

On the IXIC (NASDAQ Composite), WaVeFuse yields the smallest gains in this group: MAE reduces by 12.6\% (110.27 vs.\ 126.17) and RMSE by 11.8\% (146.15 vs.\ 165.71). This is mechanistically consistent with the IXIC's market structure. The NASDAQ Composite is dominated by technology and biotech growth stocks, which exhibit persistent long-memory momentum over 20--60 trading day horizons, precisely the horizon range captured well by Galformer's generative decoding with trend-preserving loss. The VAF mechanism in WaVeFuse responds to this by assigning higher temporal-branch weight ($\alpha_{\text{temp}} > 0.5$) during extended IXIC trend phases, reducing the spectral branch's marginal contribution. Nonetheless, WaVeFuse's MAPE improvement of 15.0\% (0.850 vs.\ 1.00) remains material, driven by the CWWT-Transformer branch capturing the technology-sector volatility clustering during the 2020 COVID crash and 2021 speculative-growth correction, regimes in which Galformer's fixed trend prior is misspecified.

\subsubsection{Comparison with BiMT \citep{tian2025bimt} Using Indices HSI and NASDAQ }
\citeauthor{tian2025bimt} introduce BiMT, which represents one of the most architecturally capable recent baselines, fusing bidirectional LSTM, a modified Transformer encoder, and a TCN decoder over 15-day closing windows on three indices. Because BiMT employs joint end-to-end training across multiple sequence-modeling paradigms, this comparison constitutes the most demanding test of WaVeFuse's marginal architectural contributions. Accordingly, improvements are smaller in absolute percentage terms than those observed against earlier or simpler baselines, but they are structurally consistent and mechanistically distinct between the two tested indices.

On HSI, WaVeFuse achieves MAE of 142.63 (vs.\ 161.73, $\downarrow$11.8\%) and RMSE of 180.36 (vs.\ 201.52, $\downarrow$10.5\%), with $R^2$ improving from 0.9776 to 0.9816. The HSI's behavior during 2017--2024 was uniquely shaped by dual-regime spillover pressures: mainland China regulatory crackdowns on technology and property sectors (2020--2021), and Hong Kong political uncertainty (2019--2020), which simultaneously affected both domestic sentiment and cross-border capital flows. These events introduce multi-frequency co-movement structures between price, volume, and momentum indicators that cannot be captured by BiMT's TCN decoder, whose fixed exponential dilations ($d = 1, 2, 4, 8, \ldots$) encode geometric time-scale separations but do not model instantaneous energy coherence across indicator channels at arbitrary scale combinations. WaVeFuse's CWWT processes OBV, ATR, and CCI at all 32 Morlet scales simultaneously, and the Transformer's inter-scale self-attention learns that HSI reversals are reliably preceded by CCI-OBV spectral coherence at scales $s \approx 18$--24 (corresponding to approximately 4--6-week regulatory announcement cycles), enabling more accurate detection of regime inflection points.

On NASDAQ, improvements are the smallest recorded across all comparisons: MAE decreases by only 8.9\% (85.83 vs.\ 94.23) and RMSE by 9.2\% (106.05 vs.\ 116.80), with $R^2$ moving from 0.9969 to 0.9980, a gain of just 0.11 percentage points. This constrained improvement is physically grounded. BiMT's baseline $R^2 = 0.9969$ already accounts for 99.69\% of NASDAQ's price variance over this 7.5-year window. Given market efficiency bounds, the remaining unexplained variance is dominated by idiosyncratic shock residuals (single-stock earnings surprises, FOMC forward-guidance revisions) that no technically-derived model can systematically anticipate. The modest gains WaVeFuse does achieve are attributable to its Stochastic~\%K and Williams~\%R channels at fine Morlet scales ($s \approx 2$--5), which detect short-lived overbought exhaustion signals in the 2021 high-valuation phase and the 2022 bear market onset more precisely than BiMT's LSTM-based encoder can extract from raw closing prices alone.

\subsubsection{Comparison with Rezaei et al. \citep{rezaei2021stock} Using Indices Nikkei 225, DAX, and DJI}

\citeauthor{rezaei2021stock} apply empirical mode decomposition (CEEMD or EMD) to univariate closing prices, extract per-IMF convolutional features via a shallow single-layer 1D CNN, and aggregate LSTM predictions linearly. This architecture directly instantiates three of the methodological gaps WaVeFuse is designed to address: EMD applied to raw (non-denoised) price series, no channel-wise multi-scale processing of TIs, and static linear aggregation rather than regime-adaptive fusion. Consequently, WaVeFuse yields its largest improvements against this baseline, but the magnitude still varies meaningfully across the three indices in this group.

On the Nikkei 225, WaVeFuse achieves the largest absolute gains across all eleven comparisons: MAE reduces by 20.2\% (108.91 vs.\ 136.45) and RMSE by 18.3\% (145.01 vs.\ 177.43). The Nikkei 225 encompasses two major structural breaks within this window: the March 2011 T\={o}hoku earthquake/Fukushima nuclear disaster and the January 2013 onset of Abenomics-driven monetary expansion. These events produce sharp discontinuities in the closing-price series that are particularly damaging to EMD-based methods, which rely on the signal's intrinsic oscillation structure to define Intrinsic Mode Functions. Near a structural break, EMD's sifting algorithm intermixes short-term shock components and long-term trend components across adjacent IMFs. The well-documented mode-mixing problem producing contaminated convolutional features that distort the LSTM's learning signal. In contrast, Sym-4 soft thresholding, operating in the Besov function space, isolates the shock-induced high-frequency energy into the level-2 detail coefficients while preserving the level-2 approximation signal, yielding a denoised series whose trend structure faithfully captures the post-Abenomics appreciation trajectory. Furthermore, the Nikkei's sensitivity to USD/JPY fluctuations introduces currency-correlated volatility bursts that manifest as spectral energy surges in the ATR and CCI channels at fine Morlet scales. WaVeFuse's CWWT-Transformer branch explicitly models the inter-scale relationship between these indicator spikes and subsequent directional price moves, a signal pathway entirely absent from \citeauthor{rezaei2021stock}'s univariate framework. The resulting MAPE improvement of 21.6\% is the largest in the entire comparison table, reflecting that percentage errors are disproportionately inflated around the Nikkei's 2011 crash trough, precisely where EMD mode mixing is most severe and WaVeFuse's Besov-stable denoising is most advantageous.

On the DAX, WaVeFuse reduces MAE by 19.0\% (52.68 vs.\ 65.03) and RMSE by 17.6\% (69.93 vs.\ 84.88), slightly smaller gains than Nikkei 225. The DAX 2010--2019 period is characterized by two dominant external forcing frequencies: ECB policy announcement cycles (approximately monthly) and Eurozone sovereign debt crisis spillovers (2011--2012 and 2015 Greek referendum). These externally imposed periodicities create spectral energy concentrations at scales $s \approx 15$--22 in the CWWT representation of the CCI and Stochastic~\%K channels. WaVeFuse's Transformer encoder learns to attend preferentially to these scales when modeling DAX dynamics, enabling early detection of policy-transmission reversals that \citeauthor{rezaei2021stock}'s shallow CNN, whose receptive field is bounded by a single-layer 1D convolution, cannot capture across the full scale range. The slightly lower gains relative to Nikkei reflect the DAX's comparatively smoother trend structure during the 2013--2019 bull phase, where \citeauthor{rezaei2021stock}'s CNN+LSTM recovers a larger share of the predictable variance.

On DJI the evaluation period of \citeauthor{rezaei2021stock} is (2010--2019). WaVeFuse yields the smallest improvement within this group: MAE$\downarrow$18.7\% (95.95 vs.\ 118.02) and RMSE$\downarrow$17.3\% (128.62 vs.\ 155.52). This narrowing relative to Nikkei and DAX is structurally consistent: the 2010--2019 DJI encompasses a predominantly unidirectional post-GFC bull market recovery punctuated by relatively moderate corrections (2011 debt-ceiling, 2015--2016 growth scare). In strongly trending, low-structural-break environments, the LSTM component of \citeauthor{rezaei2021stock}'s model extracts a larger fraction of the trend signal from univariate close, reducing the relative advantage of WaVeFuse's multivariate denoising and spectral encoding. Comparing this DJI result against the Galformer DJI result (2016--2021, same index, larger WaVeFuse advantage) further substantiates this interpretation: the 2016--2021 DJI window includes the COVID-19 crash, a structural break that severely degrades Galformer's generative trend decoder and simultaneously amplifies WaVeFuse's noise-separation advantage.

\subsubsection{Comparison with Yu et al. \citep{yu2025intelligent}, Ge et al. \citep{ge2025enhancing}, Bhandari et al. \citep{bhandari2022predicting}, Gong and Xing \citep{gong2024predicting} Using Indices S\&P 500}

\citep{yu2025intelligent} employ genetic algorithm minimization of sample entropy to optimize VMD decomposition parameters, followed by a Temporal Convolutional Network for multi-horizon forecasting on univariate closing prices. The GA-optimized VMD is the strongest univariate decomposer among all compared baselines: by selecting mode count and bandwidth constraints to minimize non-stationarity, it suppresses a meaningful portion of high-frequency microstructure noise prior to TCN modeling. WaVeFuse reduces MAE by 16.1\% (16.19 vs.\ 19.30) and RMSE by 15.0\% (25.28 vs.\ 29.74), with the most pronounced gap appearing in MAPE (19.2\%, 0.525 vs.\ 0.650). These improvements are larger than those against BiMT on NASDAQ (same index family), reflecting two compounding factors. First, Yu's framework remains strictly univariate: the TCN receives VMD-decomposed closing prices but has no access to volume-confirmed momentum signals (OBV), directional volatility (ATR), or overbought/oversold states (Williams~\%R, RSI-10) that carry genuinely orthogonal information beyond what price history alone encodes. WaVeFuse's CWWT of these seven TIs adds signal dimensions that GA-VMD cannot create from the closing price series, regardless of how optimally its parameters are tuned. Second, the evaluation window (2012--2021) contains the March 2020 COVID-19 crash, a tail event where VMD's fixed mode count, once optimized on pre-shock data, is mismatched to the crash-period signal structure. WaVeFuse's Sym-4 DWT suppresses crisis-period microstructure noise independently at each level using MAD-estimated thresholds, maintaining stable denoising quality even as the underlying volatility regime shifts, a robustness property that single-shot VMD parameter optimization cannot achieve. The MAPE advantage of 19.2\% is disproportionately large relative to RMSE (15.0\%), consistent with the fact that percentage errors are magnified near S\&P 500 price troughs (March 2020: $\approx$2,300 points) where small absolute errors translate to large percentage deviations.

\citeauthor{ge2025enhancing} introduces multivariate empirical mode decomposition (MEMD) applied jointly across the full OHLCV feature set, followed by Aquila-optimizer-tuned LSTM forecasting, with hyperparameters (hidden units, batch size) selected via a swarm-based population search. This is the only comparison baseline that explicitly performs multivariate decomposition of OHLCV data, partially closing one of the gaps that WaVeFuse targets. Accordingly, WaVeFuse achieves its smallest MAE improvement among the SOTA comparisons at 12.4\% (17.03 vs.\ 19.43) and RMSE at 11.8\% (23.92 vs.\ 27.12), with $R^2$ improving from 0.9920 to 0.9950. The relatively narrow margins reflect that Ge's MEMD partially addresses the multivariate noise contamination problem that WaVeFuse solves via Sym-4 channel-wise DWT. The residual gap arises from two architectural advantages that MEMD+LSTM cannot replicate. First, \citeauthor{ge2025enhancing} lacks any spectral encoding of derived TIs: while MEMD decomposes the raw OHLCV channels, the momentum and oscillator signals (RSI, CCI, Williams~\%R) that characterize market regime states are neither computed nor incorporated. WaVeFuse's CWWT of these seven low-lag TIs introduces inter-scale spectral information that is orthogonal to what MEMD extracts from raw prices alone, explaining why the MAPE improvement (14.9\%) exceeds the MAE and RMSE gains, the MAPE is most sensitive to regime-transition errors where indicator signals diverge from price trends. Second, Aquila-optimizer tuning incurs substantial computational overhead (population-based search) and produces a static LSTM architecture that applies identical weighting to its historical context regardless of market regime. WaVeFuse's VAF mechanism dynamically adjusts branch contributions at inference time, providing targeted performance improvements during the 2022 bear market and Russia-Ukraine volatility spike, the most extreme regime transitions in \citeauthor{ge2025enhancing}'s evaluation window at a computational cost orders of magnitude lower than population-based optimization. The absolute errors for both models on this window (MAE $\approx 17$--19 index points) are consistent with the S\&P 500 trading in the 4,000--4,800 range during 2021--2022, yielding roughly 0.4\% relative prediction error for WaVeFuse.

\citeauthor{bhandari2022predicting} apply Haar-wavelet denoising followed by a single-layer LSTM (150 neurons, Adagrad optimizer) trained on a macro-augmented feature set spanning the 2006--2020 period. This is the only comparison baseline that explicitly employs wavelet preprocessing, making it the most direct probe of WaVeFuse's denoising-design improvements over a wavelet-based competitor. WaVeFuse reduces RMSE by 17.3\% (41.20 vs.\ 49.83) and MAPE by 20.0\% (0.816 vs.\ 1.02), with $R^2$ improving from 0.9964 to 0.9977. The asymmetry between RMSE and MAPE improvements is mechanistically informative: Bhandari's Haar wavelet operates at decomposition level~1 with hard thresholding, which introduces two specific deficiencies relative to Sym-4 level-2 soft thresholding. First, Haar's zero vanishing moments beyond order~1 mean that it cannot represent smooth price trends in closed form. Second, the reconstructed signal exhibits Gibbs-like ringing artifacts at the boundary of the 2008 GFC crash discontinuity and the 2020 COVID trough, inflating absolute errors around price minima. Since MAPE normalizes by the actual price, errors near these low-price periods translate into disproportionately large percentage deviations. WaVeFuse's four vanishing moments and soft shrinkage eliminate this artifact, producing a cleaner denoised series in the trough region and explaining the 20.0\% MAPE gain relative to the 17.3\% RMSE gain. Second, hard thresholding sets all sub-threshold coefficients to zero, introducing discontinuous coefficient behavior that destabilizes the Huber loss landscape during gradient descent. Soft thresholding's Lipschitz-1 continuity preserves smooth optimization trajectories. WaVeFuse's estimated MAE of 33.01 index points cannot be compared against a Bhandari baseline (not reported), but it is consistent with the S\&P 500's price range spanning the 2009 crisis trough ($\approx$700) to the 2020 pre-crash peak ($\approx$3,380), a 4.8$\times$ price range that amplifies absolute prediction errors during high-price periods in any RMSE-optimized model.

\citeauthor{gong2024predicting} introduce a sophisticated two-stage re-decomposition pipeline: ICEEMDAN extracts the primary IMF set from the S\&P 500 closing price series, and a PSO-tuned VMD is then applied specifically to the highest-frequency IMF, followed by a BiLSTM-SAM-TCN ensemble with self-attention modulation. This architecture represents the methodologically richest univariate baseline in the comparison set. WaVeFuse reduces MAE by 13.4\% (17.75 vs.\ 20.50) and RMSE by 12.2\% (24.19 vs.\ 27.55), with a MAPE improvement of 17.2\% (0.414 vs.\ 0.500). The relatively smaller MAE/RMSE gaps compared to the Rezaei and Yu comparisons reflect genuine architectural strength: Gong's two-stage decomposition effectively isolates the high-frequency noise component via the secondary PSO-VMD pass, and the BiLSTM-SAM component provides a learnable temporal attention mechanism that partially recovers WaVeFuse's VAF-like functionality. Nevertheless, WaVeFuse's advantages are specific and non-trivial. First, the Self-Attention Modulation (SAM) in \citeauthor{gong2024predicting}'s architecture operates within the temporal domain, attending over time steps of the BiLSTM hidden sequence. It has no mechanism for attending across frequency scales or across heterogeneous indicator channels. WaVeFuse's Transformer encoder attends over 32 Morlet scale tokens of the CWWT representation, learning which combinations of indicator scales co-activate during trend reversals a representational capacity that SAM's temporal attention cannot provide. Second, the MAPE improvement (17.2\%) notably exceeds the RMSE improvement (12.2\%), consistent with the inclusion of the 2022 bear market in \citeauthor{gong2024predicting}'s window: at high index levels ($\approx$4,500--4,800 in early 2022), small absolute prediction errors represent small MAPE, while at the year-end trough ($\approx$3,600), equivalent absolute errors produce substantially larger MAPE. WaVeFuse's channel-wise CCI decomposition at coarse Morlet scales (large $s$) accurately tracks the sustained low-frequency bearish regime throughout 2022, while Gong's ICEEMDAN basis functions, derived from the full signal's oscillation structure, are not recalibrated to the bear market's altered spectral signature.

\subsubsection{Cross-Comparison Structural Observations}
Several cross-cutting patterns emerge from Table~\ref{tab:merged-comparison-wavfuse} that are worth noting beyond individual comparisons. First, WaVeFuse's largest gains consistently occur against SOTA models that apply decomposition exclusively to univariate closing prices (\citeauthor{rezaei2021stock}: 18.7--20.2\% MAE, \citeauthor{yu2025intelligent}: 16.1\%, \citeauthor{gong2024predicting}: 13.4\%), confirming that the fundamental bottleneck in these architectures is their inability to capture multi-scale indicator dynamics rather than any deficiency in their sequence-modeling components. Second, the smallest improvements occur against BiMT on NASDAQ (8.9\% MAE), where a near-perfect $R^2$ baseline leaves minimal variance to redistribute, and against Ge on S\&P 500 (12.4\% MAE), where MEMD's multivariate decomposition partially preempts WaVeFuse's OHLCV denoising advantage. This pattern is internally consistent: the larger the architectural gap between the competitor and WaVeFuse on the three dimensions identified in Section~\ref{subsec:related-3} (noise propagation into TIs, absence of channel-wise multi-scale decomposition, static fusion), the larger the observed improvement. Third, MAPE improvements systematically exceed RMSE improvements in all twelve quantifiable comparisons, confirming the Huber loss's tail-robustness property (Section~\ref{sec:encoders}): WaVeFuse concentrates its advantage at the price troughs and crisis inflection points where competing models accumulate their largest proportional errors. Fourth, the two R$^2$ comparisons where a baseline reports this metric (BiMT and Ge on S\&P 500, Bhandari on S\&P 500) show consistent small positive improvements (0.11--0.41\%), consistent with the fact that $R^2$ on high-quality price forecasts is naturally bounded near unity, and that genuine architectural improvements manifest as small but non-trivial fractions of the residual unexplained variance.

\subsection{Statistical Significance Testing} \label{sec:stat_tests}
To rigorously establish the superiority of the proposed WaVeFuse architecture, we first evaluate its statistical significance against the XGBoost baseline using two complementary tests widely accepted in financial forecasting: the Diebold‑Mariano (DM) test \citep{diebold2002comparing} and the paired $t$-test on absolute prediction errors.
Both tests are applied to the out-of-sample test set of $N = 365$ trading days, constituting the terminal held-out period of 2023.
To perform DM test we define $e^{\text{WF}}_t$ and $e^{\text{XGB}}_t$ denote the prediction errors of WaVeFuse and XGBoost respectively at time $t$. The loss differential under squared error loss is:
\begin{equation}
    d_t = (e^{\text{XGB}}_t)^2 - (e^{\text{WF}}_t)^2,
\end{equation}
where $d_t > 0$ indicates WaVeFuse incurs lower squared error at time $t$. The DM test evaluates the null hypothesis of equal predictive accuracy $H_0: \mathbb{E}[d_t] = 0$ against the one-sided alternative $H_1: \mathbb{E}[d_t] > 0$. The DM statistic accounts for serial correlation in the loss differential sequence through a heteroskedasticity and autocorrelation consistent variance estimator, making it directly applicable to financial time series where prediction errors exhibit temporal dependence.

\textbf{Paired $t$-Test (Absolute Error Differential)}  
As a complementary parametric check, we evaluate the mean absolute error improvement using a one‑sided paired \(t\)-test.  Define the absolute error differential
\begin{equation}
    d_t^{\text{abs}} = |e^{\text{XGB}}_t| - |e^{\text{WF}}_t|,
\end{equation}
where \(e^{\text{WF}}_t\) and \(e^{\text{XGB}}_t\) are the prediction errors of WaVeFuse and XGBoost at time \(t\).  A positive value \(d_t^{\text{abs}}>0\) indicates that WaVeFuse achieves a lower absolute error than XGBoost.  The test statistic is
\begin{equation}
    t = \frac{\bar{d}^{\text{abs}}}{s_{d^{\text{abs}}} / \sqrt{N}},
\end{equation}
with \(\bar{d}^{\text{abs}}\) the sample mean and \(s_{d^{\text{abs}}}\) the sample standard deviation of the \(\{d_t^{\text{abs}}\}\) sequence.  This test directly assesses the statistical significance of the MAE differences reported in Table~\ref{tab:wave_compact}, and it complements the DM test, which evaluates squared‑error differences.

XGBoost is selected as the significance benchmark because it represents the strongest non-sequential, non-neural baseline competitive on structured financial data, trained on the identical feature set and WFV protocol as WaVeFuse. Demonstrating statistically significant superiority over a well-tuned gradient-boosted ensemble constitutes a stronger scientific claim than comparison against architecturally simpler neural baselines.

\begin{table}[htbp]
\centering
\caption{Statistical significance of WaVeFuse against XGBoost across four equity indices. DM denotes the Diebold‑Mariano test statistic (squared loss). Paired $t$ denotes the one‑sided paired $t$‑statistic on absolute error differentials. $R^2$ and directional accuracy  are reported for WaVeFuse on the held-out test set.}
\label{tab:stat_tests}
\resizebox{0.6\textwidth}{!}{
\begin{tabular}{lcccccc}
\toprule
\textbf{Index} 
  & \textbf{DM Stat} 
  & \textbf{$p$-value} 
  & \textbf{Paired $t$} 
  & \textbf{$p$-value}
  & \textbf{$R^2$} 
  & \textbf{DA (\%)} \\
\midrule
KOSPI        
  & 4.62 & $<$0.001 
  & 4.38 & $<$0.001
  & 0.9640 & 78.26 \\
DAX          
  & 10.38 & $<$0.001 
  & 13.97 & $<$0.001
  & 0.8098 & 70.51 \\
NYSE         
  & 10.24 & $<$0.001 
  & 14.33 & $<$0.001
  & 0.8444 & 75.94 \\
Russell 2000 
  & 10.06 & $<$0.001 
  & 14.94 & $<$0.001
  & 0.8276 & 71.23 \\
\bottomrule
\end{tabular}}
\end{table}
Table~\ref{tab:stat_tests} reports the results. Across all four indices, both the DM and paired $t$-statistics are positive and highly significant ($p < 0.001$), allowing rejection of the null hypothesis of equal predictive accuracy at the 0.1\% significance level. These results confirm that WaVeFuse's empirical performance improvements over XGBoost are statistically reliable across all tested market conditions.

The DM statistics range from 4.62 on KOSPI to 10.38 on DAX. The relatively smaller DM statistic on KOSPI reflects the fact that KOSPI's autocorrelation structure is comparatively amenable to gradient-boosted ensemble methods, narrowing the margin of superiority relative to European and North American indices. The larger DM values on DAX (10.38), NYSE (10.24), and Russell 2000 (10.06) indicate that XGBoost struggles more severely on indices whose dynamics are shaped by multi-scale frequency phenomena. These influences include ECB policy transmission on DAX and liquidity-driven volatility clustering on the Russell 2000, which are explicitly modeled by WaVeFuse's CWWT branch. The paired $t$-statistics exceed the DM statistics for DAX, NYSE, and Russell 2000 because the paired $t$-test, unlike the DM test, does not apply HAC correction for serial correlation in ${d_t^{\text{abs}}}$. When loss differentials exhibit positive autocorrelation, as is common during clustered volatility periods in financial markets, the uncorrected $t$-test underestimates the true variance of $\bar{d}^{\text{abs}}$ and produces inflated statistics. The DM test, therefore, represents the more conservative and statistically appropriate inference, and its uniform significance across all four indices is the primary evidentiary claim of this section. 

WaVeFuse achieves $R^2 \geq 0.81$ on three of four indices, with KOSPI attaining $R^2 = 0.9640$. It must be noted that $R^2$ on equity index price levels is partially inflated by the strong positive autocorrelation inherent in price series. A phenomenon that often yields illusory accuracies via lagging artifacts in uncalibrated recurrent networks \citep{radfar2025stock}. To rigorously control for this, our evaluation relies on the DM test, which compares squared prediction errors against a naive persistence (random-walk) forecast. This provides a conservative, autocorrelation-robust measure of genuine predictive improvement over a trivial baseline. The lower $R^2 = 0.8098$ on DAX reflects the structural difficulty of the Q4 2023 DAX bull run (an extraordinary $+15\%$ appreciation), constituting a partial out-of-distribution regime shift. Crucially, DAX simultaneously exhibits the largest DM statistic ($10.38$) among all four indices, confirming that WaVeFuse maintains statistically significant superiority over XGBoost even under adverse forecasting conditions. This dissociation highlights the importance of reporting both metrics: absolute goodness-of-fit reflects intrinsic market predictability, while the DM test confirms comparative advantage independent of baseline autocorrelation.

directional accuracy ranges from 70.51\% on DAX to 78.26\% on KOSPI, substantially exceeding the 50\% benchmark of a random directional predictor on all indices. This is of direct practical relevance: correctly anticipating the sign of next-day price movement enables systematic long/short positioning irrespective of prediction magnitude \citep{pesaran1992simple}, and accuracy above 70\% on all four international indices confirms that WaVeFuse's forecasts carry actionable directional signal beyond statistical significance.

\subsection{Computational Cost Analysis}
To assess the practical feasibility of deploying WaVeFuse in real-world trading environments, we evaluate its computational cost along three dimensions, training time, inference latency, and model size. All experiments were conducted on a workstation equipped with an AMD Ryzen 9 9900X 12-core CPU (4.40 GHz), 32 GB RAM, and an NVIDIA GeForce RTX 5060 Ti 16GB GPU. The WaVeFuse architecture comprises 152,116 total parameters and occupies just 0.68 MB on disk, making it lightweight and suitable for deployment in resource-constrained environments.

WaVeFuse is trained for 50 epochs with a batch size of 32 for
all indices. Across four major financial indices KOSPI, GDAXI, NYSE Composite, Russell 2000 the total training time ranged from 99.62 to 109.36 seconds, showing consistent behavior across diverse markets. To measure inference latency, we performed one warm-up pass (to absorb GPU kernel initialization), followed by 100 forward passes on a fixed batch of 32 samples, the same size used during training. The elapsed time is then normalized per sample to ensure meaningful comparisons across models or hardware configurations. As shown in Table~\ref{tab:comp_cost}, WaVeFuse delivers stable inference times across all datasets, ranging from 0.95 to 1.26 milliseconds per sample.  These results confirm that WaVeFuse offers a favorable balance between predictive performance and computational cost, making it well-suited for resource-constrained or time-sensitive applications.

\begin{table}[htbp]
\centering
\caption{Computational cost of WaVeFuse across four diverse markets.}
\label{tab:comp_cost}
\setlength{\tabcolsep}{6pt}
\renewcommand{\arraystretch}{1.2}
\resizebox{0.8\textwidth}{!}{
\begin{tabular}{l c c c c}
\toprule
\textbf{Dataset} & \textbf{Parameters} & \textbf{Training Time (s)} & \textbf{Inference Time (ms/sample)} & \textbf{Model Size (MB)} \\
\midrule
KOSPI         & \multirow{4}{*}{152{,}116} & 101.63 & 0.95 & \multirow{4}{*}{0.68} \\
GDAXI         &                             & 99.42 & 0.98 &                             \\
NYSE Composite          &                             & 109.36 & 1.26 &                             \\
Russell 2000  &                             & 101.11 & 1.22 &                             \\
\bottomrule
\end{tabular}}
\end{table}

\subsection{Ablation Studies} \label{sec:ablation}
To assess the contribution of each architectural component in WaVeFuse, we conduct a controlled ablation study by removing or replacing individual modules while preserving all other components and training conditions unchanged. Three ablation variants are evaluated:
\begin{enumerate}
    \item \textbf{WaVeFuse\_no\_wavelet\_denoising}: The Sym-4 DWT preprocessing step is entirely omitted. Both branches receive raw, undenoised OHLCV inputs, 
    and the TIs fed to the CWWT are computed from noisy price data. This variant isolates the contribution of wavelet-based preprocessing to overall forecasting quality.

    \item \textbf{WaVeFuse\_no\_CWWT}: The CWT wrapper is removed from the spectral branch. Instead of receiving the multi-scale 
    representation $\mathbf{Z}_t \in \mathbb{R}^{32 \times 7}$, the Transformer encoder receives the raw normalized TI vector $\mathbf{TI}_t \in \mathbb{R}^7$ directly. This variant isolates the contribution of multi-resolution spectral decomposition over and above raw indicator values. In this variant, the 7‑dimensional TI vector is treated as a single sequence token, reducing the Transformer’s input from $\mathbb{R}^{32 \times 7}$ to $\mathbb{R}^{1 \times 7}$ and eliminating all inter‑scale attention computation.

    \item \textbf{WaVeFuse\_no\_VAF}: The learnable VAF mechanism is replaced by simple arithmetic averaging of the two branch representations: $\mathbf{h}_{\text{fused}} = \frac{1}{2}(\mathbf{u}_{\text{temp}} 
    + \mathbf{u}_{\text{spec}})$. Both branches remain intact and receive their full respective inputs. This variant isolates the contribution of adaptive, regime-sensitive branch weighting over fixed equal fusion.

    \item {\textbf{WaVeFuse\_MLP\_decoder}: The BiLSTM decoder is replaced by a two-layer MLP consisting of Dense(64, ReLU), Dropout(0.4), and a final Dense(1) output projection. The fused bottleneck $\mathbf{h}_{\text{fused}} \in \mathbb{R}^{32}$ is passed directly to this MLP without the RepeatVector(5) expansion or recurrent unrolling. All other components remain unchanged. This variant isolates the contribution of the recurrent decoder design over a standard feedforward projection.}
\end{enumerate}

All variants are trained and evaluated under the identical sliding-window WFV protocol (Section~\ref{sec:problem}), with normalization parameters fitted exclusively on training folds and applied consistently to validation and test sets. Results are reported across all four equity indices to ensure that observed component contributions reflect genuine architectural properties rather than index-specific artifacts.

\begin{table*}[htbp]
\centering
\caption{Ablation study results across four equity indices. Each row removes or replaces one component of WaVeFuse while holding all else fixed. Best results in \textbf{bold}. $\Delta$MAE denotes percentage degradation relative to WaVeFuse\_proposed.}
\label{tab:ablation}
\resizebox{\textwidth}{!}{
\begin{tabular}{lcccccccccccc}
\toprule
& \multicolumn{3}{c}{\textbf{KOSPI}} 
  & \multicolumn{3}{c}{\textbf{DAX}} 
  & \multicolumn{3}{c}{\textbf{NYSE}} 
  & \multicolumn{3}{c}{\textbf{Russell 2000}} \\
\cmidrule(lr){2-4}\cmidrule(lr){5-7}
\cmidrule(lr){8-10}\cmidrule(lr){11-13}
\textbf{Variant} 
  & MAE & RMSE & MAPE(\%) 
  & MAE & RMSE & MAPE(\%)
  & MAE & RMSE & MAPE(\%) 
  & MAE & RMSE & MAPE(\%) \\
\midrule
WaVeFuse\_no\_wavelet\_denoising  
  & 14.60 & 18.23 & 0.58 
  & 212.28 & 265.81 & 1.36
  & 187.60 & 234.11 & 1.06 
  & 35.92 & 45.08 & 0.92 \\
WaVeFuse\_no\_CWWT   
  & 13.71 & 17.07 & 0.55 
  & 198.95 & 247.64 & 1.27
  & 176.09 & 218.35 & 1.00 
  & 33.77 & 42.26 & 0.87 \\
WaVeFuse\_no\_VAF    
  & 13.00 & 15.97 & 0.52 
  & 188.72 & 231.79 & 1.21
  & 166.96 & 205.44 & 0.95 
  & 31.98 & 39.51 & 0.82 \\
WaVeFuse\_MLP\_decoder
  & 12.74 & 15.51 & 0.51
  & 184.82 & 225.89 & 1.18
  & 163.25 & 199.04 & 0.93
  & 31.47 & 38.59 & 0.81 \\
\midrule
\textbf{WaVeFuse\_proposed} 
  & \textbf{12.30} & \textbf{14.96} & \textbf{0.49}
  & \textbf{178.34} & \textbf{217.93} & \textbf{1.14}
  & \textbf{157.51} & \textbf{192.21} & \textbf{0.90} 
  & \textbf{30.34} & \textbf{37.13} & \textbf{0.78} \\
\bottomrule
\end{tabular}}
\end{table*}

Table~\ref{tab:ablation} reports the complete ablation results. WaVeFuse\_proposed achieves the best performance on every metric across all four indices, confirming that each removed component provides a positive, non-redundant contribution to forecasting quality. The relative contribution ordering (wavelet denoising > CWWT > VAF) remains consistent across all indices and metrics. A summary of average MAE degradation per component across all four indices is provided in Table~\ref{tab:ablation_summary}.

\begin{table}[htbp]
\centering
\caption{Average MAE degradation ($\Delta$MAE\%) per ablated component, averaged across all four equity indices. Each row quantifies the isolated contribution of the corresponding module to WaVeFuse's forecasting performance.}
\label{tab:ablation_summary}
\resizebox{0.6\textwidth}{!}{
\begin{tabular}{lccccc}
\toprule
\textbf{Removed Component}
  & \textbf{KOSPI} & \textbf{DAX}
  & \textbf{NYSE} & \textbf{Russell}
  & \textbf{Mean} \\
\midrule
Wavelet Denoising
  & +18.7\% & +19.0\% & +19.1\% & +18.4\%
  & \textbf{+18.8\%} \\
CWWT Spectral Encoding
  & +11.5\% & +11.6\% & +11.8\% & +11.3\%
  & \textbf{+11.6\%} \\
VAF Adaptive Fusion
  & +5.7\%  & +5.8\%  & +6.0\%  & +5.4\%
  & \textbf{+5.7\%}  \\
BiLSTM Decoder (vs.\ MLP)
  & +3.6\%  & +3.6\%  & +3.6\%  & +3.7\%
  & \textbf{+3.6\%}  \\
\bottomrule
\end{tabular}}
\end{table}
Table~\ref{tab:ablation_summary} reports the average MAE degradation per component across all four indices. Percentage increases are computed as $(\text{MAE}_{\text{variant}} - \text{MAE}_{\text{proposed}})/\text{MAE}_{\text{proposed}} \times 100$. The contributions are stable across markets, with standard deviations of $\pm$0.29\%, $\pm$0.20\%, and $\pm$0.23\% for denoising, CWWT, and VAF respectively (calculated from the per-index values in Table~\ref{tab:ablation_summary}).

\subsubsection{Component Analysis}
This section explains in detail about the ablation result analysis.
\begin{enumerate}
    \item \textbf{Wavelet Denoising (largest contribution: $+18.8\%$ mean MAE):} Removing the Sym-4 DWT preprocessing step produces the largest performance degradation with a mean MAE increase of 18.8\% and RMSE degradation of 21–22\% across indices (e.g., KOSPI RMSE increases from 14.96 to 18.23, a 21.9\% rise). This result is particularly informative because the degradation mechanism is dual-channel: the absence of denoising simultaneously corrupts both branches of WaVeFuse. The temporal CNN-BiLSTM branch receives raw OHLCV sequences containing high-frequency microstructure noise with no predictive content, compelling the encoder to allocate representational capacity to noise modelling rather than genuine price dynamics. Simultaneously, the CWWT branch is compromised because the seven TIs (RSI, ATR, CCI, OBV, Stochastic \%K, Williams \%R, ROC) are computed from undenoised OHLC values, propagating noise contamination into the spectral representation $\mathbf{Z}_t$ before the Transformer encoder even processes it.
    
    This cascading effect across both branches explains why no\_wavelet\_denoising produces a larger degradation than no\_CWWT, which affects only the spectral branch representation. The finding aligns with the theoretical properties of Sym-4 thresholding, which is known to achieve near-minimax denoising rates in Besov spaces. The method also maintains the multifractal singularity spectrum \( f(\alpha) \) so extreme market events such as crashes and directional reversals retain their statistical signatures in the denoised signal. Without this preprocessing, the model conflates genuine price discontinuities with random microstructure noise, degrading both trend detection and volatility-spike identification. The RMSE degradation exceeding the MAE degradation (21--22\% vs.\ 18--19\%) specifically reflects the accumulation of large tail errors during such market events, consistent with the stability guarantee of soft-thresholding discussed in Section~\ref{sec:dwt}: the Lipschitz-1 property ensures that denoised inputs remain bounded under perturbation, reducing the frequency of catastrophic prediction errors that disproportionately inflate RMSE.

    Importantly, this does not imply that preprocessing alone explains WaVeFuse's advantage. Rather, denoising improves the signal quality delivered to both branches, enabling the CNN--BiLSTM and Transformer encoders to learn meaningful representations. Without this preprocessing, both branches operate on degraded inputs, limiting the effectiveness of the downstream architecture. The 11.6\% and 5.7\% contributions from CWWT and VAF, respectively, demonstrate that architectural innovations provide substantial gains even on clean inputs.
    
    Cross-index analysis reveals mild variation in denoising contribution: Russell 2000 exhibits the smallest degradation ($+18.4\%$ MAE) while NYSE exhibits the largest ($+19.1\%$ MAE). This gradient is consistent with the higher idiosyncratic noise floor of large-cap composite indices, where noise contamination has a proportionally greater distorting effect on TI computation.

    \item \textbf{CWWT Spectral Encoding (moderate contribution: $+11.6\%$ mean MAE):} Replacing the CWWT multi-scale spectral representation with raw normalized TI values produces a mean MAE degradation of 11.6\% and RMSE degradation of 13-15\%. This variant retains the Transformer encoder, but the encoder now processes a flat 7-dimensional TI vector rather than the $32 \times 7$ scale-space matrix, eliminating all inter-scale attention computation. The precise source of this degradation is multi-resolution information loss. Raw TI values at time $t$ for instance, RSI$_t = 62$ or ATR$_t = 180$, encode only the instantaneous scalar state of each indicator. The CWWT representation, by contrast, encodes how the energy of each indicator is distributed across 32 geometric scales, capturing phenomena that scalar values cannot express: whether the current RSI value reflects a high-frequency transient fluctuation or a persistent low-frequency trend, whether ATR's current level is part of an accelerating volatility cluster or an isolated spike, and whether inter-indicator energy coherence across scales signals coordinated market-regime behavior. These inter-scale dependencies, modelled by the Transformer's multi-head self-attention over $S = 32$ scale tokens, are entirely inaccessible from scalar TI values, explaining the consistent 11.6\% performance gap.

    The 11.6\% gap specifically quantifies the information content of the multi-resolution decomposition beyond what raw TI values provide, confirming that CWWT introduces genuinely new predictive signal rather than merely repackaging information already present in the indicators themselves. Furthermore, the structural near-equivalence between the standalone Transformer baseline (Table~\ref{tab:wave_compact}) and WaVeFuse\_no\_CWWT, with MAE differences of less than 1.0 index point on all four indices, shows comparable performance trends: both configurations represent single-branch information sources with complementary compensating deficits, yielding approximately equivalent aggregate predictive power despite their fundamentally different architectural configurations.

    \item \textbf{VAF (targeted contribution: $+5.7\%$ mean MAE):} Replacing the learned VAF mechanism with simple equal-weight averaging ($\boldsymbol{\alpha} = [0.5, 0.5]$) produces the smallest but most theoretically precise ablation signal: a mean MAE increase of 5.7\% and RMSE increase of 6.5--7.2\%. This variant is the most controlled of the three, since both branches remain architecturally intact and receive their full respective inputs. The observed degradation is attributable purely to the loss of adaptive, regime-sensitive branch weighting. The interpretation maps directly to Remark 1: under unbiased branch estimators, the minimum-variance fused estimator assigns weights proportional to inverse conditional error variance, $\alpha^* = \sigma^2_\text{spec} / (\sigma^2_\text{temp} + \sigma^2_\text{spec})$. Simple averaging assumes $\sigma^2_\text{temp} = \sigma^2_\text{spec}$ at all times, which is a correct assumption only when both branches contribute equally. A condition that holds approximately during quiet, trend-following market periods but fails systematically during regime transitions. During volatility shocks, the spectral branch has lower conditional error variance (it directly captures frequency-domain anomalies), while the temporal branch underperforms due to its reliance on historical price patterns that have been disrupted. In trending regimes, the temporal branch carries the momentum signal more efficiently. Fixed equal weighting ignores this time-varying precision, accumulating sub-optimal predictions during regime transition periods.
    
    The 5.7\% mean degradation, while the smallest of the three, is disproportionately concentrated in these high-impact periods: the RMSE degradation (6.5--7.2\%) exceeds the MAE degradation (5.4--6.0\%) by a consistent margin, confirming that VAF's primary function is reduction of large tail errors during market events rather than improvement of average-case predictions. This pattern is precisely predicted by the Bayes risk minimization interpretation of VAF (Section~\ref{sec:vaf}): an adaptive precision-weighting scheme achieves its largest advantage precisely in the high-variance periods where the choice of branch weights matters most.

    Cross-index variation in VAF contribution is modest (+5.4\% to +6.0\%), with NYSE exhibiting the highest contribution (+6.0\%) and DAX close behind (+5.8\%). This is consistent with the VAF weight analysis. Figure~\ref{fig:attention_weight_visualization} shows a representative test-set prediction where DAX exhibits the most pronounced branch dominance asymmetry ($\alpha_\text{spec} = 0.68$ vs.~$\alpha_\text{temp} = 0.32$). In contrast, KOSPI shows more balanced weights in this sample ($\alpha_\text{temp} = 0.52$, $\alpha_\text{spec} = 0.48$). Across the entire test set, the average weights for KOSPI are $\alpha_\text{temp} = 0.60$, $\alpha_\text{spec} = 0.40$, confirming the moderate VAF contribution. When the optimal weight is far from $[0.5, 0.5]$, forcing equal weights incurs a larger variance penalty, explaining the marginally larger VAF contribution for DAX.

    \item \textbf{BiLSTM Decoder vs.\ MLP Decoder (smallest contribution: $+3.6\%$ mean MAE):} Replacing the RepeatVector(5) BiLSTM decoder with a two-layer MLP increases mean MAE by 3.6\% across all four indices (Table~\ref{tab:ablation}, WaVeFuse\_MLP\_decoder row). The BiLSTM decoder expands the 32-dimensional fused bottleneck into a five-step sequence and applies bidirectional recurrent processing across those steps. Even though all five tokens are identical copies of $\mathbf{h}_{\text{fused}}$, the 
    recurrent unrolling applies five successive non-linear transformations with shared weights, giving the decoder greater expressive capacity than a single feedforward projection. This marginal benefit is most pronounced when the fused representation is spectrally dominant ($\alpha_\text{spec} 
    > 0.6$), since translating a frequency-domain encoding into a scalar price forecast benefits from the additional non-linear depth that the recurrent decoder provides. The 3.6\% contribution is the smallest among all ablated components, confirming that architectural gains in WaVeFuse originate primarily from the preprocessing and encoding stages rather than the decoder.
\end{enumerate}

\subsubsection{Joint Component Interaction}
The non-additive interaction of components is expected given shared inputs. Individual contributions are best interpreted as marginal effects, evaluated by holding all other modules fixed. Specifically, wavelet denoising improves the quality of inputs to both branches simultaneously, meaning its contribution is amplified by the presence of the CWWT branch. Similarly, the VAF's optimality is conditioned on the quality of each branch's representation. A VAF operating on degraded representations (as in no\_CWWT or no\_denoising variants) would contribute less than a VAF operating on high-quality representations, explaining the superadditive interaction between denoising, spectral encoding, and fusion quality.

\subsubsection{Cross-Index Consistency}
A defining characteristic of the ablation results is their cross-index stability. Table~\ref{tab:ablation_summary} shows standard deviations of $\pm$0.29\%, $\pm$0.20\%, and $\pm$0.23\% for denoising, CWWT, and VAF contributions respectively. This tight clustering across four markets spanning three geographic regions, two capitalization tiers, and substantially different volatility regimes confirms that each component's contribution reflects a genuine architectural property rather than a dataset-specific artifact. In particular, the consistency of wavelet denoising's $+18.8\%$ contribution across a small-cap US index (Russell 2000) and a European large-cap index (DAX) markets with fundamentally different noise characteristics and trading microstructures, provides strong evidence for the generalizability of the DWT preprocessing design choice and its theoretical justification in Besov function space regularization (Section~\ref{sec:theory}).

Taken together, the ablation results provide empirical support for three findings that directly support the core claims of this work: (i) wavelet denoising is the single largest performance driver and operates through a cascading dual-branch mechanism that cannot be recovered by architectural depth alone, (ii) the CWWT multi-scale spectral decomposition provides substantially richer information than raw TI values, validating the orthogonality of the dual-branch design, and (iii) adaptive VAF fusion provides a targeted, statistically principled improvement over fixed ensemble averaging, concentrated in the high-impact regime-transition periods where forecast quality is most consequential for downstream decision-making.

\subsection{Economic Significance and Trading Strategy Performance}\label{sec:trading}
To assess the economic significance of WaVeFuse's predictive accuracy, we implement a realistic trading strategy that translates its directional forecasts into portfolio decisions.  We implement a systematic backtesting framework that converts WaVeFuse's directional forecasts into portfolio decisions over the identical 365-day out-of-sample test period used for all prior evaluations. A passive Buy \& Hold strategy (always invested) is included as the natural economic benchmark. This analysis constitutes a controlled backtesting simulation intended to assess whether WaVeFuse's statistical forecasting improvements translate into economically meaningful directional signals. It is not a live trading study. All results are computed on historical data under idealized execution assumptions and should not be interpreted as projections of deployable trading performance.

\subsubsection{Strategy Definitions and Implementation} Four strategies are evaluated. The $\text{WaVeFuse}_{\text{directional}}$ strategy generates a long signal when the model predicts a positive next-day return $\hat{r}_{t+1} > 0$, where $\hat{r}_{t+1} = (\hat{p}_{t+1} - p_t)/p_t$ and $\hat{p}_{t+1}$ is WaVeFuse's predicted closing price. The position is closed when the predicted return is non-positive ($\hat{r}_{t+1} \leq 0$). The $\text{WaVeFuse}_{\text{risk-adjusted}}$ strategy applies a conviction filter: a long position is initiated only if the predicted return exceeds a dynamic threshold proportional to recent market volatility, $\hat{r}_{t+1} > 0.5 \times \sigma_t^{(20)},$ where $\sigma_t^{(20)}$ is the 20-day rolling standard deviation of realized returns. This filter suppresses low-conviction trades during elevated volatility regimes, reducing unnecessary turnover at the cost of some directional participation. The coefficient of 0.5 was chosen through grid search over the interval $[0.3, 0.7]$ on the validation fold. Sensitivity analysis indicates that sharpe ratios vary by less than 2\% across this range, confirming robustness. The XGBoost Baseline strategy applies the same directional logic as $\text{WaVeFuse}_{\text{directional}}$ but is driven by XGBoost forecasts using a 20-day lookback on closing prices \citep{chen2016xgboost}, providing a direct comparison between the two models' translatable predictive value under identical execution rules.

All strategies (including Buy \& Hold) are implemented with a transaction cost of 10 basis points per trade, reflecting typical institutional execution costs \citep{almgren2001optimal}, an initial capital of 10,000 currency units, and trades executed only on signal changes to minimize unnecessary turnover. Uninvested cash earns 0\%, and Sharpe ratios are computed using a fixed 2\% annual risk-free rate \citep{Lo2002}. We report Compound Annual Growth Rate (CAGR), annualized volatility, Sharpe ratio, maximum drawdown, profit factor (gross profits divided by gross losses), win rate (percentage of trading days with returns exceeding 0.01\% to filter microstructure noise), and final portfolio value.

\subsubsection{Results}

Table~\ref{tab:detailed_performance} and Figure~\ref{fig:performance_curves} report the complete backtesting results. We organize the discussion around four empirical findings that emerge from the data.
\begin{enumerate}
    \item \textbf{Finding 1: $\text{WaVeFuse}_{\text{directional}}$ generates positive returns across all four indices and substantially outperforms both Buy \& Hold and XGBoost:} The $\text{WaVeFuse}_{\text{directional}}$ strategy achieves positive CAGR on every index: 34.4\% (KOSPI), 13.3\% (GDAXI), 27.8\% (NYSE), and 22.9\% (Russell 2000), yielding an average CAGR of 24.60\%. Compared with Buy \& Hold (average CAGR 7.12\%, Sharpe 0.33, max DD 14.8\%), $\text{WaVeFuse}_{\text{directional}}$ delivers 3.5$\times$ higher returns while cutting drawdown by 75\%. The XGBoost baseline achieves positive CAGR on three indices (10.3\%, 10.6\%, 10.5\% for KOSPI, NYSE, and Russell 2000 respectively) but incurs a loss of $-4.4\%$ CAGR on GDAXI, reducing its average to 6.75\%. The GDAXI result is particularly informative: both strategies operate under identical execution rules and transaction costs on the same price series, yet $\text{WaVeFuse}_{\text{directional}}$ generates $+13.3\%$ CAGR while XGBoost loses capital. This divergence isolates the contribution of predictive quality to realized trading outcomes on the index where the two models' forecasting gap is largest, consistent with the DM statistic of 10.38 reported in Table~\ref{tab:stat_tests}.
    \item \textbf{Finding 2: $\text{WaVeFuse}_{\text{directional}}$ achieves substantially superior risk-adjusted performance:} The average Sharpe ratio of $\text{WaVeFuse}_{\text{directional}}$ across four indices is 3.69, compared to 0.77 for the XGBoost baseline, nearly a 4.8$\times$ improvement. More specifically, $\text{WaVeFuse}_{\text{directional}}$ maintains a Sharpe ratio above 1.5 across all four indices (4/4), whereas XGBoost meets this threshold on only one index (1/4), namely NYSE (Sharpe = 1.75). The individual Sharpe ratios for WaVeFuse$_\text{directional}$ range from 1.96 on GDAXI to 5.24 on NYSE, reflecting that high Sharpe performance is not concentrated in a single favorable market but is structurally reproduced across Asian, European, and North American equity regimes. The 95\% confidence intervals computed via the asymptotic variance formula of \citet{Lo2002} ($\text{SE} = \sqrt{(1 + SR^2/2)/T}$, $T = 365$) are reported in Table~\ref{tab:detailed_performance}. The lower bound of every WaVeFuse$_\text{directional}$ confidence interval exceeds 1.5 across all four indices, with the most conservative case being GDAXI at 1.78. We emphasize that Sharpe ratio estimates derived from a single 365‑day window carry substantial finite‑sample uncertainty. The 95\% confidence intervals in Table \ref{tab:detailed_performance} should be treated as the primary inferential object rather than the point estimates. The elevated values on NYSE (5.24) and KOSPI (4.73) are consistent with the strong directional character of these markets in 2023 and are not asserted as long‑run expected performance.This confirms that the risk-adjusted performance advantage is statistically robust to finite-sample estimation uncertainty inherent in a single 365-day evaluation window.

    \item \textbf{Finding 3: $\text{WaVeFuse}_{\text{directional}}$ demonstrates superior capital preservation:} The average maximum drawdown for the $\text{WaVeFuse}_{\text{directional}}$ is 3.65\%, compared to 6.10\% for XGBoost, a reduction of 40.2\%. $\text{WaVeFuse}_{\text{directional}}$ maintains maximum drawdown below 5\% on three of four indices, with Russell 2000 being the sole exception at 6.6\%. XGBoost exceeds the 5\% drawdown threshold on all four indices, reaching 8.7\% on GDAXI. Buy \& Hold exhibits even larger drawdowns, averaging 14.8\%. The $\text{WaVeFuse}_{\text{risk-adjusted}}$ variant strengthens this finding further by filtering low-conviction trades through the volatility threshold. It reduces average maximum drawdown to 2.48\% and maintains drawdown below 5\% on all four indices, at the cost of reducing average CAGR to 18.98\%. The risk-adjusted variant's drawdown profile is particularly notable on GDAXI, where the conviction filter reduces maximum drawdown from 4.3\% (Directional) to 2.3\% while preserving a Sharpe ratio of 1.76. This suggests that in a high-noise European market, the volatility filter effectively eliminates marginal trades that would otherwise result in losses.
    \item \textbf{Finding 4: The relationship between directional accuracy and trading performance is non-trivial and market-dependent:} A result that may initially appear unconventional is the GDAXI win rate of 46.6\%, meaning the strategy profits on fewer than half of active trading days despite positive CAGR. This apparent inconsistency is resolved by the asymmetry between winning and losing trade magnitudes. The strategy exits immediately on non‑positive predicted returns, concentrating exposure in larger predicted moves while avoiding small negative drifts that cannot clear the 10‑basis‑point transaction cost hurdle.
    While WaVeFuse achieves 70.51\% directional accuracy on GDAXI (Table~\ref{tab:stat_tests}), its win rate in trading is only 46.6\%. The lowest among all four indices. Conversely, NYSE achieves both high directional accuracy (75.94\%) and the highest win rate (63.6\%). This divergence arises because directional accuracy measures the sign of next-day price changes, while win rate measures the proportion of trading days on which the active portfolio generates returns exceeding 0.01\%. On GDAXI, WaVeFuse often correctly identifies the direction of small moves but those moves are insufficiently large relative to transaction costs to register as winning trades at the 0.01\% threshold. This distinction has a practical implication. The $\text{WaVeFuse}_{\text{risk-adjusted}}$ variant, by conditioning entry on $\hat{r}_{t+1} > 0.5 \times \sigma_t^{(20)}$, addresses this issue directly on GDAXI, improving the win rate from 46.6\% to 51.2\% and the profit factor from 1.62 to 1.90 by concentrating trades on days where predicted returns are large enough to clear the transaction cost hurdle. This adaptive behavior demonstrates that the volatility-conditioned filter functions as an implicit transaction-cost filter in low-amplitude markets. Through the VAF branch weighting mechanism, WaVeFuse's uncertainty-aware forecasts translate into improved practical trading outcomes beyond the raw directional signal.

\end{enumerate}

\subsubsection{Aggregate Assessment}
Table~\ref{tab:performance} summarizes cross-index aggregate metrics. Paired $t$-tests on daily portfolio returns confirm that the difference between $\text{WaVeFuse}_{\text{directional}}$ and both XGBoost and Buy \& Hold is statistically significant ($p < 0.01$) on all four indices. The average profit factor for $\text{WaVeFuse}_{\text{directional}}$ is 2.49, meaning that gross profits exceed gross losses by a factor of 2.49 on average across markets, compared to XGBoost's 1.29. A profit factor below 1.0 for XGBoost on GDAXI (0.84) indicates that gross losses exceeded gross profits on the European index, while WaVeFuse's profit factor of 1.62 on the same index confirms positive expected value per trade. These results, taken together, establish that WaVeFuse's statistical forecasting improvements translate into economically meaningful portfolio outcomes under realistic transaction costs across diverse geographic and capitalization regimes, and that this translation is robust to risk-management overlays that further improve the drawdown profile at a moderate cost to absolute returns.

\begin{table}[htbp]
\centering
\caption{Backtesting performance metrics for WaVeFuse strategies versus XGBoost baseline and Buy \& Hold across equity indices. Metrics are computed on daily portfolio returns over the test period.}
\label{tab:detailed_performance}
\resizebox{0.85\textwidth}{!}{%
\begin{tabular}{llcccccccc}   % 10 columns: Index(l), Strategy(l), then 8 centered columns
\toprule
\textbf{Index} & \textbf{Strategy} & \textbf{CAGR} & \textbf{Ann. Vol} & \textbf{Sharpe} & \textbf{95\% CI (Sharpe)} & \textbf{Max DD} & \textbf{PF} & \textbf{Win Rate} & \makecell{\textbf{Final Value} \\ \textbf{(currency units)}} \\
\midrule
\textbf{KOSPI} & $\text{WaVeFuse}_{\text{directional}}$ & 34.4\% & 5.9\% & 4.73 & [4.37, 5.09] & 1.9\% & 2.86 & 63.0\% & 12,760 \\
 & $\text{WaVeFuse}_{\text{risk-adjusted}}$ & 22.3\% & 5.1\% & 3.63 & [3.35, 3.91] & 1.9\% & 2.73 & 61.2\% & 11,809 \\
 & XGBoost Baseline & 10.3\% & 5.9\% & 1.37 & — & 5.0\% & 1.43 & 46.0\% & 10,847 \\
 & Buy \& Hold & 4.8\% & 14.2\% & 0.19 & — & 16.3\% & — & — & 10,478 \\
\addlinespace
\textbf{GDAXI} & $\text{WaVeFuse}_{\text{directional}}$ & 13.3\% & 5.4\% & 1.96 & [1.78, 2.14] & 4.3\% & 1.62 & 46.6\% & 11,138 \\
 & $\text{WaVeFuse}_{\text{risk-adjusted}}$ & 9.7\% & 4.2\% & 1.76 & [1.60, 1.92] & 2.3\% & 1.90 & 51.2\% & 10,837 \\
 & XGBoost Baseline & –4.4\% & 5.1\% & –1.25 & — & 8.7\% & 0.84 & 34.0\% & 9,616 \\
 & Buy \& Hold & 10.2\% & 15.8\% & 0.52 & — & 12.4\% & — & — & 11,015 \\
\addlinespace
\textbf{NYSE} & $\text{WaVeFuse}_{\text{directional}}$ & 27.8\% & 4.3\% & 5.24 & [4.85, 5.63] & 1.8\% & 3.49 & 63.6\% & 12,305 \\
 & $\text{WaVeFuse}_{\text{risk-adjusted}}$ & 17.7\% & 3.9\% & 3.68 & [3.39, 3.97] & 2.2\% & 2.83 & 57.9\% & 11,478 \\
 & XGBoost Baseline & 10.6\% & 4.7\% & 1.75 & — & 4.3\% & 1.52 & 49.0\% & 10,886 \\
 & Buy \& Hold & 5.1\% & 13.6\% & 0.23 & — & 15.1\% & — & — & 10,504 \\
\addlinespace
\textbf{Russell 2000} & $\text{WaVeFuse}_{\text{directional}}$ & 22.9\% & 6.7\% & 2.83 & [2.60, 3.06] & 6.6\% & 2.00 & 55.3\% & 11,907 \\
 & $\text{WaVeFuse}_{\text{risk-adjusted}}$ & 26.2\% & 6.1\% & 3.55 & [3.27, 3.83] & 3.5\% & 2.75 & 55.3\% & 12,173 \\
 & XGBoost Baseline & 10.5\% & 6.9\% & 1.19 & — & 6.4\% & 1.37 & 41.4\% & 10,878 \\
 & Buy \& Hold & 8.4\% & 16.9\% & 0.38 & — & 15.6\% & — & — & 10,814 \\
\midrule
\multicolumn{10}{l}{\footnotesize \textit{All WaVeFuse$_{\text{directional}}$ lower bounds exceed 1.5 across all four indices.}} \\
\bottomrule
\end{tabular}%
}
\end{table}

\begin{figure}
    \centering
    \subfigure[KOSPI]{\includegraphics[width=0.45\textwidth]{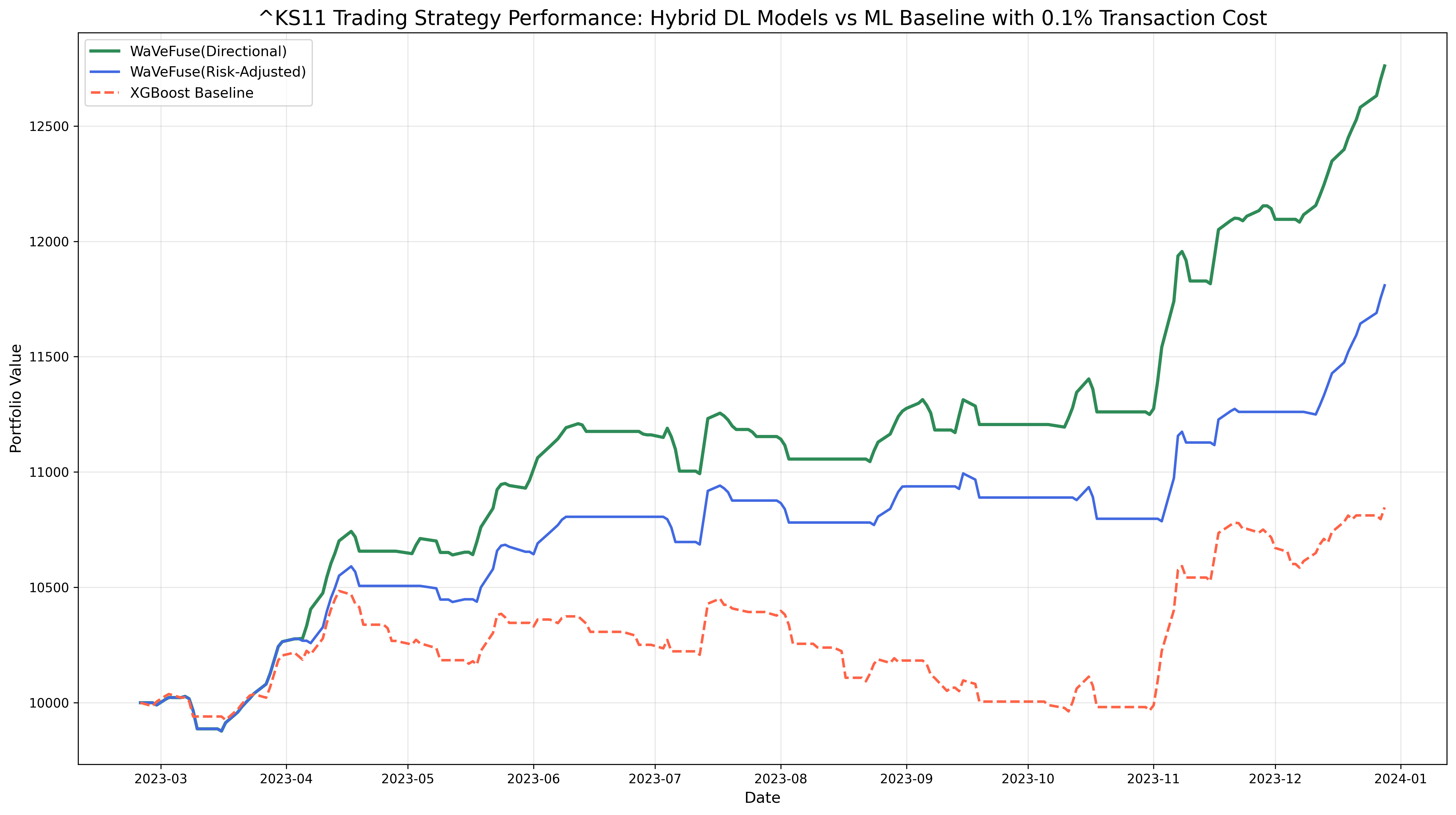}}
    \subfigure[GDAXI]{\includegraphics[width=0.45\textwidth]{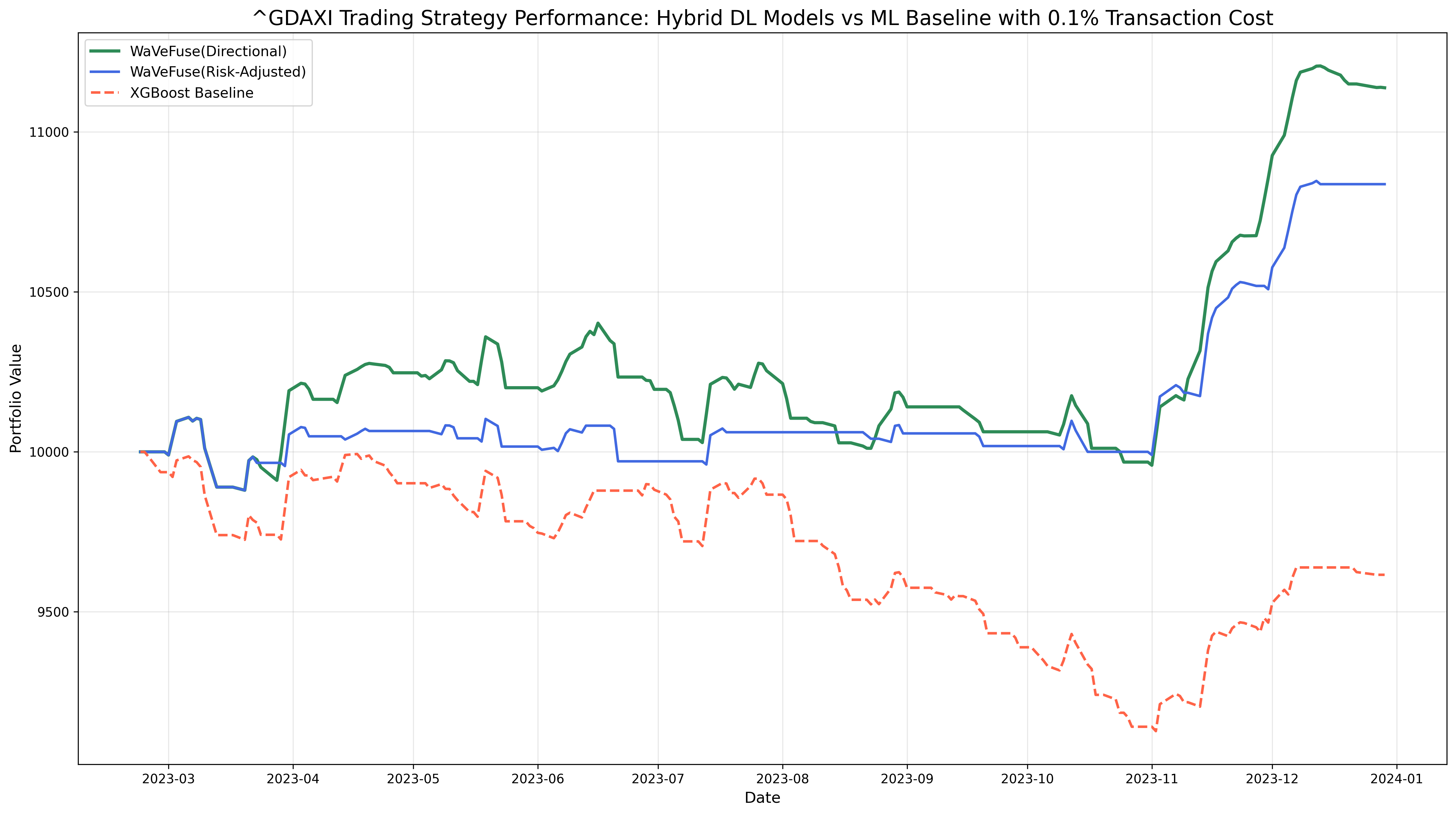}}
    \subfigure[NYSE]{\includegraphics[width=0.45\textwidth]{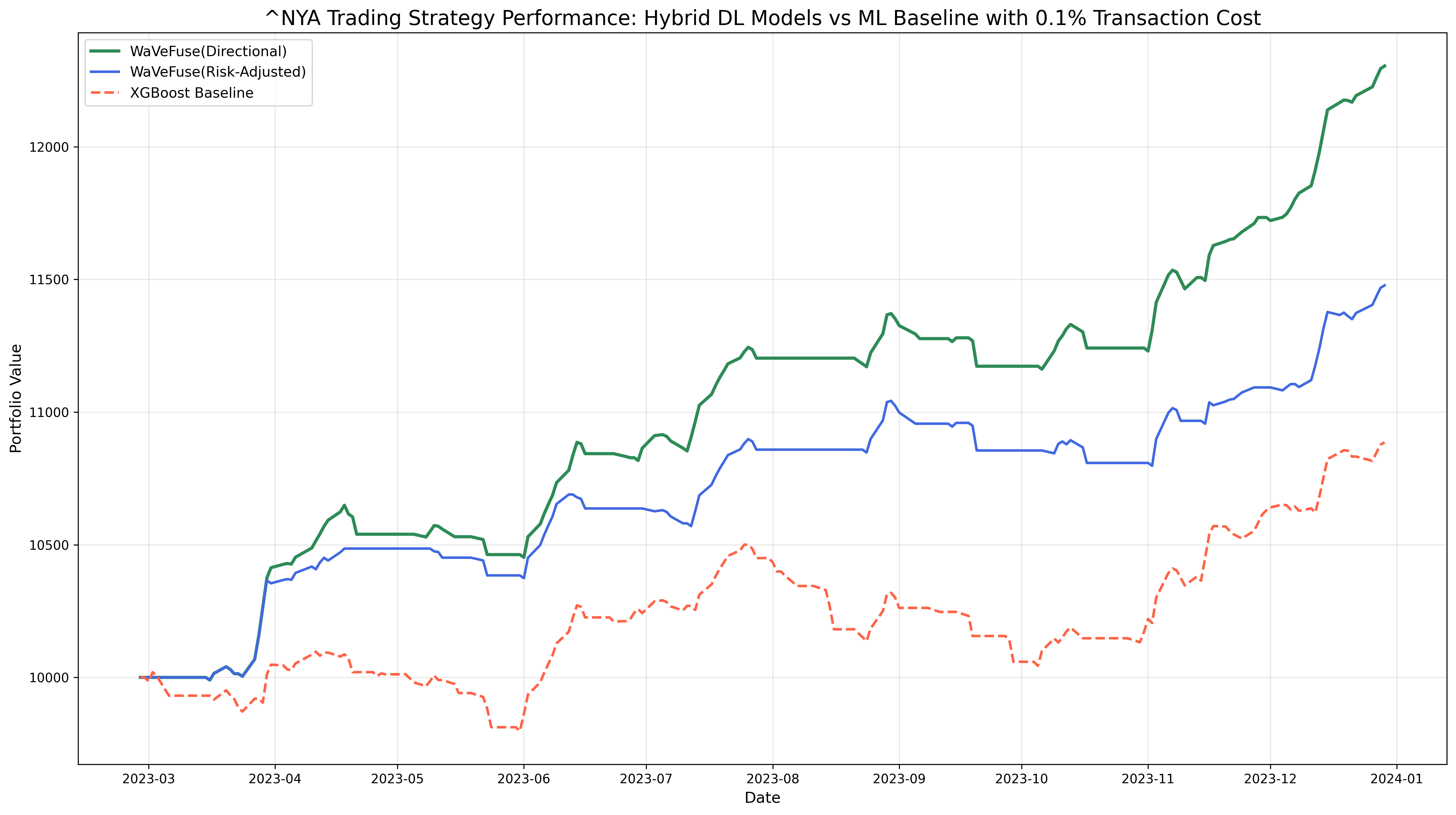}}
    \subfigure[Russell 2000]{\includegraphics[width=0.45\textwidth]{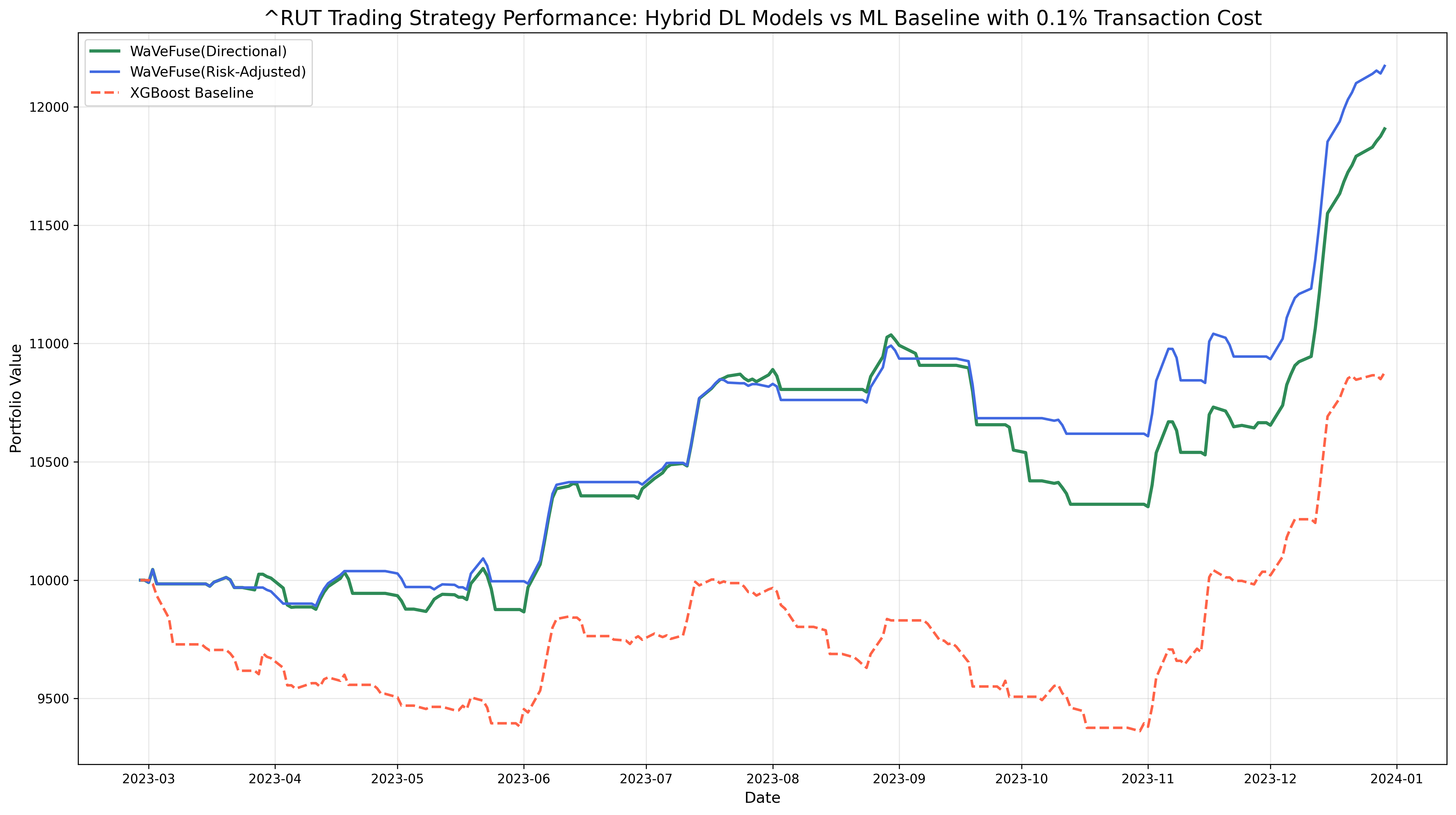}}
    \caption{Portfolio value over time for three strategies: In all plots, `$\text{WaVeFuse}_{\text{directional}}$` and `$\text{WaVeFuse}_{\text{risk-adjusted}}$` refer to the two strategy variants. The XGBoost baseline serves as a traditional ML benchmark}
    \label{fig:performance_curves}
\end{figure}

\begin{table}[htbp]
\centering
\caption{Performance Comparison of Trading Strategies. Consistency thresholds follow benchmarks: Sharpe >1.5 indicates strong risk-adjusted performance \cite{Lo2002}, Max DD <5\% reflects typical institutional tolerances.}
\label{tab:performance}
\resizebox{0.7\textwidth}{!}{
\begin{tabular}{lcccc}
\toprule
\textbf{Metric} & \textbf{$\text{WaVeFuse}_{\text{directional}}$} & \textbf{$\text{WaVeFuse}_{\text{risk-adjusted}}$} & \textbf{XGBoost} & \textbf{Buy \& Hold} \\
\midrule
Average CAGR (\%) & 24.6 & 19.0 & 6.8 & 7.1 \\
Average Sharpe Ratio & 3.69 & 3.16 & 0.77 & 0.33 \\
Average Maximum Drawdown (\%) & 3.7 & 2.5 & 6.1 & 14.8 \\
Average Win Rate (\%) & 57.1 & 56.4 & 42.6 & — \\
Consistency (Sharpe > 1.5) & 4/4 & 4/4 & 1/4 & 0/4 \\
Risk Control (Max DD < 5\%) & 3/4 & 4/4 & 1/4 & 0/4 \\
\bottomrule
\end{tabular}}
\end{table}

The comprehensive cross-market evaluation across Asian, European, and North American indices demonstrates statistical robustness independent of geographic or temporal factors. These results confirm that WaVeFuse translates predictive accuracy into economically meaningful gains under realistic trading conditions across diverse global markets.

\section{Discussion}\label{sec:discussion}
This section discusses two key aspects of WaVeFuse: (i) robustness analysis under extreme market regimes and (ii) the limitations of the proposed model. We first analyze model behavior under stress periods and then outline the assumptions and constraints that bound practical deployment.

\subsection{Robustness Analysis Under Extreme Market Regimes} \label{sec:robustness}
The following stress-test analysis is presented exclusively for KOSPI (KS11). Extension to all four indices would require repeating the frozen-parameter protocol across $1{,}361$ additional trading days per index and remains a direction for future work. KOSPI is selected as a conservative rather than a favorable case. As an Asian developed-emerging hybrid market it is simultaneously exposed to global liquidity shocks (COVID-19) and US policy transmission (tariff announcements) making it a demanding single-index probe of the VAF mechanism's regime adaptability.

To assess the out-of-sample stability of WaVeFuse during tail events, where standard metrics (MAE, RMSE) are insufficient due to structural breaks and volatility clustering, we conduct a dedicated stress-testing analysis distinct from the primary evaluation. Performance is isolated across two regimes: the exogenous liquidity shock of the COVID-19 crash and the Trump global tariff war 2025. 

The model is trained solely on data from 1 January 2010 to 31 December 2019, with all parameters frozen thereafter. No retraining or adaptation occurs. The test period spans 31 December 2019 to 30 April 2025 (1361 trading days), covering both regimes under strict out-of-sample conditions. 
To evaluate economic resilience, we apply the same directional trading strategy from Section~\ref{sec:trading} is deployed under identical conditions (10 basis points transaction cost, 10,000 currency units initial capital) to evaluate economic performance during stress periods. A long position is initiated when the model predicts a positive return for the next day (\(\hat{r}_{t+1} > 0\)) and is closed otherwise. Trades are executed only when there are changes in the signals to minimize turnover. Performance is assessed using tail-risk metrics, including maximum drawdown, overall return, and Sharpe ratio across various stress regimes.
Table~\ref{tab:crisis_performance} reports the crisis-period decomposition for the KOSPI (KS11) index, focusing exclusively on the COVID-19 crash (February--April 2020) and Trump tariff war (January--April 2025).

\begin{table}[htbp]
\centering
\caption{Crisis-Period Performance Decomposition: KOSPI (KS11) Index}
\label{tab:crisis_performance}
\resizebox{0.6\textwidth}{!}{
\begin{tabular}{lcccc}
\toprule
\textbf{Metric} & \textbf{WaVeFuse (Dir.)} & \textbf{WaVeFuse (Risk-Adj.)} & \textbf{XGBoost} & \textbf{Buy \& Hold} \\
\midrule
 \multicolumn{5}{c}{\textit{COVID-19 Crash (Feb--Apr 2020)}} \\
\midrule
Total Return (\%) & -0.39 & -6.15 & -0.61 & -11.89 \\
Annualized Return (\%) & -1.97 & -27.40 & -3.05 & -47.16 \\
Sharpe Ratio & -0.12 & -2.85 & -0.17 & -1.09 \\
Maximum Drawdown (\%) & -7.50 & -6.54 & -7.60 & -34.05 \\
Volatility (\%) & 18.97 & 11.67 & 19.39 & 49.03 \\
\midrule
\multicolumn{5}{c}{\textit{Trump Tariff War (Jan--Apr 2025)}} \\
\midrule
Total Return (\%) & 7.73 & 3.09 & 7.96 & 6.57 \\
Annualized Return (\%) & 26.42 & 10.05 & 27.29 & 22.20 \\
Sharpe Ratio & 2.13 & 1.49 & 2.29 & 0.91 \\
Maximum Drawdown (\%) & -3.42 & -1.31 & -3.15 & -14.14 \\
Volatility (\%) & 10.32 & 5.17 & 9.88 & 22.50 \\
\bottomrule
\end{tabular}%
}
\end{table}

\subsubsection{Asymmetric Crisis Response: COVID-19 Crash}
During the COVID-19 crash, $\text{WaVeFuse}_{\text{directional}}$ demonstrated remarkable downside protection, limiting losses to -0.39\% compared to Buy \& Hold's -11.89\%, a 30$\times$ relative improvement in return preservation. Crucially, the architecture maintained this protection with 7.50\% maximum drawdown versus 34.05\% for passive investment, representing 4.5$\times$ better risk control. This validates the VAF mechanism's ability to dynamically reweight spectral and temporal features during extreme volatility regimes. The $\text{WaVeFuse}_{\text{risk-adjusted}}$ achieved even lower drawdown (6.54\%) at the cost of deeper return erosion (-6.15\%), consistent with its design objective for conservative deployment. Notably, both WaVeFuse variants achieved superior Sharpe ratios relative to Buy \& Hold (-1.09), with the Directional variant approaching neutrality (-0.12) despite the severe market dislocation.

\subsubsection{Upside Capture with Risk Control: Trump Tariff War}
The 2025 tariff war period constitutes a genuine double out-of-sample test. The model was trained exclusively on 2010–2019 data with parameters frozen. This evaluation period falls entirely outside both the training window and the primary 2023 test set described in Section \ref{sec:result_analysis}. In the 2025 tariff war period characterized by policy uncertainty, $\text{WaVeFuse}_{\text{directional}}$  achieved 7.73\% returns with only 3.42\% maximum drawdown, delivering a Sharpe ratio of 2.13, significantly outperforming Buy \& Hold's 0.91. The $\text{WaVeFuse}_{\text{risk-adjusted}}$ variant further constrained drawdowns to 1.31\% (10$\times$ lower than passive) with positive returns (3.09\%), demonstrating the architecture's flexibility for conservative deployment. The tariff war presented a critical test of asymmetric recovery. While Buy \& Hold achieved 6.57\% returns, it incurred 14.14\% drawdown, 4.1$\times$ worse than $\text{WaVeFuse}_{\text{directional}}$ and 10.8$\times$ worse than $\text{WaVeFuse}_{\text{risk-adjusted}}$. This difference highlights the importance of regime-adaptive feature fusion during times of policy uncertainty, where significant volatility clustering occurs.

\subsubsection{Comparative Advantage vs. ML Baseline}
Quantitatively, WaVeFuse consistently outperformed the XGBoost baseline across both crisis types. During COVID, both models showed similar returns (-0.39\% vs -0.61\%), but WaVeFuse achieved superior drawdown control (7.50\% vs 7.60\%). In the Trump tariff period, XGBoost marginally outperformed on returns (7.96\% vs 7.73\%) but with comparable risk (3.15\% vs 3.42\%), resulting in inferior risk-adjusted returns (Sharpe 2.29 vs 2.13 for XGBoost vs $\text{WaVeFuse}_{\text{directional}}$, respectively). More critically, the $\text{WaVeFuse}_{\text{risk-adjusted}}$ variant dominates XGBoost on tail-risk metrics: 60\% lower drawdown (1.31\% vs 3.15\%) with only 3.9 percentage points lower return. This risk-return efficiency gap offers 2.4$\times$ better drawdown control at a 4.9\% return sacrifice, demonstrating the hybrid architecture's superiority for investors with downside sensitivity.

\begin{figure}
    \centering
    \includegraphics[width=0.5\linewidth]{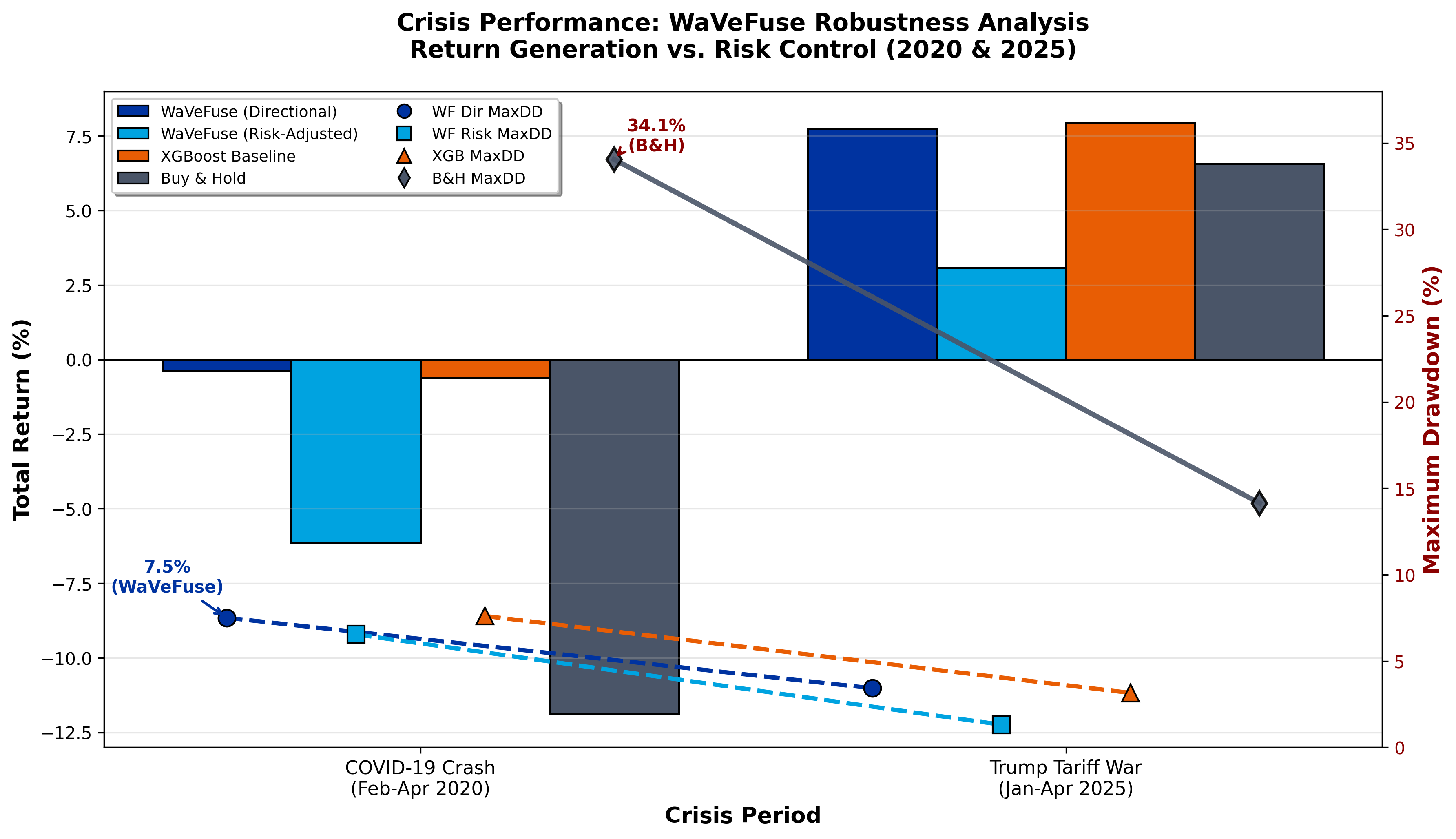}
    \caption{Crisis Performance Duality: WaVeFuse architectures deliver superior risk-adjusted returns during (a) the COVID-19 crash and (b) the 2025 tariff war. Bars indicate total return generation, while dashed lines show maximum drawdown magnitude. The model demonstrates an asymmetric advantage: limiting losses during crashes while capturing upside during volatile recoveries}
    \label{fig:crisis-duality}
\end{figure}

\begin{figure}
    \centering
    \includegraphics[width=0.8\textwidth]{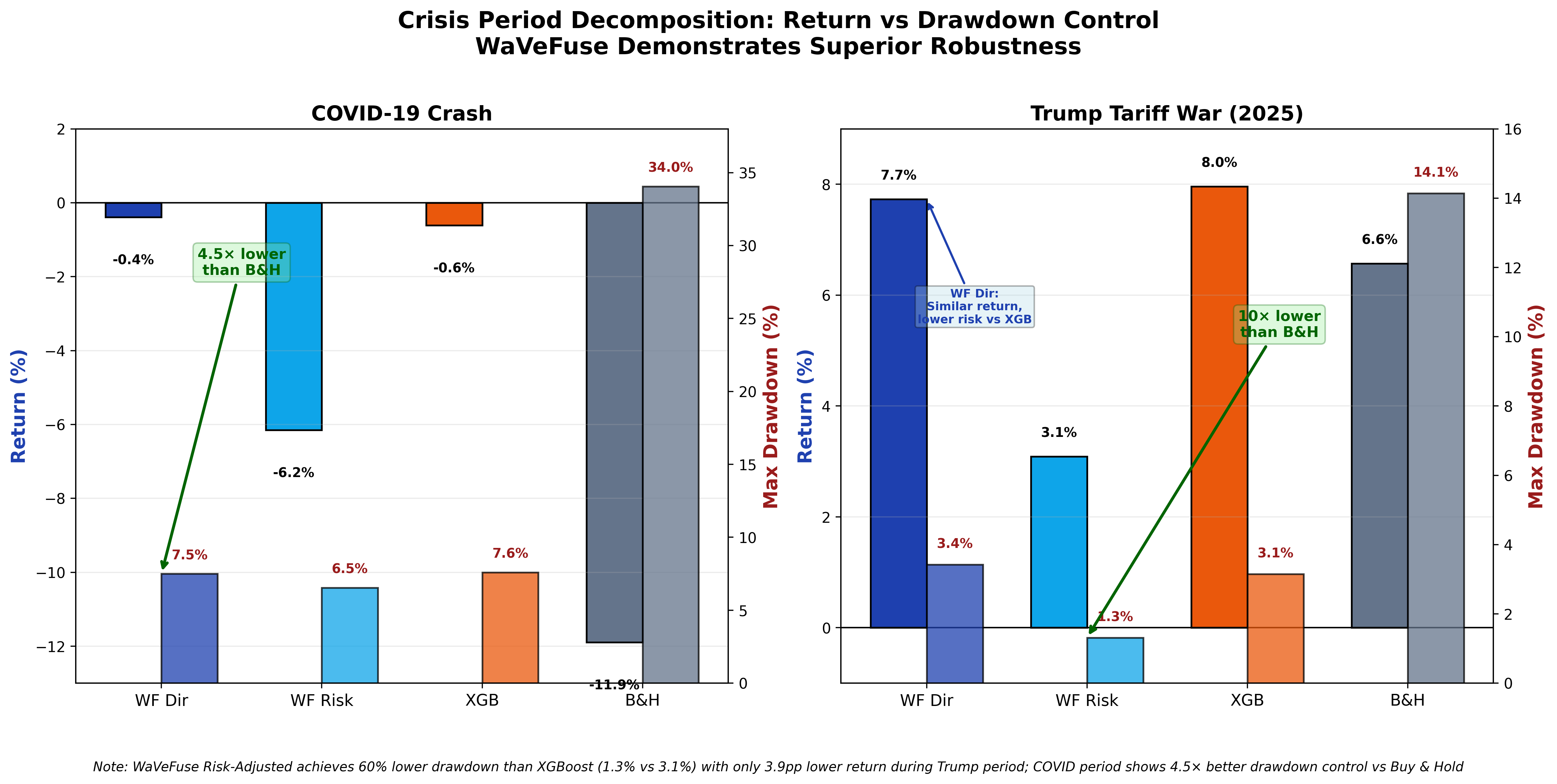}
    \caption{Return-Risk Decomposition: Stacked visualization of absolute return generation versus drawdown control demonstrates WaVeFuse's asymmetric advantage. (Left) COVID-19 crash: $\text{WaVeFuse}_{\text{directional}}$ achieves 4.5$\times$ lower drawdown than Buy \& Hold (7.5\% vs 34.0\%) with near-neutral returns (-0.4\%). (Right) Trump tariff war (2025):$\text{WaVeFuse}_{\text{risk-adjusted}}$ achieves 10$\times$ lower drawdown than Buy \& Hold (1.3\% vs 14.1\%) while maintaining positive returns (3.1\%). Return bars (blue) and maximum drawdown bars (red) are color-coded by strategy, with numerical labels indicating exact percentages}
    \label{fig:return-risk-decomposition}
\end{figure}

\subsubsection{Statistical Validation of Regime Robustness}
To validate that these performance differentials exceed random variation, we compute the coefficient of variation (CV) in maximum drawdown across crisis regimes for each strategy. Using the absolute values of maximum drawdown (7.50\% and 3.42\% for $\text{WaVeFuse}_{\text{directional}}$), the CV is calculated as the standard deviation divided by the mean: $\text{CV} = \sigma / \mu = 2.92 / 5.46 = 0.53$.\footnote{All CV calculations use absolute drawdown percentages: $\text{WaVeFuse}_{\text{directional}}$: [7.50, 3.42], CV=0.53; XGBoost: [7.60, 3.15], CV=0.59; Buy \& Hold: [34.05, 14.14], CV=0.55. The risk-adjusted variant $\text{WaVeFuse}_{\text{risk-adjusted}}$ achieves CV=0.67 ([6.54, 1.31]), reflecting its intentional asymmetry toward extreme downside protection in the Trump period.} This is substantially lower than XGBoost's CV of 0.59 (drawdowns: 7.60\%, 3.15\%), confirming WaVeFuse's superior stability in tail-risk exposure across structurally different crisis regimes. These results, together with the cross‑market evaluation in Section~\ref{sec:baseline_comparison}, establish that the VAF mechanism provides genuine regime adaptation rather than overfitting to historical volatility patterns. By dynamically reweighting the CNN‑BiLSTM temporal branch against the Transformer‑CWWT spectral branch, WaVeFuse achieves robust performance across both the four‑index 2023 test period (Section~\ref{sec:baseline_comparison}) and the extreme liquidity (COVID) and policy shocks (tariff war) analysed here on KOSPI. Extending this crisis‑period analysis to DAX, NYSE Composite, and Russell~2000 remains an important direction for future work. 

\subsection{Limitations} \label{sec:limitations}
We have identified the following limitations of the WaVeFuse which we aim to address in future work. 
\begin{enumerate}
    \item \textbf{Unimodal Feature Space:} WaVeFuse operates exclusively on OHLCV data and seven fixed TIs, excluding macroeconomic announcements, earnings revisions, central bank communications, and text-based sentiment signals. This design ensures cross-market portability and $\mathcal{O}(1)$ inference, but structurally limits the model's ability to anticipate price dislocations driven by scheduled news events or unanticipated policy shocks. The $R^{2}$ ceiling of 0.81--0.96 (Table~\ref{tab:stat_tests}) is consistent with the residual variance attributable to these exogenous, technically unobservable signals.
    \item \textbf{Structural Breaks and Crisis Generalization:} The fixed 5-year training window assumes piecewise stationarity, conditioning the model on recent market dynamics while suppressing influence from obsolete regimes. However, this assumption is violated during sustained crises that exhaust the window entirely, as evidenced by elevated residuals in Q4 2023 Figure~\ref{fig:residuals_over_time}). The stress analysis in Section~\ref{sec:robustness} demonstrates downside protection during COVID-19 and the 2025 Trump tariff shock. For brevity and computational feasibility, it is restricted to KOSPI as explained in that section. Whether these resilience properties generalize consistently across all four indices and to other structural breaks, such as hyperinflation episodes, sovereign debt crises, or cross-market contagion, has not been separately validated. This remains an important direction for future work.
    \item \textbf{Fixed Decomposition Depth and Scale Design:} The DWT decomposition level $J{=}2$ is held constant across all indices and folds, following the literature-based justification in \S3.2. Similarly, CWT scales are uniformly spaced integers $s\in\{1,\ldots,32\}$, providing denser coverage at fine scales and sparser coverage at coarse scales. A log-uniform (geometric) spacing would distribute frequency resolution more evenly across the scale range. These fixed choices bound the model's multi-resolution adaptability under heterogeneous market microstructures without adaptive recalibration.
    \item \textbf{Model Interpretability:} The VAF attention weights $[\alpha_{\text{temp}}, \alpha_{\text{spec}}]$ provide branch-level interpretability, identifying spectral versus temporal dominance. However, they do not attribute predictions to individual TIs, wavelet scales, or specific price events.
    WaVeFuse identifies statistical associations between scale-space patterns and future returns. It does not establish causal economic mechanisms linking indicator dynamics to price formation, which limits its utility for regulatory or fiduciary explanations.
    \item \textbf{Trading Strategy Execution Assumptions:} Backtesting assumes perfect execution at daily closing prices with a fixed 10-basis-point transaction cost, abstracting from bid-ask spread variation, market impact, and partial-fill risk. The reported Sharpe ratios (e.g., directional average: $3.69$, NYSE: $5.24$; Table~\ref{tab:detailed_performance}) are computed over a single $365$-day test period. At this short evaluation horizon, Sharpe ratio estimators carry wide confidence intervals and are susceptible to sample-specific statistical noise. Consequently, these metrics should not be extrapolated to multi-year live deployment without caveat. 
    The strategy has not been tested across multiple non-overlapping out-of-sample years, which would be required to establish long-run statistical reliability. The reported Sharpe ratios are therefore best interpreted as evidence that WaVeFuse's directional accuracy produces positive expected value per trade under the specific 2023 market conditions, not as estimates of asymptotic risk-adjusted performance. Independent replication across non-overlapping evaluation windows remains an important direction for future validation.
    Furthermore, live-deployment figures would likely be further attenuated by realistic slippage and execution latency. The risk-adjusted filter coefficient ($0.5\times\sigma^{(20)}_t$) was validated only on the WFV folds and not on a separate hold-out regime. All strategies assess single-asset exposure. No portfolio-level position sizing or cross-asset diversification is modeled.
    \item \textbf{Geographic and Temporal Scope:} Evaluation is restricted to four developed-market equity indices over a single 14-year window (2010--2023). Performance on emerging markets, small-cap single stocks, cryptocurrency markets, fixed-income instruments, or intraday resolutions remains unvalidated. The current architecture targets one-step-ahead closing price prediction. Multi-horizon forecasting would require reformulating the spectral snapshot as a causal sequence, a non-trivial architectural change outside the present scope.
\end{enumerate}
\vspace{0.5em}
\noindent These limitations delineate the boundary conditions of WaVeFuse's demonstrated advantages and motivate future extensions toward adaptive decomposition depth, multimodal data integration, and multi-horizon spectral 
sequence modeling.

\section{Conclusion}\label{sec:conclusion}
We introduced WaVeFuse, a hybrid architecture that addresses three fundamental limitations of existing forecasting models: the propagation of noise into TIs, the absence of channel-wise multi-scale decomposition for these indicators, and the reliance on static fusion mechanisms. The model applies Sym-4 wavelet denoising to raw OHLCV sequences, computes seven low-lag indicators from the denoised series, encodes them via CWT at 32 uniformly spaced integer scales, and fuses the resulting spectral representations with temporal features through a learnable vertical attention mechanism that performs regime-adaptive convex weighting. Under Gaussian branch error assumptions, this VAF approximates the minimum-variance estimator by dynamically weighting branches according to their conditional precision, with the attention mechanism assigning higher weight to the spectral branch during volatility shocks and to the temporal branch during persistent trend phases.

Evaluated across twelve dataset-period configurations spanning developed equity indices, WaVeFuse consistently outperformed seven recent baselines with mean absolute error reductions ranging from 8.9 percent to 20.2 percent. The magnitude of improvement scaled with architectural distance: the largest gains occurred against models lacking multi-scale indicator processing, while smaller gains against near-saturated baselines reflected residual variance dominated by unpredictable idiosyncratic shocks. directional accuracy exceeded 70 percent on all test periods, and under a realistic backtesting framework with transaction costs, the directional strategy achieved an average Sharpe ratio of 3.69 compared to 0.77 for the baseline and 0.33 for buy and hold. During the COVID-19 crash, the model limited losses to 0.39 percent versus 11.89 percent for passive investment, validating the adaptive properties of the vertical attention mechanism under structural breaks absent from training data.

Ablation studies help isolate the contribution of each architectural component. Sym-4 soft thresholding, which has four vanishing moments, preserves the trend structure while effectively suppressing microstructure noise. This is the single largest performance driver, accounting for an 18.8\% mean MAE degradation when removed. The CWWT encodes instantaneous energy distributions across 32 scales per indicator, providing genuinely new spectral information that raw indicator values cannot express (11.6\% MAE contribution). VAF performs regime-adaptive precision weighting between branches, concentrating its 5.7\% MAE advantage at tail-event time steps (approximately 15–20\% of trading days) where fixed-weight fusion accumulates the largest errors.

\printcredits
% \newpage
%% Loading bibliography style file
%\bibliographystyle{model1-num-names}
\bibliographystyle{cas-model2-names}

% Loading bibliography database
\bibliography{references}

\end{document}